\newcommand*\citep{\cite}
\newtheorem{theorem}{Theorem}
\newtheorem{proposition}{Proposition}
\newtheorem{corollary}{Corollary}
\newtheorem{assumption}{Assumption}
\newcommand{\Naturals}{\mathbb{N}}
\newcommand{\Reals}{\mathbb{R}}
\newcommand{\RR}{\mathbb{R}}
\newcommand{\Expected}{\mathbb{E}}
\newcommand{\defeq}{\mathrel{\overset{\makebox[0pt]{\mbox{\normalfont\tiny\sffamily def}}}{=}}}
\newcommand{\trans}{\top}
\newcommand{\inv}{{\raisebox{.2ex}{$\scriptscriptstyle-1$}}}
\newcommand{\partialderivative}[2]{\frac{\partial #1}{\partial #2}}
\newcommand{\pd}[2]{\frac{\partial #1}{\partial #2}}
\newcommand{\twovec}[2]{{\small \Big[\!\! \begin{array}{c}
#1\\
#2
\end{array}
\!\!\Big] }}
\newcommand{\Ppi}{\mathbf{P}^\pi}
\newcommand{\Cpij}[1]{\mathbf{C}^{(#1)}}
\newcommand{\Ppigammaj}[1]{\mathbf{P}^{(#1)}}
\newcommand{\pij}[1]{\pi^{(j)}}
\newcommand{\gammaj}[1]{\gamma^{(#1)}}
\newcommand{\cj}[1]{c^{(#1)}}
\newcommand{\vit}[2]{\mathbf{V}^{(#1)}_{#2}}
\newcommand{\vi}[1]{\mathbf{V}^{(#1)}}
\newcommand{\vt}[1]{\mathbf{V}_{#1}}
\newcommand{\vinone}{\mathbf{V}}
\newcommand{\Bn}{\mathbf{B}}
\newcommand{\vifunc}[1]{V^{(#1)}}
\newcommand{\viweights}[1]{V^{(#1)}_\weights}
\newcommand{\evec}{\mathbf{e}}
\newcommand{\gvec}{\mathbf{g}}
\newcommand{\hvec}{\mathbf{h}}
\newcommand{\svec}{\mathbf{s}}
\newcommand{\vvec}{\mathbf{v}}
\newcommand{\wvec}{\mathbf{w}}
\newcommand{\xvec}{\mathbf{x}}
\newcommand{\zvec}{\mathbf{z}}
\newcommand{\zerovec}{\mathbf{0}}
\newcommand{\phivec}{\boldsymbol{\phi}}
\newcommand{\psivec}{\boldsymbol{\psi}}
\newcommand{\phimat}{\Phi}
\newcommand{\Actions}{\mathcal{A}}
\newcommand{\action}{a}
\newcommand{\States}{\mathcal{S}}
\newcommand{\cumulant}{c}
\newcommand{\state}{z}
\newcommand{\Pfcn}{\mathrm{P}}
\newcommand{\nsamples}{m}
\newcommand{\dw}{\mathbf{d}}
\newcommand{\dwdiag}{\mathbf{D}}
\newcommand{\tpolicy}{\pi}
\newcommand{\obs}{\mathbf{o}}
\newcommand{\Observations}{\mathcal{O}}
\newcommand{\Hists}{\mathcal{H}}
\newcommand{\Hist}{\mathcal{H}}
\newcommand{\weights}{{\boldsymbol{\theta}}}
\newcommand{\secweights}{\wvec}
\newcommand{\weightspace}{\Theta}
\newcommand{\xsize}{d}
\newcommand{\obssize}{m}
\newcommand{\valuesize}{n}
\newcommand{\numgvfs}{n}
\newcommand{\statesize}{\numgvfs}
\newcommand{\featuresize}{(\xsize+\valuesize)}
\newcommand{\Roperator}{\mathcal{R}}
\newcommand{\tderror}{\delta}
\newcommand{\actdot}{\dot{\sigma}}
\newcommand{\actdotdot}{\ddot{\sigma}}
\newcommand{\cfunc}{c}
\newcommand{\yhat}{\hat{y}}
\DeclareMathOperator*{\diag}{diag}
\newcommand{\valuedr}{\boldsymbol{\xi}}
\newcommand{\valuedtheta}{\mathbf{u}}
\newcommand{\RopValueVect}{({\mathcal{R}_t})_{w,V}}
\newcommand{\krondelta}{\delta^\kappa}
\newcommand{\expect}{\operatorname{\mathbb{E}}\expectarg}
\DeclarePairedDelimiterX{\expectarg}[1]{[}{]}{%
\ifnum\currentgrouptype=16 \else\begingroup\fi
\activatebar#1
\ifnum\currentgrouptype=16 \else\endgroup\fi
}
\newcommand{\innermid}{\nonscript\;\delimsize\vert\nonscript\;}
\newcommand{\activatebar}{%
\begingroup\lccode`\~=`\|
\lowercase{\endgroup\let~}\innermid
\mathcode`|=\string"8000
}
\title{General Value Function Networks}
\author{\name Matthew Schlegel \email mkschleg@ualberta.ca \\
	\name Andrew Jacobsen \email ajjacobs@ualberta.ca  \\
	\name Zaheer Abbas \email mzaheer@ualberta.ca \\
	\name Andrew Patterson \email ap3@ualberta.ca  \\
        \name Adam White \email amw@ualberta.ca \\
      	\name Martha White \email whitem@ualberta.ca \\
        \addr Department of Computing Science and the Alberta Machine Intelligence Institute (Amii)\\
        University of Alberta, Canada\\
}
\begin{document}

\maketitle

\begin{abstract}
State construction is important for learning in partially observable environments. A general purpose strategy for state construction is to learn the state update using a Recurrent Neural Network (RNN), which updates the internal state using the current internal state and the most recent observation. This internal state provides a summary of the observed sequence, to facilitate accurate predictions and decision-making. At the same time, specifying and training RNNs is notoriously tricky, particularly as the common strategy to approximate gradients back in time, called truncated Back-prop Through Time (BPTT), can be sensitive to the truncation window. Further, domain-expertise---which can usually help constrain the function class and so improve trainability---can be difficult to incorporate into complex recurrent units used within RNNs. In this work, we explore how to use multi-step predictions to constrain the RNN and incorporate prior knowledge. In particular, we revisit the idea of using predictions to construct state and ask: does constraining (parts of) the state to consist of predictions about the future improve RNN trainability? We formulate a novel RNN architecture, called a General Value Function Network (GVFN), where each internal state component corresponds to a prediction about the future represented as a value function. We first provide an objective for optimizing GVFNs, and derive several algorithms to optimize this objective. We then show that GVFNs are more robust to the truncation level, in many cases only requiring one-step gradient updates.
\end{abstract}

\section{Introduction}

Most domains of interest are partially observable, where an agent only observes a limited part of the state. In such a setting, if the agent uses only the immediate observations, then it has insufficient information to make accurate predictions or decisions. A natural approach to overcome partial observability is for the agent to maintain a history of its interaction with the world. For example, consider an agent in a large and empty room with low-powered sensors that reach only a few meters. In the middle of the room, with just the immediate sensor readings, the agent cannot know how far it is from a wall. Once the agent reaches a wall, though, it can determine its distance from the wall in the future by remembering this interaction. This simple strategy, however, can be problematic if a long history length is needed \citep{mccalum1996learning}.

State construction enables the agent to overcome partial observability, with a more compact representation than an explicit history. Because most environments and datasets are partially observable---in time series prediction, in modeling dynamical systems and in reinforcement learning---there is a large literature on state construction. These strategies can be separated into Objective-state and Subjective-state approaches.

Objective-state approaches specify a true latent space, and use observations to identify this latent state. An objective representation is one that is defined in human-terms, external to the agent's data-stream of interaction. They typically require an expert to provide feature generators or models of the agent's motion and sensor apparatus. Many approaches are designed for a discrete set of latent states, including HMMs \citep{baum1966statistical} and POMDPs \citep{kaelbling1998planning}.
A classical example is Simultaneous Localization and Mapping, where the agent attempts to extract its position and orientation as a part of the state \citep{durrantwhyte2006simultaneous}.
These methods are particularly useful in applications where the dynamics are well-understood or provided, and so accurate transitions can be used in the explicit models. When models need to be estimated or the latent space is unknown, however, these methods either cannot be applied or are prone to misspecification.

The goal of subjective-state approaches, on the other hand, is to construct an internal state only from a stream of experience. This contrasts objective-state approaches in two key ways. First, the agent is not provided with a true latent space to identify. Second, the agent need not identify a true latent state, even if there is one. Rather, it only needs to identify an internal state that is sufficient for making predictions about target variables of interest. Such a state will likely not correspond to objective quantities like meters and angles, but could be much simpler than the true latent state and can be readily learned from the data stream. Examples of subjective-state approaches to state construction include Recurrent Neural Networks (RNNs) \citep{hopfield1982neural,lin1993reinforcement}, Predictive State Representations (PSRs) \citep{littman2001predictive} and TD Networks \citep{sutton2004temporal}.

RNNs have emerged as one of the most popular approaches for online state construction, due to their generality and the ability to leverage advances in optimizing neural networks. An RNN provides a recurrent state-update function, where the state is updated as a function of the (learned) state on the previous step and the current observations. These recurrent connections can be unrolled back in time, making it possible for the current RNN state to be dependent on observations far back in time. There have been several specialized activation units crafted to improve learning long-term dependencies, including long short-term memory units (LSTMs) \citep{hochreiter1997long} and gated recurrent units (GRUs) \citep{cho2014properties}. PSRs and TD Networks are not as widely used, because they make use of complex training algorithms that do not work well in practice (see \citeR{mccracken2005online,boots2011closing} and \citeR{vigorito2009temporal,silver2012gradient} respectively). In fact, recent work has investigated facilitating use of these models by combining them with RNNs \citep{downey2017predictive,choromanski2018initialization,venkatraman2017predictive}. Other subjective state approaches based on filtering can be complicated to extend to nonlinear dynamics, such as system identification approaches \citep{ljung2010perspectives} or Predictive Linear Gaussian models \citep{rudary2005predictive,wingate2006mixtures}.

One issue with RNNs, however, is that training can be unstable and expensive. There are two well-known approaches to training RNNs. The first, Real Time Recurrent Learning (RTRL) \citep{williams1989alearning} relies on a recursive form to estimate gradients. This gradient computation is exact in the offline setting---when RNN parameters are fixed---but only an approximation when computing gradients online. RTRL is prohibitively expensive, requiring computation that is quartic in the hidden dimension size $\statesize$. Low-rank approximations have been developed \citep{tallec2018unbiased,mujika2018approximating,benzing2019optimal} to improve computational efficiency, but these approaches to training RNNs remain less popular than the simpler strategy of Back propagation through time (BPTT).

BPTT explicitly computes gradients of the parameters, by using the chain rule back in time, essentially unrolling the recursive RNN computation. This approach requires maintaining the entire trajectory, which is infeasible for many online learning systems we consider here. A truncated form of BPTT (p-BPTT) is often used to reduce the complexity of training, where complexity grows linearly with p: $O(p \statesize^2)$.
Unfortunately, training can be highly sensitive to the truncation parameters \citep{pascanu2013onthe}, particularly if the dependencies back-in-time are longer than the chosen $p$---as we reaffirm in our experiments.

One potential cause of this instability is precisely the generality of RNNs. These systems require expertise in selecting architectures and tuning hyperparameters \citep{pascanu2013onthe,sutskever2013training}. This design space can already be difficult to navigate with standard feed-forward neural networks, and is exacerbated by the recurrence that makes the learning dynamics more unstable. Further, it can be hard to leverage domain expertise to constrain the space of RNNs, and so improve trainability. Specialized, complex architectures have been designed for speech recognition \citep{saon2017english} and NLP \citep{Peters:2018}; redesigning such systems for new problems is an onerous task. Many general purpose architectural restrictions have been proposed, such as GRUs and skip connections (see \citeR{greff2017lstm} and \citeR{trinh2018learning} for thorough overviews). These methods all provide tools to design, and tune, better architectures, but still do not provide a simple mechanism for a non-expert in deep learning to inject prior knowledge.

An alternative direction, that requires more domain expertise than RNN expertise, is to use predictions as auxiliary losses. Auxiliary unsupervised losses have been used in NLP to improve trainability \citep{trinh2018learning}. Less directly, auxiliary losses were used in reinforcement learning \citep{jaderberg2016reinforcement} and for modeling dynamical systems \citep{venkatraman2017predictive}, to improve the quality of the representation; this is a slightly different but nonetheless related goal to trainability. The use of predictions for auxiliary losses is an elegant way to constrain the RNN, because the system designers are likely to have some understanding of the relevant system components to predict. For the larger goals of AI, augmenting the RNN with additional predictions is promising because one could imagine the agent discovering these predictions autonomously---predictions by design are grounded in the data stream and learnable without human supervision. Nonetheless, the use of predictions as auxiliary tasks provides a more indirect (second-order) mechanism to influence the state variables. In this work, we ask: is there utility in directly constraining states to be predictions?

To answer this question, we need a practical approach for learning RNNs, where the internal state corresponds to predictions. We propose a new RNN architecture, where we constrain the hidden state to be multi-step predictions, using an explicit loss function on the hidden state.
In particular, we use general policy-contingent, multi-step predictions---called General Value Functions (GVFs) \citep{sutton2011horde}---generalizing the types of predictions considered in related predictive representation architectures \citep{rafols2005using,silver2012gradient,sun2016learning,downey2017predictive}. These GVFs have been shown to represent a wide array of multi-step predictions
\citep{modayil2014multi}. In this paper, we develop the objective and algorithm(s) to train these GVF networks (GVFNs).

We then demonstrate through a series of experiments that GVFNs can effectively represent the state and are much more robust to train, allowing even simple gradient updates with no gradients needed
back-in-time. We first investigate accuracy on two time series datasets, and find that our approach is
competitive with a baseline RNN and more robust to BPTT truncation length. We then investigate GVFNs more deeply in several synthetic problems, to determine 1) if robustness
to truncation remains for a domain with long-term dependencies and 2) the impact of the prediction specification---or misspecification---on GVFN performance. We find that GVFNs have consistent robustness properties across problems, but that, unsurprisingly, the choice of predictions do matter, both for improving learning as well as final accuracy. Our experiments provide evidence that constraining states to be predictions can be effective, and raise the importance of better understanding what these predictions should be.

Our work provides additional support for the {\em predictive representation hypothesis}, that state-components restricted to be predictions about the future result in good generalization \citep{rafols2005using}. Constraining the state to be predictions could both regularize learning---by reducing the hypothesis space for state construction---and prevent the constructed state from overfitting to the observed data and target predictions. To date, there has only been limited investigation into and evidence for this hypothesis.
\citeA{rafols2005using} showed that, for a discrete state setting, learning was more sample efficient with a predictive representation than a tabular state representation and a tabular history representation.
\citeA{schaul2013better} showed how a collection of optimal GVFs---learned offline---provide a better state representation for a reward maximizing task, than a collection of optimal PSR predictions.
\citeA{sun2016learning} showed that, for dynamical systems, constraining state to be predictions about the future significantly improved convergence rates over auto-regressive models and n4sid.
Our experiments show that RNNs with state composed of GVF predictions can have notable advantage over RNNs in building state with p-BPTT, even when the RNN is augmented with auxiliary tasks based on those same GVFs.

\section{Problem Formulation}

We consider a partially observable setting, where the observations are a function of an unknown, unobserved underlying state.
The dynamics are specified by transition probabilities $\Pfcn = \States \times \Actions \times \States \rightarrow [0,\infty)$ with state space $\States$ and action-space $\Actions$. On each time step the agent receives an observation vector $\obs_t \in \Observations \subset \Reals^\obssize$, as a function $\obs_t = \obs(\state_t)$ of the underlying state $\state_t \in \States$. The agent only observes $\obs_t$, not $\state_t$, and then takes an action $\action_t$, producing a sequence of observations and actions: $\obs_{0}, a_{0}, \obs_{1}, a_1, \ldots$.

The goal for the agent under partial observability is to identify a state representation $\svec_t \in \RR^\numgvfs$ which is a sufficient statistic (summary) of past interaction, for targets $y_t$. More precisely, such a \emph{sufficient state} ensures that $y_t$ given this state is independent of history $\hvec_t = \obs_0, a_{0}, \obs_1, a_1, \ldots, \obs_{t-1}, a_{t-1}, \obs_{t}$,
\begin{equation}
  p(y_{t} | \svec_t) = p(y_{t} | \svec_t, \hvec_t)
\end{equation}
or so that statistics about the target are independent of history, such as $\mathbb{E}[Y_{t} | \svec_t] = \mathbb{E}[Y_{t} | \svec_t, \hvec_t]$.
Such a state summarizes the history, removing the need to store the entire (potentially infinite) history.
Note here that this is a less stringent definition of sufficient state than used for PSRs \citep{littman2001predictive}, where the state is constructed for predictions about all future outcomes. We presume that the agent has a limited set of targets of interest, and needs to find a sufficient state for just those targets. For example, a potential set of targets is the observation vector on the next time step.

One strategy for learning such a state is with \emph{recurrent neural networks} (RNNs), which learn a state-update function. Imagine a setting where the agent has a sufficient state $\svec_t$ for this step. To obtain sufficient state for the next step, it simply needs to update $\svec_t$ with the new information in the given observation and action $\xvec_{t+1} = [a_t, \obs_{t+1}] \in \RR^{\xsize}$. The goal, therefore, is to learn a state-update function $f: \RR^{\statesize+\xsize} \rightarrow \RR^{\statesize}$ such that
\begin{equation}
\svec_{t+1} = f(\svec_t, \xvec_{t+1}) \label{eq_update}
\end{equation}
provides a sufficient state $\svec_{t+1}$.
The update function $f$ is parameterized by a weight vector $\weights \in \weightspace$ in some parameter space $\weightspace$.
An example of a simple RNN update function, for $\weights$ composed of stacked vectors $\weights^{(j)} \in \RR^{\statesize+\xsize}$ for each hidden state $j \in \{1, \ldots, \statesize\}$ is, for activation function $\sigma: \RR \rightarrow \RR$,
\begin{figure*}[h!]
\centering
\begin{minipage}{0.3\textwidth}
\small
\begin{equation*}
\svec_{t+1} = \left[\begin{array}{c}
\sigma\Big(\twovec{\svec_{t}}{\xvec_{t+1}}^\top \weights^{(1)} \Big)\\
\vdots\\
\sigma\Big(\twovec{\svec_{t}}{\xvec_{t+1}}^\top \weights^{(\statesize)} \Big)
\end{array} \right]
\end{equation*}
\normalsize
\end{minipage}
\hspace{-1.5cm}
\begin{minipage}{0.3\textwidth}
\hspace{2cm} depicted as
\end{minipage}
\begin{minipage}{0.4\textwidth}
    \includegraphics[width=\textwidth]{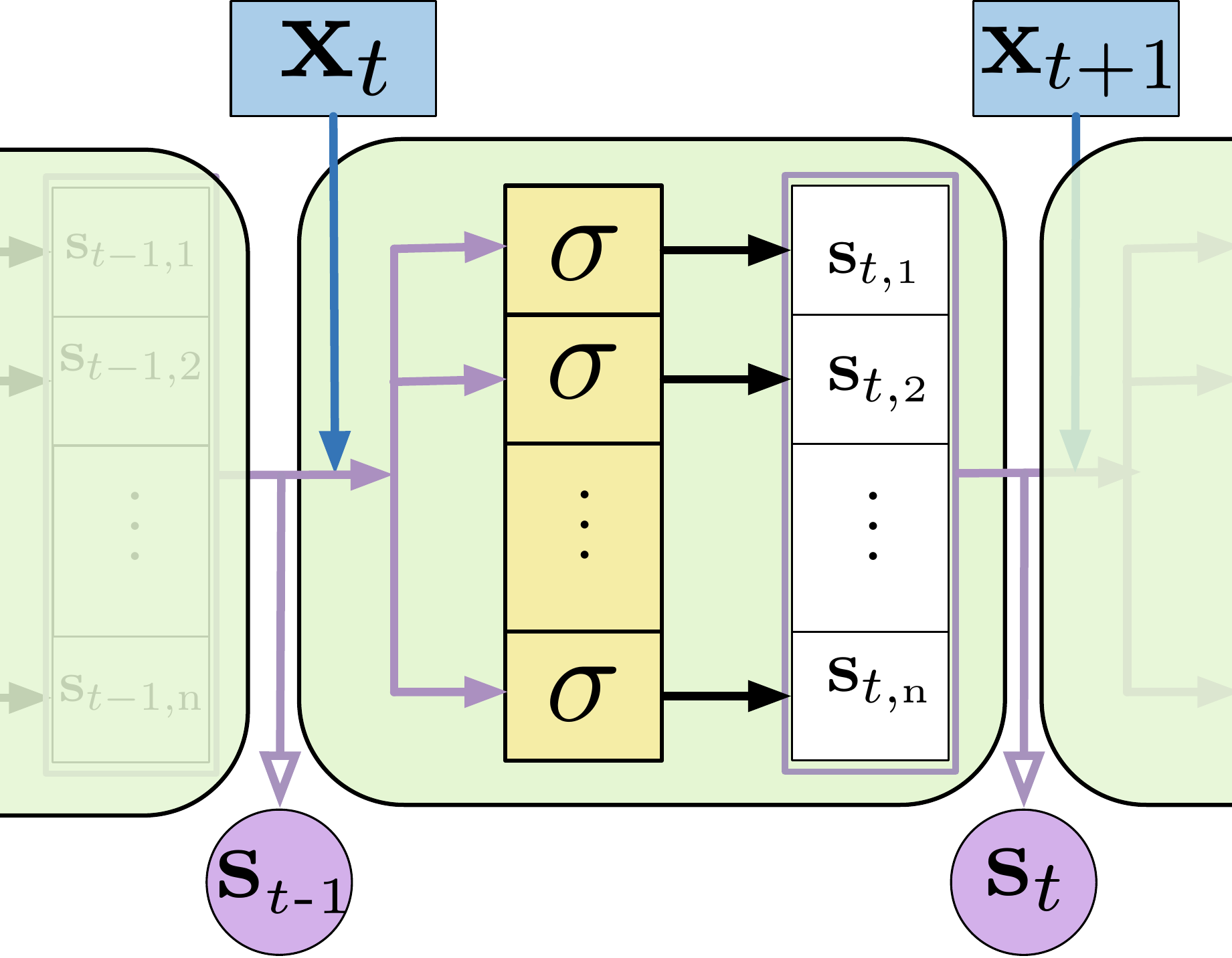}
\end{minipage}
\end{figure*}

\noindent In many cases, learning a sufficient state under function approximation may not be possible. Instead, this state is approximated so as to improve prediction accuracy of the target $y_{t}$.

The goal in this work is to develop an efficient algorithm to learn this state-update function. Most RNN algorithms learn this state-update by minimizing prediction error to desired targets $y_t \in \RR$ across time. For example, for $\yhat_t = \svec_t^\top \wvec$ for  weights $\wvec \in \RR^{\statesize}$, the loss for $\weights$ on time step $t$ could be
\begin{align*}
\ell(\yhat_t, y_t) &\defeq (\yhat_t - y_t)^2 =  (\svec_t^\top \wvec - y_t)^2 \\
&= (f_\weights(\svec_{t-1}, \xvec_t)^\top \wvec - y_t)^2
= \Big(f_\weights\big( f_\weights(\svec_{t-2}, \xvec_{t-1}), \xvec_{t}\big)^\top \wvec - y_t \Big)^2 = \ldots
\end{align*}
This objective, however, can be difficult to optimize. The weights $\weights$ can influence the state variables far back in time, with small changes for early states resulting in big changes to the state many steps later. This sensitivity to the weights can result in both vanishing and exploding gradient problems \citep{pascanu2013onthe}. Even worse, the problem is under-constrained, particularly if there is a scalar target. The loss may encourage the weights to change the immediate state $\svec_t$ quite a bit---just to reduce error for this single stochastic target---resulting in potentially destabilizing changes to the weights that influence states all the way back in time.

One strategy is to consider how to constrain the state variables, so as to avoid changes just due to the targets, and stochasticity in the targets. We pursue a strategy, inspired by predictive representations, where the state-update function is learned such that each hidden state is an accurate prediction about future outcomes, described in the next section.

\section{Constraining State to be Predictions}\label{sec_constraining}

Let us start in a simpler setting and explain how the hidden units could be trained to be n-horizon predictions about the future. Imagine you have a multi-dimensional time series of a power-plant, consisting of $d$ sensory observations with the first sensor corresponding to water temperature. Your goal is to make a hidden node in your RNN predict the water temperature in 10 steps, because you think this feature is useful to make other predictions about the future.

This can be done simply by adding the following loss: $(\svec_{t,1} - \xvec_{t+10, 1})^2$. The combined loss $L_t(\weights)$ on time step $t$ is
\begin{equation}
L_t(\weights) \defeq
\ell(\yhat_t, y_t) +  (\svec_{t,1} - \xvec_{t+10, 1})^2
\end{equation}
where both $\yhat_t$ and $\svec_t$ are implicitly functions of $\weights$.
This loss still encourages the RNN to find a hidden state $\svec_t$ that predicts $y_t$ well. There is likely a whole space of solutions that have similar accuracy for this prediction. The second loss constrains this search to pick a solution where the first state node is a prediction about an observation 10 steps into the future. This second term can be seen as a regularizer on the network, specifying a preference on the learned solution.
In general, more than one state node---even all of $\svec_t$---could be learned to be predictions about the future.

The difficulty in training such a state depends on the chosen targets. For example, long horizon targets---such as 100 steps rather than 10 steps into the future---can be high variance. Even if such a predictive feature could be useful, it may be difficult to learn accurately and could make the state-update less stable. Using n-horizon predictions also requires a delay in the update: the agent must wait 100 steps to see the target to update the state at time $t$.

We therefore propose to restrict ourselves to a class of prediction that have
been shown to be more robust to these issues
\citep{van2015learning,sutton2011horde,modayil2014multi}. This class of
predictions correspond to predictions of discounted cumulative sums of signals
into the future, called General Value Functions (GVFs). We have algorithms to
estimate these predictions online, without having to wait to see outcomes in the
future. This property of GVFs is called \emph{independence of span} \citep{van2015learning}, meaning learning can be achieved with computation and memory independent of the horizon. Such a property is doubly critical for predictions within an RNN, as it is more likely that we can actually learn these predictions sufficiently quickly to be usable as state.
Further, there is some evidence that this class of predictions is sufficient for a broad range of predictions about
the future
\citep{sutton2011horde,modayil2014multi,momennejad2018predicting,banino2018vector,white2015thesis,pezzulo2008coordinating}, and so the restriction to GVFs does not significantly limit representability. We therefore focus on developing an approach for this class of predictions within RNNs.

\section{GVF Networks} \label{GVFNs}

In this section, we introduce GVF Networks, an RNN architecture where hidden states are constrained to predict policy-contingent, multi-step outcomes about the future.
We first describe GVFs and the GVF Network (GVFN) architecture. In the following section, we develop the objective function and algorithms to learn GVFNs. There are several related predictive approaches, in particular TD Networks, that we discuss in Section \ref{sec_connections}, after introducing GVFNs.

We first need to extend the definition of GVFs \citep{sutton2011horde} to the partially observable setting, to use them within RNNs. The first step is to replace state with histories.
We define $\Hist$ to be the minimal set of histories, that enables the Markov property for the distribution over next observation
\begin{equation}
\!\Hist = \left\{ \hvec_t \!=\! (\obs_0, a_0, \ldots, \obs_{t-1}, a_{t-1}, \obs_t) \ | \ \substack{\text{(Markov property)} \Pr(\obs_{t+1} | \hvec_t, a_t ) = \Pr(\obs_{t+1} | \obs_{-1} a_{-1} \hvec_t a_t), \\ \text{ (Minimal history) }   \Pr(\obs_{t+1} | \hvec_t ) \neq \Pr(\obs_{t+1} | \obs_1, a_1, \ldots, a_{t-1}, \obs_t )} \right\}
\end{equation}
A GVF question is a tuple $(\tpolicy, \cumulant, \gamma)$ composed of a policy $\tpolicy: \Hist \times \Actions \rightarrow [0, \infty)$, cumulant
$\cumulant: \Hist \times \Actions \times \Hist \rightarrow \RR$ and continuation function\footnote{The original GVF definition assumed the continuation was only a function of $H_{t+1}$. This was later extended to transition-based continuation \citep{white2017unifying}, to better encompass episodic problems. Namely, it allows for different continuations based on the transition, such as if there is a sudden change from $\hvec_t$ to $\hvec_{t+1}$. We use this more general definition for this reason, and because the cumulant itself is already defined on the three tuple $(\hvec_t, a_t, \hvec_{t+1})$.} $\gamma: \Hist \times \Actions \times \Hist \rightarrow [0,1]$, also called the discount. On time step t, the agent is in $H_t$, takes actions $A_t$, transitions to $H_{t+1}$ and observes\footnote{Throughout this document, unbolded uppercase variables are random variables; lowercase variables are instances of that random variable; and bolded variables are vectors. When indexing into a vector on time step $t$, such as $\hvec_t$, we double subscript as $\hvec_{t,j}$ for the $j$th component of $\hvec_t$.} cumulant $C_{t+1}$ and continuation $\gamma_{t+1}$. The answer to a GVF question is defined as the value function, $V: \Hist \rightarrow \RR$, which gives the expected, cumulative discounted cumulant  from any history $\hvec_t \in \Hist$. The value function which can be defined recursively with a Bellman equation as
\begin{align}
  V(\hvec_t) &\defeq \expect*{ C_{t+1} + \gamma_{t+1} V(H_{t+1}) | H_t = \hvec_t, A_{t} \sim \pi(\cdot | \hvec_t)} \label{eq_bewh}\\
  &= \sum_{\action_t \in \Actions} \pi(\action_t | \hvec_t) \sum_{\hvec_{t+1} \in \Hists} \Pr(\hvec_{t+1} | \hvec_t, \action_t) \left[\cumulant(\hvec_t, a_t, \hvec_{t+1}) + \gamma(\hvec_t,a_t,\hvec_{t+1}) V(\hvec_{t+1}) \right] \nonumber
 .
\end{align}
The sums can be replaced with integrals if $\Actions$ or $\Observations$ are continuous sets. We assume that $\Hist$ is a finite set, for simplicity; the definitions and theory, however, can be extended to infinite and uncountable sets.

\begin{figure}
  \begin{center}
    \includegraphics[width=0.5\textwidth]{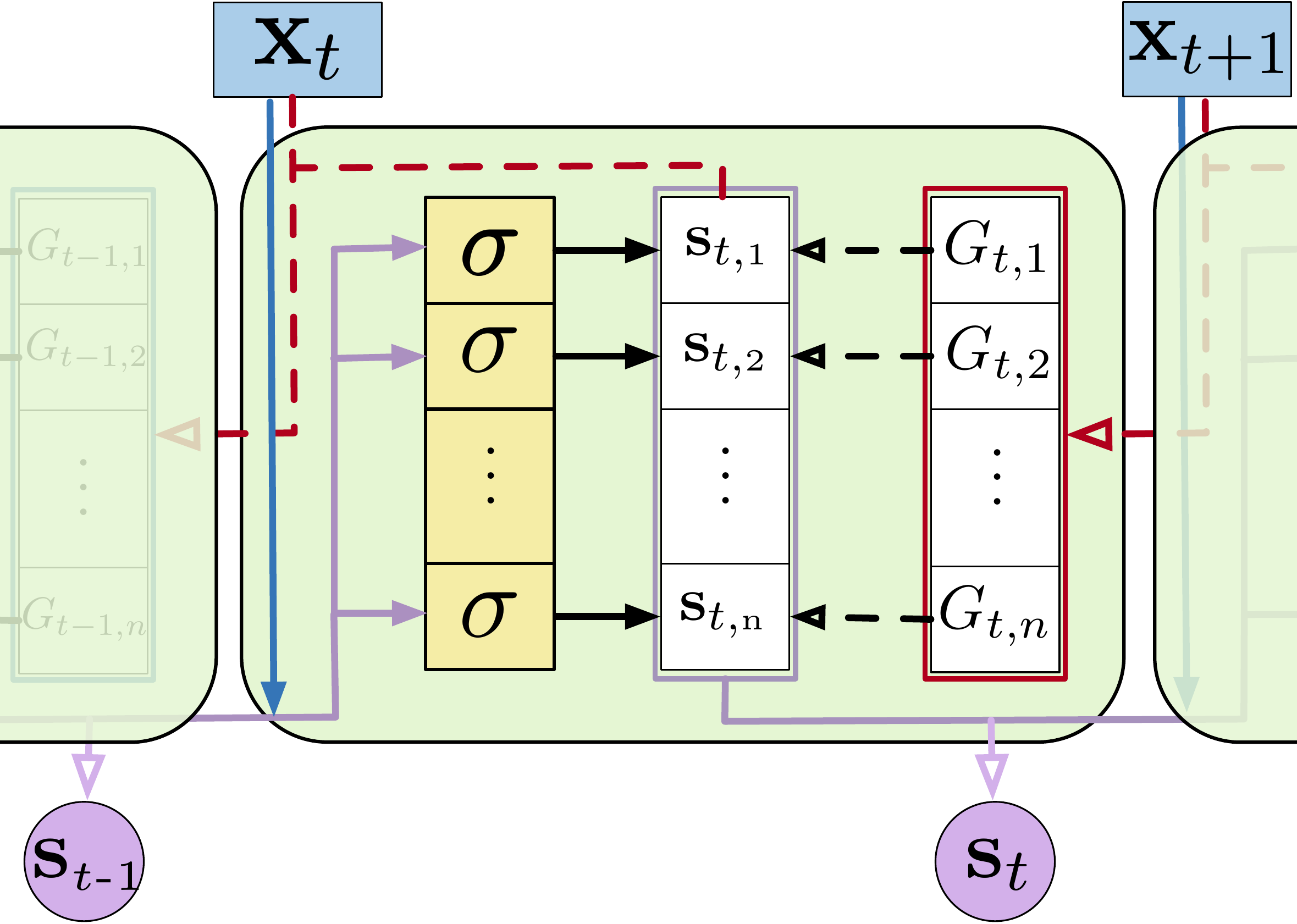}
  \end{center}
  \caption{GVF Networks (GVFNs), where each state component $\svec_{t,i}$ is updated towards the return $G_{t,i} \defeq C_{t+1}^{(i)} + \gamma_{t+1}^{(i)} \svec_{t+1,i}$ for the $i$th GVF. The solid forward arrows indicate how state is updated; in fact, the update is the same as a standard RNN. The difference is with the dotted lines, that indicate training. The dotted black arrows indicate the targets for the state components. The dotted red arrows indicate that the target $G_{t,i}$ are created using the observation and state on the next step.}\label{fig_gvfnsrnns}
\end{figure}

A GVFN is an RNN, and so is a state-update function $f$, but with the additional criteria that each element in $\svec_t$ corresponds to a prediction---to a GVF.
A GVFN is composed of $\numgvfs$ GVFs, with each hidden state component $\svec_{t,j}$ trained such that at time step $t$, $\svec_{t,j} \approx \vifunc{j}(\hvec_t)$ for the $j$th GVF and history $\hvec_t$. Each hidden state component, therefore, is a prediction about a multi-step policy-contingent question. The hidden state is updated recurrently as $\svec_t \defeq f_\weights(\svec_{t-1}, \xvec_t)$ for a parametrized function $f_\weights$, where $\xvec_t = [a_{t-1}, \obs_t]$ and $f_\weights$ is trained so that $\svec_j \approx \vifunc{j}(\hvec_t)$. This is summarized in Figure \ref{fig_gvfnsrnns}.

General value functions provide a rich language for encoding predictive knowledge. In their simplest form, GVFs with constant $\gamma$ correspond to multi-timescale predictions referred to as Nexting predictions \citep{modayil2014multi}. Allowing $\gamma$ to change as a function of state or history, GVF predictions can combine finite-horizon prediction with predictions that terminate when specific outcomes are observed \citep{modayil2014multi}.

\begin{figure}
  \centering
  \begin{subfigure}{0.43\textwidth}
    \includegraphics[width=0.8\textwidth]{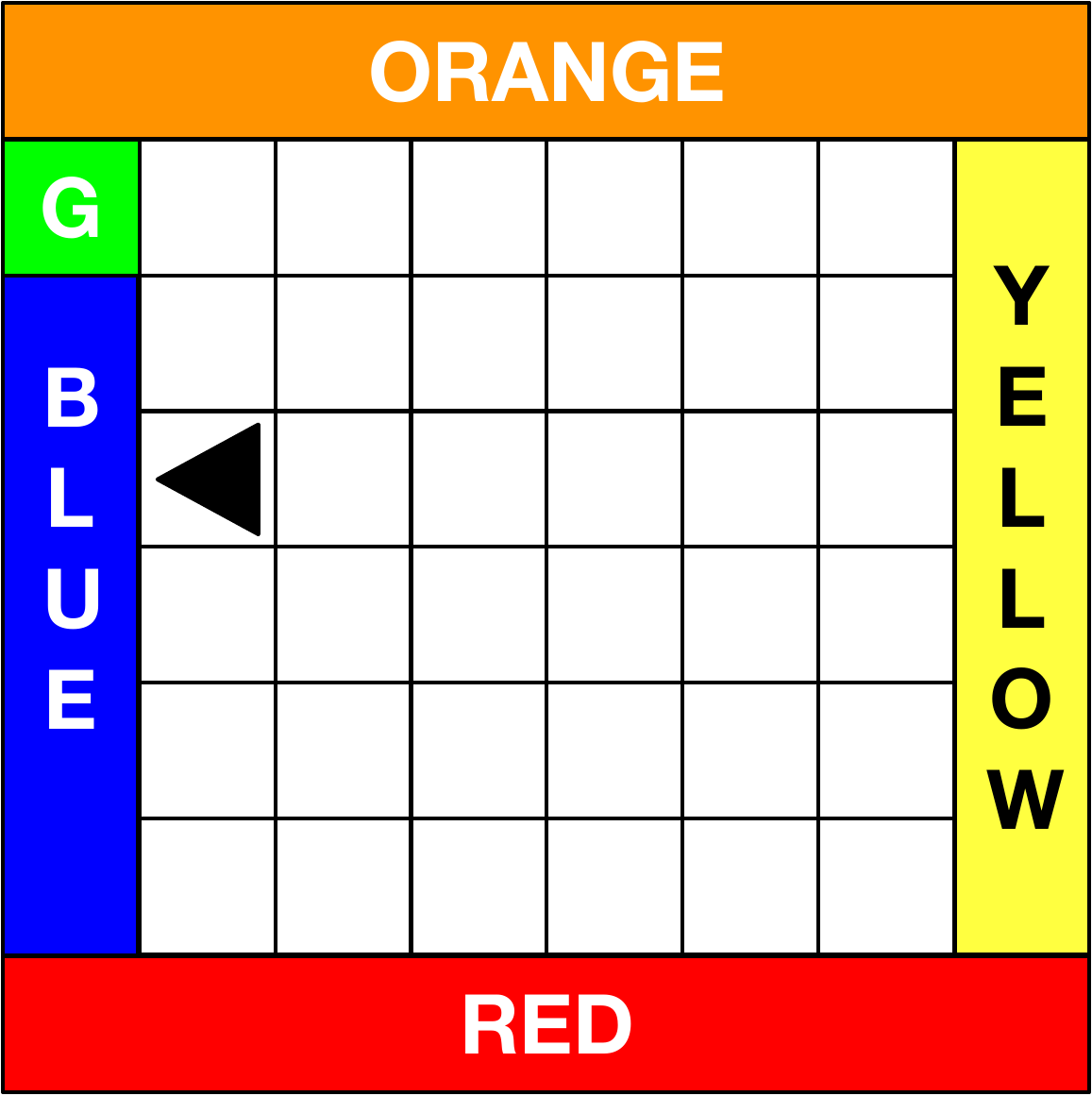}
  \end{subfigure}
    \caption{ The Compass World: A partially observable grid world with observations of the color directly in front of the agent. \textbf{Actions:} The agent can take the actions Move Forward (one cell), Turn Left, and Turn Right. \textbf{Observations:} The agent observes the color of the grid cell it is facing. This means the agent can only observe a color if it is at the wall and facing outwards. The agent depicted as an arrow would see Blue. In the middle of the world, the agent sees White.  \textbf{Goal:} The agent's goal is to make accurate predictions about which direction it is facing. } \label{fig:compass_world_env}
\end{figure}

To build some intuition, we provide some examples in Compass World. This environment is used in our experiments and depicted in Figure \ref{fig:compass_world_env}. Compass World is a grid world where the agent is only provided information about the color directly in front it. This world is partially observable, with all the tiles in the middle having a white observation, with the only distinguishing color information available to the agent at the walls. The actions taken by the agent are to move forward, turn left, or turn right.

In this environment, the agent might want to know if it is facing the red wall. This can be specified as a GVF question: ``If I go forward until I hit a wall, what is the probability I will see red?". The policy is to always go forward. If the current observation is `Red', then the cumulant is 1; otherwise it is zero. The continuation $\gamma$ is 1 everywhere, except when the agent hits a wall and see a color; then it becomes zero. The sampled return from a state is 1.0 if the agent is facing the Red wall, because going forward will result in summing many zero plus a 1 right before termination. If the agent is not facing the Red wall, the return is 0, because the agent terminates when hitting the wall but only sees cumulants that are zero for the entire trajectory. Because the outcome is deterministic, the probabilities are 1 or 0.

The agent could also ask about how frequently it will see Red, within a horizon of about 10 steps. We can obtain an approximation to this question by using a constant continuation of $\gamma = 0.9$. The intuition for this comes from thinking of $1-\gamma$ as a success probability for a geometric distribution: the probability of successfully terminating. The mean of this geometric distribution is $\tfrac{1}{1-\gamma}$---which in this case is $\tfrac{1}{1-0.9}= 10$---provides the expected number of steps until the first success. Recall that termination indicates that a return is cut-off, and so a cumulant is not included in the sum after termination. This probabilistic termination means that even if Red is seen after 10 steps, it will still be included in the return. However, it does indicate its contribution has been significantly decayed. This exponential prediction loses precision, and so the GVF only provides an approximation to this question.

The agent could also also ask if it will see Red, within a horizon of about 10 steps. In this case, the continuation would be $0.9$ until the agent observed Red, at which point it would become zero (indicating termination). The GVF answer corresponds to a discounted probability of observing Red, with a smaller number if Red is observed further in the future. If the agent always see Red in 1 step from $\hvec_t$, then it observes $C_{t+1} = $ 1 and $\gamma_{t+1} = 0$ and the value is precisely 1. If the agent sees Red in 2 steps from $\hvec_t$, then $C_{t+1} = 0, \gamma_{t+1} = 0.9, C_{t+2} = 1$ and $\gamma_{t+2} = 0$ resulting in a value of $0.9$. If the agent sees Red in 10 steps from $\hvec_t$, then the value is $0.9^9 \approx 0.4$. If just a few more steps into the future, say 15 steps, then the value would be $0.2$. The magnitudes start to get quite low, indicating that it is less likely to observe Red in this window.

Notice that though we define the cumulants and continuation functions on the underlying (unknown) state $\hvec_t$, this is a generalization of defining it on the observations. The observations are a function of state; the cumulants and continuations $\gamma$ that are defined on observations are therefore defined on $\hvec_t$. In the examples above, these functions were defined using just the observations. More generally, we consider them as part of a problem definition. This means they could be defined using short histories, or other separate summaries of the history. As we discuss in Section \ref{sec_objective}, we can also consider cumulants that are a function of our own predictions or constructed state.

A natural question is how these GVFs are chosen. This problem corresponds to the discovery problem for predictive representations. In this work, we first focus on the utility of this architecture, with simple heuristics or expert chosen GVFs. We briefly discuss simple ideas for discovery in Section \ref{sec_discovery}, but leave a more systematic investigation of the discovery problem to future work.

\section{A Case Study using GVFNs for Time Series Prediction}\label{sec:case_study}
\begin{figure}[t]
  \centering
  \includegraphics[width=0.95\textwidth]{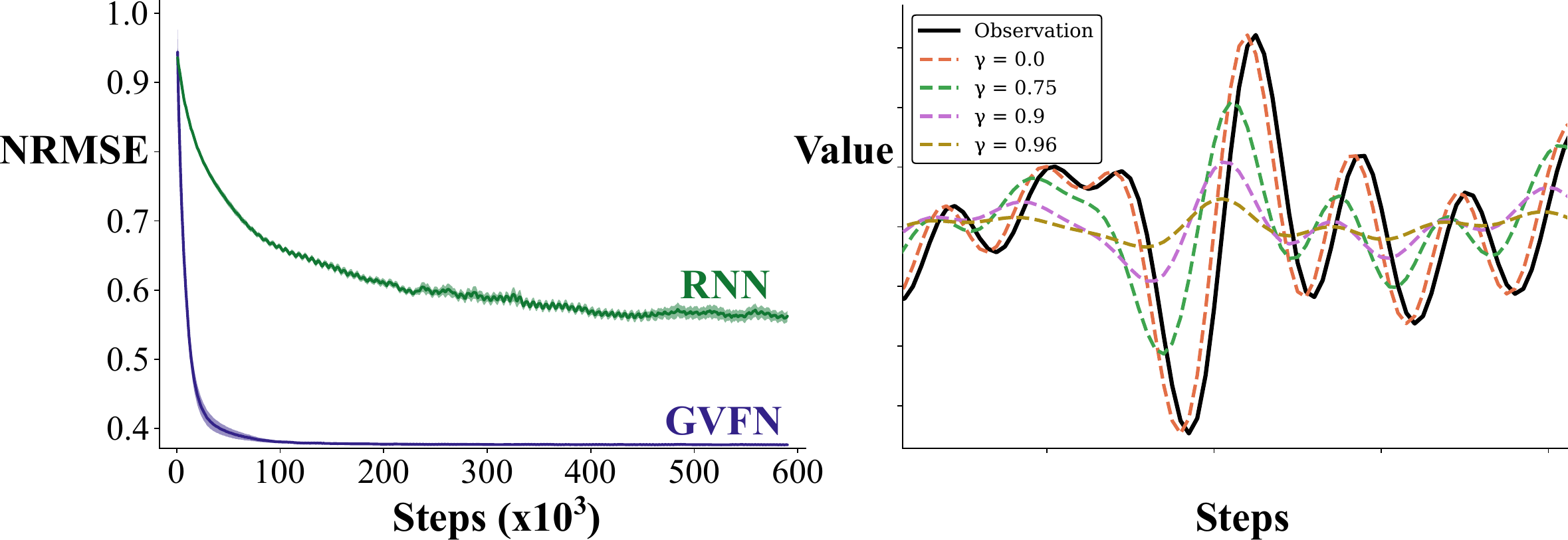}
  \caption{\textbf{(left)} Example learning curve for MSO with GVFNs and simple RNNs \textbf{(right)} The returns for different $\gamma$, corresponding to GVFs in the GVFN, for a small section of the MSO time series dataset. The dotted red line for $\gamma = 0$ looks overlayed with the time series plotted in black, but is actually the observation one step in the future.
  }\label{fig:mg_example}
\end{figure}

Before discussing the objective and training algorithms for GVFNs, we provide a simple demonstration of their use in a synthetic single-variable time series dataset to build intuition. GVFNs can be used for time series prediction by simply assuming that a fixed (unknown) policy generates the data. The GVFs within the network are assumed to have this same fixed unknown policy in common, but differ in the pair of continuation and cumulant functions. For a multi-variate time series, one GVF could have a cumulant corresponding to the first entry of the observation on the next time step, and another GVF could use the second entry. Even for a single-variate time series, we can define meaningfully different cumulants. For example, one GVF could correspond to the probability that the observation becomes larger than 1. The cumulant would be zero until this event occurs, at which point it would be 1.

 In Figure \ref{fig:mg_example} (left) we provide a preliminary result using a GVFN to forecast 12-steps into the future on the single-variate Multiple Super-imposed Oscillator (MSO) time series dataset.
 We discuss the full empirical set-up in Section \ref{sec:exp_forecasting}, and here simply provide some insights relevant to building intuition for how to use GVFNs.

The GVFN consists of a recurrent, constrained layer of 128 GVFS with $\gamma$s spaced linearly in $[0.1,0.97]$ to learn the state. To make predictions, we can additionally add feedforward layers from this recurrent layer; here we add a ReLu layer for additional nonlinearity in the prediction. For comparison, we also include a simple RNN, which similarily uses an additional ReLu layer after its recurrent layer. The prediction target is the observation 12 steps into the future. Both the RNN and GVFN have to wait 12 steps to see the accuracy of their prediction, delaying updates based on the target by 12 steps. The GVFN, however, can use the loss on the state at each step, and so more directly influence the value of states with the most recent observations. Both methods use p-BPTT, with truncation $p$. With a sufficiently high $p$, both perform well (see Section \ref{sec:exp_forecasting} for results with many $p$). We report the result here for $p = 1$, where the GVFN already obtains near-optimal performance.

It might be surprising that this simple GVFN, with GVFs only differing in continuation $\gamma$, can perform well. For time series data, however, such constant $\gamma$ predictions provide anticipatory information about observations in the future.
To see why, we plot the time series as well as returns for $\gamma\in\{0, 0.75,0.9, 0.96\}$ as dotted lines in Figure \ref{fig:mg_example} (right). These returns reflect the type of information that would be provided by a GVF prediction. At each time point $t$ on the x-axis, we can see that the smaller $\gamma$, like $\gamma = 0.75$ as dotted green, anticipate the observations in a nearby window. If the time series is starting to rise in the near future, then the dotted green starts to rise right now. Returns can thus provide useful predictive information about increases and decreases that are expected to soon appear in the time series.
Notice that the magnitude of the returns are approximately equal. For practical use, we want the magnitude of each GVF prediction to be similar, to avoid large differences in magnitude between state variables. With large $\gamma$, however, the return becomes large and so too does the value function. The standard fix to this is straightforward: each GVF uses a scaled cumulant of $(1-\gamma) o_{t+1}$.



Notice, though, that there is a trade-off between anticipating a cumulant farther into the future and the precision of predictions about the future. Returns with lower continuations predict trends closer to when they occur in the dataset and have higher resolution. Returns with higher continuations anticipate changes further in the future, at the cost of smoothing over the detailed changes in the dataset. By using both lower and higher continuations, we hope to obtain the benefits of both. We further discuss this simple heuristic---GVFs with the same cumulant and varying $\gamma$---as a general purpose heuristic in Section \ref{sec_discovery}.

\section{The Objective for GVFNs}\label{sec_objective}

In this section, we introduce the objective function for GVFNs, that constrains the learned state to be GVF predictions.
Each state component of a GVFN is a value function prediction, and so is approximating the fixed point to a Bellman equation with history in Equation \eqref{eq_bewh}. The extension is not as simple as using a standard Bellman operator, however, because the GVFs are in a network. In fact, the Bellman equations are coupled in two ways: through composition---where one GVF can be the cumulant for another GVF as seen in section \ref{section:experiments:poorlyspecified}---and through the parametric recurrent state representation. We first discuss the Bellman network operator in Section \ref{sec_operator}, which extends the typical Bellman operator to allow for composition. We then explain how the coupling that arises from the recurrent state representation can be handled using a projected operator, and provide the objective for GVFNs, called the Mean-Squared Projected Bellman Network Error (MSPBNE), in Section \ref{sec_obj}. Then we discuss several algorithms to optimize this objective in Section \ref{sec_algs}.

The GVFN objective we introduce can be added to the standard RNN objective, to provide an RNN where the learned states are both useful for prediction of the target and encouraged---or regularized---to be GVF predictions. In this work, we only train GVFNs with the GVFN objective, without including the loss to a target, to focus the investigation on the utility of the proposed objective and on predictive features.

\subsection{The Bellman Network Operator} \label{sec_operator}

To understand the Bellman network operator, it is useful to first revisit the Bellman operator for learning a single GVF.
We assume the set of histories $\Hist$ is finite.
Assume a tabular encoding for the values, $\vi{j} \in \RR^{|\Hists|}$, for a GVF question $(\pij{j}, \cj{j}, \gammaj{j})$. The Bellman equation in \ref{eq_bewh} can be written as a fixed point equation, with Bellman operator
\begin{equation}
\Bn^{(j)} \vi{j} \defeq \Cpij{j} + \Ppigammaj{j} \vi{j}
\end{equation}
where $\Cpij{j} \in \RR^{|\Hists|}$ is the vector of expected cumulant values under $\pij{j}$, with entries
\begin{equation}
\Cpij{j}(\hvec_t) \defeq \sum_{a_t \in \Actions} \pij{j}(a_t | \hvec_t) \sum_{\hvec_{t+1} \in \Hists} \Pr(\hvec_{t+1} | \hvec_t, a_t) \cj{j}(\hvec_t, a_t, \hvec_{t+1})
.
\end{equation}
and
 $\Ppigammaj{j} \in \RR^{|\Hists| \times |\Hists| }$ is the matrix of values satisfying
\begin{equation}
\Ppigammaj{j}(\hvec_t, \hvec_{t+1}) = \sum_{a_t \in \Actions} \pij{j}(a_t | \hvec_t) \Pr(\hvec_{t+1} | \hvec_t, a_t) \gammaj{j}(\hvec_t, a_t, \hvec_{t+1})
.
\end{equation}
If the operator $\Bn^{(j)}$ is a contraction, then iteratively applying this operator converges to a fixed point. More precisely, if for any $\vi{j}_1, \vi{j}_{2} \in \RR$, $\| \Bn^{(j)} \vi{j}_1 -  \Bn^{(j)}\vi{j}_2 \| <  \| \vi{j}_1 - \vi{j}_2 \| $, then iteratively applying $\Bn^{(j)}$, as $\vi{j}_2 = \Bn^{(j)} \vi{j}_1, \ldots, \vi{j}_{t+1} = \Bn^{(j)} \vi{j}_t, \ldots$, converges to a fixed point.  Because temporal difference learning algorithms are based on this fixed-point update, the Bellman operator is central to the analysis of many algorithms for learning value functions, and is used in the definition of objectives for value estimation.

We can similarly define a Bellman operator that accounts for the relationships between GVFs in the network. Assume there are $\numgvfs$ GVFs, with $\vinone \in \RR^{\numgvfs | \Hists |}$ the stacked values for all the GVFs,
\begin{equation}
\vinone \defeq \left[\begin{array}{c}
\vi{1}\\
\vdots \\
\vi{\numgvfs}
\end{array}
\right]
.
\end{equation}
The cumulants may now be functions of the values of other GVFs; we therefore explicitly write $\Cpij{j}_{\vinone}$.
The Bellman network operator $\Bn$ is
%
%
\begin{equation}
\Bn \vinone
\defeq
\left[\begin{array}{c}
\Cpij{1}_{\vinone} + \Ppigammaj{1} \vi{1}\\
\vdots \\
\Cpij{\numgvfs}_{\vinone} + \Ppigammaj{\numgvfs} \vi{\numgvfs}
\end{array}
\right]
.
\end{equation}
The Bellman network operator needs to be treated as a joint operator on all the GVFs because of compositional predictions, where the prediction on the next step of GVF $j$ is the cumulant for GVF $i$. When iterating the Bellman operator $\vi{j}$ is not only involved in its own Bellman equation, but also in the Bellman equation for $\vi{i}$. Notice that if there were no compositions, the Bellman network operator would separate into individual Bellman operators, that operate on each $\vi{j}$ independently.

To use such a Bellman network operator, we need to ensure that iterating under this operator converges to a fixed point. For no composition, this result is straightforward, as it simply follows from previous results showing when the Bellman operator is a contraction. We state this explicitly below in Corollary \ref{cor_main}. Under composition, we need to consider the effect of the current value function on the cumulant. Consequently, the operator may no longer be a simple linear projection of the values, followed by a sum of expected cumulants.

We first identify a necessary condition: the connections between GVFs must be acyclic. For example, GVF $i$ cannot be a cumulant for GVF $j$, if $j$ is already a cumulant for $i$. More generally, the connections between GVFs cannot create a cycle, such as $1 \rightarrow 2 \rightarrow 3 \rightarrow 1$. We provide a counterexample, where the Bellman network operator is not a contraction when there is a cycle, to illustrate that this condition is necessary.

We further place restrictions on the cumulant, if it is a function of other GVFs. In particular, we require that the cumulant is a Lipschitz function of the other value functions.
Note that this restriction encompasses the setting for a non-compositional GVF, because the cumulant can be a constant w.r.t. these values. It also encompasses the setting we use in our experiments: that each cumulant is a linear function of the GVF values on the next step.

\begin{assumption}[Acyclic Connections]
The directed graph $G$ is acyclic. $G$ consists of $\numgvfs$ vertices, each corresponding to a GVF, and each directed edge $(i,j)$ indicates that $j$ is used in the cumulant for $i$.
\end{assumption}
\begin{assumption}[Lipschitz Compositional Cumulants]
If GVF $i$ has directed edges to $\{j_1, \ldots, j_k\}$, then the cumulant $c^{(i)}_{\vinone}(\hvec_{t+1})$ is Lipschitz in $\vi{j_1}, \ldots, \vi{j_k}$ with Lipschitz constant $K_i$. That is, for $\vinone_1,\vinone_2 \in \RR^{\numgvfs | \Hists |}$, $\| \Cpij{i}_{\vinone_1} - \Cpij{i}_{\vinone_2} \| \le K_i \sum_{l=1}^k \|  \vi{j_l}_1 - \vi{j_l}_2 \|$.
\end{assumption}
Note that this assumption is satisfied if we assume that for some bounded weights $w_1, \ldots, w_k \in \RR$, the cumulant must satisfy $c^{(i)}_{\vinone}(\hvec_{t+1}) = \sum_{l=1}^k w_l \vi{j_l}(\hvec_{t+1})$ or equivalently, $\Cpij{i}_{\vinone} = \sum_{l=1}^k w_l \Ppigammaj{j_l} \vi{j_l}$.  This is because $\Ppigammaj{j_l}$ is a non-expansion, and so
\begin{align*}
\| \Cpij{i}_{\vinone_1} - \Cpij{i}_{\vinone_2} \|
= \left\| \sum_{l=1}^k w_l \Ppigammaj{j_l} (\vi{j_l}_1 - \vi{j_l}_2) \right\|
 &\le \sum_{l=1}^k | w_l | \| \Ppigammaj{j_l} (\vi{j_l}_1 - \vi{j_l}_2) \|\\
& \le (\max_{l} | w_l | ) \sum_{l=1}^k \| \vi{j_l}_1 - \vi{j_l}_2 \|
.
\end{align*}

The third assumption is standard for showing Bellman operators are contractions, and is easily satisfied if the policy is proper: is guaranteed to visit at least one state where the continuation is less than 1.
\begin{assumption}[Discounted Transitions are Contractions]
For all $j \in \{1, \ldots, \numgvfs\}$, $\beta_j \defeq \| \Ppigammaj{j} \| < 1$, where $\| \cdot \|$ is the spectral norm.
\end{assumption}
With these three assumptions, we can prove the main result.
\begin{theorem}\label{thm_main}
Under Assumptions 1-3, iterating $\vt{t+1} = \Bn \vt{t}$ converges to a unique fixed point.
\end{theorem}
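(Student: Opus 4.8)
The plan is to turn the iteration into an \emph{affine} fixed-point iteration and then use acyclicity and the contraction bound to control its spectral radius. First I would observe that Assumption 2 makes every cumulant a linear function of the stacked values, so $\Bn$ is affine: I can write $\Bn \vinone = \Amat \vinone + \bvec$, where $\bvec$ collects the fixed environmental cumulants of the non-compositional GVFs (those with no outgoing edges) and $\Amat$ is a block matrix acting on $\vinone \in \RR^{\numgvfs |\Hists|}$. Reading off the $i$th block row, $(\Bn \vinone)^{(i)} = \Cpij{i}_{\vinone} + \Ppigammaj{i}\vi{i}$, and substituting $\Cpij{i}_{\vinone} = \sum_l w_l \Ppigammaj{j_l}\vi{j_l}$, the diagonal block of $\Amat$ is exactly $\Ppigammaj{i}$ while the only nonzero off-diagonal blocks sit in columns $j_l$ for which there is an edge $(i,j_l)$, i.e. whenever GVF $j_l$ feeds the cumulant of GVF $i$.

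Next I would invoke Assumption 1 to put $\Amat$ in block-triangular form. Since the dependency graph $G$ is acyclic, its vertices admit a topological ordering; permuting the GVF indices so that every edge $(i,j_l)$ has $j_l$ preceding $i$ places all composition blocks strictly below the diagonal, so $\Amat$ becomes block lower triangular with diagonal blocks $\Ppigammaj{1}, \ldots, \Ppigammaj{\numgvfs}$. This reordering is a similarity transform by a permutation matrix, so it changes neither the spectrum of $\Amat$ nor the convergence behaviour of the iterate.

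Then I would bound the spectral radius and conclude. For a block-triangular matrix the spectrum is the union of the spectra of the diagonal blocks, hence $\rho(\Amat) = \max_j \rho(\Ppigammaj{j})$; since the spectral radius is dominated by any submultiplicative norm, Assumption 3 gives $\rho(\Ppigammaj{j}) \le \|\Ppigammaj{j}\| = \beta_j < 1$ for every $j$, and therefore $\rho(\Amat) < 1$. Unrolling the iteration gives $\vt{t} = \Amat^t \vt{0} + \sum_{k=0}^{t-1}\Amat^k \bvec$; because $\rho(\Amat) < 1$ forces $\Amat^t \to 0$ and makes $(\mathbf{I}-\Amat)$ invertible, this converges from any $\vt{0}$ to the unique fixed point $(\mathbf{I}-\Amat)^{-1}\bvec$, which is precisely the claim. (Equivalently, $\rho(\Amat) < 1$ guarantees a weighted norm in which $\Bn$ is a contraction, so one could instead appeal to the Banach fixed-point theorem.)

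The step I expect to be the main obstacle is making the affine reduction airtight: I must verify that Assumption 2 genuinely forces $\Bn$ to be linear in $\vinone$ with the diagonal blocks left as the $\Ppigammaj{i}$, and that the composition edges land strictly off the diagonal under the topological order so that no cyclic coupling contaminates a diagonal block. The remaining spectral facts---triangular spectrum, $\rho \le \|\cdot\|$, and convergence of an affine iteration with $\rho < 1$---are standard once the block-triangular structure with contractive diagonal blocks is in place. Acyclicity is exactly what prevents a nontrivial cycle from raising $\rho(\Amat)$ to or beyond $1$, consistent with the promised counterexample showing contraction can fail when a cycle is present.
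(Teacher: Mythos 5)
Your proposal is correct, and it takes a genuinely different route from the paper. The paper proves Theorem \ref{thm_main} by induction along the topological ordering of $G$: the base case is the standard contraction argument for GVFs with no outgoing edges, and the inductive step bounds the successive differences of each compositional GVF by $\epsilon_j(t-1) + \beta_i \| \vit{i}{t} - \vit{i}{t-1} \|$, followed by a somewhat informal case analysis (the difference ``oscillates'' between a shrinking ball and contraction steps); existence of the fixed point (via continuity of $\Bn$) and uniqueness (again walking the ordering) are then handled as two separate parts. You instead exploit the linearity in Assumption 2 to write $\Bn \vinone = \Amat \vinone + \bvec$, use acyclicity to permute $\Amat$ into block-triangular form with diagonal blocks $\Ppigammaj{1}, \ldots, \Ppigammaj{\numgvfs}$ (note acyclicity also rules out self-loops, so no composition block ever lands on the diagonal), and conclude $\rho(\Amat) = \max_j \rho(\Ppigammaj{j}) \le \max_j \beta_j < 1$, after which convergence, existence, and uniqueness follow simultaneously from the standard theory of affine iterations. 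Your route buys several things the paper's does not: it yields the explicit fixed point $(\mathbf{I}-\Amat)^{-1}\bvec$, a geometric convergence rate, and it sidesteps the paper's delicate oscillation argument entirely; it also makes transparent that the composition weights $w_l$ and the norms of the off-diagonal blocks are irrelevant to the spectrum. What the paper's route buys in exchange: because it relies only on norm/contraction estimates rather than on the affine structure of the operator, it is the natural template for relaxing the linearity assumption (which the paper itself remarks is ``unlikely to be necessary''), whereas your spectral argument is intrinsically tied to $\Bn$ being affine and to $\Hists$ being finite so that $\Amat$ is an honest matrix. Your one flagged worry---whether the diagonal blocks really remain the $\Ppigammaj{i}$ and all composition blocks land strictly off the diagonal---is resolved exactly as you suggest, by Assumption 2 giving $\Cpij{i}_{\vinone} = \sum_l w_l \Ppigammaj{j_l}\vi{j_l}$ together with acyclicity forbidding an edge $(i,i)$.
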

\begin{proof}
We first prove that the sequence of value estimates converges (Part 1) and then that it converges to a unique fixed point (Part 2 and 3).

\noindent
\textbf{Part 1:} \textit{The sequence $\vinone_{1}, \vinone_{2}, \ldots$ defined by $\vinone_{t+1} = \Bn \vinone_t$ converges to a limit $\vinone^* \in \RR^{\numgvfs|\Hists|}$.}

Because $G$ is acyclic, we have a linear topological ordering of the vertices, $i_1, \ldots, i_\numgvfs$: for each directed edge $(i,j)$, $i$ comes before $j$ in the ordering. Therefore, starting from the last GVF $j = i_\numgvfs$, we know that the Bellman operator $\Bn^{(j)}$ is a contraction with rate $\beta_{j} < 1$,
\begin{equation*}
\| \Bn^{(j)} \vit{j}{1} - \Bn^{(j)} \vit{j}{0} \| = \| \Ppigammaj{j} \vit{j}{1} - \Ppigammaj{j}  \vit{j}{0} \| \le \beta_j\| \vit{j}{1} - \vit{j}{0} \|
.
\end{equation*}
Therefore, iterating $\Bn$ for $t$ steps results in the error
\begin{equation*}
\| \vit{j}{t+1} - \vit{j}{t} \| \le \beta_j^t \| \vit{j}{1} - \vit{j}{0} \|
\end{equation*}
and as $t \rightarrow \infty$, $\vit{j}{t}$ converges to its fixed point.

We will use induction for the argument, with the above as the base case.
Assume for all $j \in \{i_k, \ldots, i_{\numgvfs}\}$ there exists a ball of radius $\epsilon(t)$ where $\| \vit{j}{t+1} - \vit{j}{t} \| \le \epsilon(t)$ and $\epsilon(t) \rightarrow 0$ as $t \rightarrow \infty$.
Consider the next GVF in the ordering, $i = i_{k-1}$.

\textbf{Case 1: } There are no outgoing edges from $i$. If $i$ does not use another GVF $j$ in its cumulant, then iterating with $\Bn$ independently iterates $\vit{i}{t}$ with $\Bn^{(i)}$. Therefore, as above, $\vit{i}{t}$ converges because the Bellman operator is a contraction. In this setting, clearly such an $\epsilon_i(t)$ exists because $\| \vit{j}{t+1} - \vit{j}{t} \| \rightarrow 0$ as $t \rightarrow \infty$.

\textbf{Case 2: } The cumulant for GVF $i$ is composed of the values for the set of GVFs $\mathcal{J} \subseteq  \{i_k, \ldots, i_{\numgvfs}\}$. The basic idea, formalized below, is that GVF $i$ will be guaranteed to converge once the GVFs used to construct the become sufficiently accurate. The update is $\vit{i}{t+1} =  \Cpij{i}_{\vinone_t}  + \Ppigammaj{i} \vit{i}{t}$. The change in $\vit{i}{t}$ is
\begin{align*}
\| \vit{i}{t+1} - \vit{i}{t} \|
&=  \| (\Cpij{i}_{\vinone_t} - \Cpij{i}_{\vinone_{t-1}})+ \Ppigammaj{i} ( \vit{i}{t} - \vit{i}{t-1}) \|\\
&\le K_i \sum_{j \in \mathcal{J}} \| \vit{j}{t} - \vit{j}{t-1}\| + \beta_i \| \vit{i}{t} - \vit{i}{t-1} \|\\
&\le  \numgvfs K_i  \epsilon(t-1) + \beta_i \| \vit{i}{t} - \vit{i}{t-1} \|
.
\end{align*}
In the first inequality, the first term is due to Lipschitz continuity of the cumulant and the second term is due to the fact that $\| \Ppigammaj{i}  \| = \beta_i$. In the second inequality, we know $\| \vit{j}{t} - \vit{j}{t-1}\| \le \epsilon_j(t)$, under the inductive hypothesis. The second inequality is loose, as the sum only involves $|\mathcal{J}| < \numgvfs$ terms, but we use $\numgvfs$ for simplicity since the results goes through with this constant as well.
For sufficiently large $t$, $\epsilon(t-1)$ can be made arbitrarily small.
If  $\numgvfs K_i \epsilon(t-1) < (1-\beta_i) \| \vit{i}{t} - \vit{i}{t-1} \|$, i.e., $\epsilon(t-1) < \tfrac{(1-\beta_i)}{\numgvfs K_i} \| \vit{i}{t} - \vit{i}{t-1} \|$ then
\begin{align*}
\| \vit{i}{t+1} - \vit{i}{t} \|
&\le \tilde{\beta}_i \| \vit{i}{t} - \vit{i}{t-1} \| \hspace{1.0cm}\text{for some $\tilde{\beta}_i < 1$}
\end{align*}
and so the iteration is a contraction on step $t$.
Else, if $\epsilon(t-1) \ge \tfrac{(1-\beta_i)}{\numgvfs K_i} \| \vit{i}{t} - \vit{i}{t-1} \|$, then this implies the difference $\| \vit{i}{t+1} - \vit{i}{t} \|$ is already within a small ball, with radius $\numgvfs K_i \epsilon(t-1)/(1-\beta_i)$.  As $t \rightarrow \infty$, the difference can oscillate between being within this ball, which shrinks to zero because $\epsilon(t)$ shrinks to zero, or being iterated with a contraction that also shrinks the difference. In either case, there exists an $\epsilon_i(t)$ such that  $\| \vit{i}{t+1} - \vit{i}{t} \| \le \epsilon_i(t)$, where $\epsilon_i(t) \rightarrow 0$ as $t \rightarrow \infty$.

By induction, we have such an $\epsilon_i$ for all GVFs in the network. Therefore, we know the sequence $\vit{i}{t}$ converges.

\noindent
\textbf{Part 2:} \textit{$\vinone^*$ is a fixed point of $\Bn$.}

Because the Bellman network operator is continuous, the limit can be taken inside the operator
\begin{equation*}
\vinone^* = \lim_{t \rightarrow \infty} \vt{t}
= \lim_{t \rightarrow \infty} \Bn\vt{t-1}
= \Bn \left(\lim_{t \rightarrow \infty} \vt{t-1}\right) = \Bn \vinone^*
\end{equation*}

\noindent
\textbf{Part 3: } \textit{$\vinone^*$ is the only fixed point of $\Bn$.}

Consider an alternative solution $\vinone$. Because of the uniqueness of fixed points under Bellman operators, all those GVFs that have non-compositional cumulants have unique fixed points and so those components in $\vinone$ must be the same as $\vinone^*$. All the GVFs next in the ordering that use those GVFs as cumulants must then also converge to a unique value, because their Bellman operators with fixed GVFs as cumulants have a unique fixed point. This argument continues for the remaining GVFs in the ordering.
\end{proof}

\begin{corollary}\label{cor_main}
Under Assumption 3 with non-compositional cumulants (no edges in $G$), iterating $\vt{t+1} = \Bn \vt{t}$ converges to a unique fixed point.
\end{corollary}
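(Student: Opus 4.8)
The plan is to exploit that, in the absence of compositional connections, the Bellman network operator $\Bn$ decouples completely across GVFs. With no edges in $G$, Assumption 2 applies vacuously and each cumulant vector $\Cpij{j}$ is a fixed constant, independent of $\vinone$. Hence the $j$th block of $\Bn \vinone$ is exactly the single-GVF affine map $\Bn^{(j)} \vi{j} = \Cpij{j} + \Ppigammaj{j} \vi{j}$, and the iteration $\vt{t+1} = \Bn \vt{t}$ is equivalent to running the $\numgvfs$ independent iterations $\vit{j}{t+1} = \Bn^{(j)} \vit{j}{t}$ in parallel.

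First I would verify that each $\Bn^{(j)}$ is a contraction. For any $\vi{j}_1, \vi{j}_2$, affinity gives $\Bn^{(j)} \vi{j}_1 - \Bn^{(j)} \vi{j}_2 = \Ppigammaj{j}(\vi{j}_1 - \vi{j}_2)$, so by Assumption 3, $\| \Bn^{(j)} \vi{j}_1 - \Bn^{(j)} \vi{j}_2 \| \le \beta_j \| \vi{j}_1 - \vi{j}_2 \|$ with $\beta_j < 1$. The Banach fixed-point theorem then yields, for each $j$, a unique fixed point $\vi{j}_*$ to which $\vit{j}{t}$ converges geometrically from any initialization. Stacking these gives $\vinone^* = (\vi{1}_*, \ldots, \vi{\numgvfs}_*)$, which is the unique fixed point of $\Bn$: any fixed point of $\Bn$ must be a fixed point blockwise, and each block's fixed point is unique.

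If one prefers to argue directly on the stacked space $\RR^{\numgvfs|\Hists|}$, I would equip it with the product norm $\|\vinone\| = \max_j \|\vi{j}\|$ induced by the per-block norm of Assumption 3. Under this norm the block-diagonal map $\vinone \mapsto \diag(\Ppigammaj{1}, \ldots, \Ppigammaj{\numgvfs}) \vinone$ has modulus $\max_j \beta_j < 1$, so $\Bn$ is a genuine contraction and a single appeal to Banach's theorem settles convergence and uniqueness simultaneously. This is precisely the base case (the final vertex in the topological order) of the induction proving Theorem \ref{thm_main}, now applied simultaneously to every GVF because the inductive coupling through cumulants is absent.

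The main---and essentially only---subtlety I anticipate is the choice of norm on the product space: the individual moduli $\beta_j$ may differ, so to package the $\numgvfs$ independent contractions into a single contraction on $\RR^{\numgvfs|\Hists|}$ one should use a block-max norm rather than, e.g., the Euclidean norm on the full concatenation. Beyond this bookkeeping, the statement is an immediate consequence of the standard contraction argument for value functions.
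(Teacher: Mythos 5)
Your proof is correct and follows the same route the paper intends: with no edges, $\Bn$ decouples into the independent single-GVF Bellman operators $\Bn^{(j)}$, each of which is a contraction under Assumption 3, so the standard Banach argument gives blockwise (hence global) convergence to a unique fixed point—this is exactly the base case of the paper's Theorem \ref{thm_main} applied to every GVF at once. Your worry about the product-space norm is harmless but unnecessary: since each block contracts with modulus $\beta_j \le \max_k \beta_k < 1$, any monotone combination of the block norms (including the Euclidean norm of the concatenation) also yields modulus $\max_k \beta_k$, so the blockwise argument alone already suffices.
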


\begin{proposition}[Necessity of Acyclic Composition]\label{prop_counter}
There exists transition function $\Pfcn: \States \times \Actions \times \States \rightarrow [0,1]$ and policy $\pi: \States \times \Actions \rightarrow [0,1]$ such that, for two GVFs in a cycle, iteration with the Bellman network operator diverges.
\end{proposition}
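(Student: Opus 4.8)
The plan is to exhibit the smallest possible instance that violates Assumption~1 while keeping Assumptions~2 and~3 intact, and then show the iterates grow without bound. I would take a single underlying state $\States = \{s\}$ and a single action, so the transition function is forced to be $\Pfcn(s,a,s)=1$ and the (only) policy is $\pi(s,a)=1$; with one state the Markov property holds trivially and $|\Hists| = 1$. On top of this I place two GVFs in a two-cycle: GVF~1 uses the value of GVF~2 as its (weight-one, linear) cumulant and GVF~2 uses the value of GVF~1, giving edges $1\to 2$ and $2\to 1$, so $G$ has a cycle and Assumption~1 fails. I assign both GVFs the constant continuation $\gamma_1=\gamma_2 = 0.9$, so each discounted-transition quantity is the scalar $\Ppigammaj{j} = 0.9$; this makes $\beta_j = 0.9 < 1$, so Assumption~3 holds, and the cumulants are linear in the other value, so Assumption~2 holds as well. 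The only assumption broken is acyclicity.

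Since $|\Hists|=1$, every $\vi{j}$ and every $\Ppigammaj{j}$ is a scalar and the Bellman network operator collapses to a linear map. Using the compositional cumulant of Assumption~2, $\Cpij{1}_{\vinone} = \Ppigammaj{2}\vi{2}$ and $\Cpij{2}_{\vinone} = \Ppigammaj{1}\vi{1}$, so writing $\vinone = (\vi{1},\vi{2})^{\top}$ the operator becomes $\vinone_{t+1} = \Bn\vinone_t = M\vinone_t$ with $M = \bigl[\begin{smallmatrix}\gamma_1 & \gamma_2 \\ \gamma_1 & \gamma_2\end{smallmatrix}\bigr]$. The next step is to read off the spectrum: $M$ has trace $\gamma_1+\gamma_2$ and determinant $0$, so its eigenvalues are $0$ and $\gamma_1+\gamma_2 = 1.8 > 1$, with $(1,1)^{\top}$ an eigenvector for the large eigenvalue.

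I then make divergence explicit. Initializing $\vinone_0 = (1,1)^{\top}$ gives $\vinone_t = (\gamma_1+\gamma_2)^t (1,1)^{\top}$, so $\|\vinone_t\| = \sqrt{2}\,(1.8)^t \to \infty$: the iteration does not converge, which establishes that the acyclicity condition cannot be dropped. It is worth remarking that this pinpoints the cycle as the sole culprit, since in isolation each GVF (with its cumulant held fixed) is iterated by a contraction of rate $0.9$ and converges; it is only the mutual feedback of the cumulants that produces the growing eigenvalue.

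I do not expect a deep obstacle here---the result is a construction rather than an estimate---so the real work is definitional bookkeeping: verifying carefully that the cyclic example genuinely satisfies Assumptions~2 and~3 (so the failure is attributable to Assumption~1 alone) and fixing the reading of the compositional cumulant so that the reduced matrix $M$ is exactly right. If a one-state example is felt to be too degenerate, the identical argument runs with the scalars replaced by matrices and the condition $\gamma_1+\gamma_2 > 1$ replaced by $\rho(M) > 1$ for the spectral radius of the resulting block operator; I use the single state purely to make the eigenvalue computation immediate.
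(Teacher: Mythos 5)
Your proof is correct and takes essentially the same route as the paper: both exhibit an explicit two-GVF cycle whose Bellman network operator collapses to a linear map with spectral radius exceeding $1$ (the paper uses a two-state chain with $\gamma = 0.95$, you use a single state with $\gamma_1 = \gamma_2 = 0.9$), and both demonstrate divergence along the direction $(1,1)^\top$. Your single-state instance is if anything more minimal, and it has the small added virtue of explicitly verifying that Assumptions 2 and 3 still hold, so the failure is cleanly attributed to the cycle alone.
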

\begin{proof}
Assume there are two states, with the policy defined such that we get the following dynamics for the Markov chain
\begin{equation}
\Ppi =
\left[\begin{array}{cc}
0.9 & 0.1\\
0.1 & 0.9
\end{array}
\right]
.
\end{equation}
Assume further that $\gamma = 0.95$. The resulting Bellman iteration is
\begin{align*}
\twovec{\vi{1}}{\vi{2}}
&= \Ppi \twovec{\vi{2}}{\vi{1}} + \gamma  \Ppi \twovec{\vi{1}}{\vi{2}} \\
&= \Ppi \left[\begin{array}{cc}
0 & 1\\
1 & 0
\end{array}
\right] \twovec{\vi{1}}{\vi{2}} + \Ppi \left[\begin{array}{cc}
\gamma & 0\\
0 & \gamma
\end{array}
\right] \twovec{\vi{1}}{\vi{2}} \\
&= \Ppi \left[\begin{array}{cc}
\gamma & 1\\
1 & \gamma
\end{array}
\right] \twovec{\vi{1}}{\vi{2}}
\end{align*}
Since the matrix $\Ppi \left[\begin{array}{cc}
\gamma & 1\\
1 & \gamma
\end{array}
\right] $
is an expansion, for many initial $\twovec{\vi{1}}{\vi{2}}$ this iteration goes to infinity, such as initial $\vi{1} = \vi{2} = \twovec{1}{1}$.
\end{proof}

\subsection{The Objective Function for GVFNs} \label{sec_obj}

With a valid Bellman network operator, we can proceed to defining the objective function for GVFNs. The above fixed point equation assumes a tabular setting, where the values can be estimated directly for each history. GVFNs, however, have a restricted functional form, where the value estimates must be a parametrized function of the current observation and value predictions from the last time step. Under such a functional form, it is unlikely that we can exactly solve for the fixed point. Rather, we will solve for a projected fixed point, which projects into the space of representable value functions.

Define the space of functions as
\begin{align}
\mathcal{F} = \Big\{ &\vinone_\weights = [\vi{1}_\weights, ..., \vi{\numgvfs}_\weights] \in \RR^{\numgvfs|\Hists|}  \ \ | \ \ \text{ where } \weights \in \weightspace \ \ \text{ and } \\
&V_\weights(\hvec_{t+1}) = f_\weights([\viweights{1}(\hvec_t), \ldots, \viweights{\numgvfs}(\hvec_t)], \xvec_{t+1})
\ \ \text{ when } \text{Pr}(\hvec_{t+1} | \hvec_t, \xvec_{t+1}) > 0 \Big\} \nonumber
\end{align}
Recall that $\xvec_{t+1} = [a_t, \obs_{t+1}]$. We know $\text{Pr}(\hvec_{t+1} | \hvec_t, \xvec_{t+1}) > 0$ only when $\hvec_{t+1} \equiv \hvec_t a_t \obs_{t+1}$, and so expect this to only be true for one outcome $\hvec_{t+1}$. We write that $\hvec_{t+1}$ is equivalent, rather than equal, to the current history appended with action $a_t$ and observation $\obs_{t+1}$, because $\hvec_{t+1}$ might be shorter (more minimal): earlier actions and observations might not be needed.
Define the projection operator
\begin{align}
\Pi_{\mathcal{F}}(\vinone) &\defeq \min_{\hat{\vinone} \in \mathcal{F}} \| \vinone - \hat{\vinone} \|_{\dw}^2
\hspace{0.5cm}\text{ where } \| \vinone - \hat{\vinone} \|_{\dw}^2 \defeq \sum_{\hvec \in \Hist} \dw(\hvec) (V(\hvec) - \hat{V}(\hvec))^2
\end{align}
where $\dw: \Hists \rightarrow [0,1]$ is the sampling distribution over histories. Typically, we assume data is generated by following a behavior policy $\mu: \Hists \rightarrow [0,1]$, and that $\dw$ is the stationary distribution for this policy. The value functions for policies $\pi_i$ are typically learned off-policy, since in general $\pi_i$ will not equal $\mu$. The behavior policy $\mu$ used to gather the data is different, or off of, the policy---or policies---that we are evaluating.

To obtain the projected fixed point solution, a natural goal is to minimize the following projected objective,
\begin{equation}
\min_{\weights \in \weightspace} \| \Pi_{\mathcal{F}} \Bn \vinone_\weights - \vinone_\weights \|_{\dw}^2
\end{equation}
Unfortunately, this objective can be hard to compute, because the projection operator $\Pi_{\mathcal{F}}$ onto the nonlinear manifold can be intractable. Instead, we take the same approach as \citeA{maei2010toward}, when defining the nonlinear MSPBE for learning value functions with neural networks and other nonlinear function approximators. The idea is to approximate the projection onto the nonlinear manifold by assuming it is locally linear. Then, we can use a linear projection operator, defined locally at the current set of parameters $\weights \in \weightspace$, spanned by the basis $\phivec_{j,\weights}(\hvec) \defeq \nabla_\theta \vi{j}_\weights(\hvec)$ for all $\hvec \in \Hists$ and GVFs $j$. Let $\phimat_{j,\weights}$ correspond to the matrix of stacked $\phivec_{j,\weights}(\hvec)^\trans$ for all $\hvec \in \Hists$, having $|\Hists|$ rows.  We further define
\begin{align*}
  \phimat_{\weights}
  \defeq
  \left[\begin{array}{c}
          \phimat_{1, \weights}\\
          \vdots \\
          \phimat_{\numgvfs, \weights}
        \end{array}\right]
  \quad
  \quad
  \quad
  \dwdiag \defeq \diag\left[
  \begin{array}{c}
    \dw \\
    \vdots \\
    \dw
  \end{array}
  \right]
  \quad
  \quad
  \quad
  \Pi_{\weights}
  \defeq
    \phimat_{\weights}
    (\phimat_{\weights}^\trans \dwdiag \phimat_{\weights})^\inv
    \phimat_{\weights}^\trans \dwdiag
    .
\end{align*}
Using this locally linear approximation to the objective potentially expands the set of stationary points. The fixed points under the original projection are still fixed points under this locally linear approximation. But, there could be points that are fixed points under this locally linear approximation, that would not be under the original.

We call the final objective using this projection the MSPBNE\footnote{A variant of the MSPBNE has been introduced for TD networks \citep{silver2012gradient}; the above generalizes that MSPBNE to GVF Networks. Because it is a strict generalization, we use the same name.}, defined as
\begin{align}
    \text{MSPBNE}(\weights) &\defeq \| \Pi_{\weights} \Bn \vinone_\weights - \vinone_\weights \|_{\dw}^2 \label{eq_projform}
   \end{align}
 We show in the following lemma, with proof in Appendix \ref{appendix:mspbne}, that in can be rewritten in a way that makes it more amenable to compute and sample gradients.\footnote{Since developing the MSPBNE, an alternative approach to defining a nonlinear MSPBE has been developed using a conjugate form for the Bellman error (see \citeR{dai2017learning} and in-preparation work that makes the connection the MSPBE more explicit \citep{patterson2020investigating}). The extension here should be relatively straightforward, as we formulate the objective using histories.} We will use this reformulation to develop algorithms to minimize this objective in the next section.
\begin{restatable}{lemma}{mspbnelemma}\label{lemma:mspbne_exp}
The MSPBNE defined in Equation \eqref{eq_projform} can be rewritten as
   \begin{align}
    \text{MSPBNE}(\weights) &= \boldsymbol{\delta}(\weights)^\top W(\weights)^\inv  \boldsymbol{\delta}(\weights) \label{eq_mspbne}
   \end{align}
   where
    \vspace{-0.5cm}
   \begin{align}
     W(\weights) &\defeq
       \Expected_d\bigg[\sum_{j=1}^\numgvfs \phivec_{j,\weights}(H) \phivec_{j,\weights}(H)^\trans \bigg]
       = \sum_{\hvec \in \Hists} d(\hvec) \sum_{j=1}^\numgvfs \phivec_{j,\weights}(\hvec) \phivec_{j,\weights}(\hvec)^\trans \label{eqn_w}\\
     \boldsymbol{\delta}(\weights) &\defeq
     \sum_{j=1}^\numgvfs \Expected_{d,\pi_j}\bigg[\tderror_j(H, A, H') \phivec_{j,\weights}(H) \bigg] \nonumber\\
     \tderror_j(H,A,H') &\defeq c^{(j)}(H, A, H') + \gamma^{(j)}(H, A, H')\viweights{j}(H') - \viweights{j}(H) \nonumber
     .
   \end{align}
 \end{restatable}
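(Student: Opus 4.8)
The plan is to recognize this as the standard reduction of a projected Bellman error under a \emph{linear} projection into an explicit quadratic form, generalized to the stacked ``network'' setting in which the single weight vector $\weights$ is shared across all $\numgvfs$ GVFs. Write $x \defeq \Bn \vinone_\weights - \vinone_\weights \in \RR^{\numgvfs|\Hists|}$ for the Bellman network residual. Since $\Pi_\weights = \phimat_\weights (\phimat_\weights^\trans \dwdiag \phimat_\weights)^\inv \phimat_\weights^\trans \dwdiag$ is the $\dwdiag$-orthogonal projection onto the column space of $\phimat_\weights$, the whole statement reduces to (i) a linear-algebra identity expressing $\| \Pi_\weights x \|_\dw^2$ as a quadratic form, and (ii) identifying the two ingredients of that quadratic form with $W(\weights)$ and $\boldsymbol{\delta}(\weights)$.

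First I would establish the projection identity. Writing $M \defeq \phimat_\weights^\trans \dwdiag \phimat_\weights$ and $b \defeq \phimat_\weights^\trans \dwdiag x$, a direct computation gives
\begin{align*}
\| \Pi_\weights x \|_\dw^2 = (\Pi_\weights x)^\trans \dwdiag (\Pi_\weights x) = b^\trans M^\inv (\phimat_\weights^\trans \dwdiag \phimat_\weights) M^\inv b = b^\trans M^\inv b,
\end{align*}
using $\Pi_\weights x = \phimat_\weights M^\inv b$, the symmetry of $M$, and cancelling $M^\inv M = I$. This is the only genuinely algebraic manipulation, and it is routine; it implicitly requires $M = W(\weights)$ to be invertible, which is exactly the assumption baked into writing $W(\weights)^\inv$.

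Next I would identify the two pieces using the block structure of $\phimat_\weights$ and $\dwdiag$. Because $\dwdiag = \diag[\dw, \ldots, \dw]$ is block diagonal and $\phimat_\weights$ stacks the per-GVF blocks $\phimat_{j,\weights}$ (whose rows are $\phivec_{j,\weights}(\hvec)^\trans$), the product separates over GVFs:
\begin{align*}
M = \phimat_\weights^\trans \dwdiag \phimat_\weights = \sum_{j=1}^\numgvfs \phimat_{j,\weights}^\trans \diag(\dw) \phimat_{j,\weights} = \sum_{\hvec \in \Hists} \dw(\hvec) \sum_{j=1}^\numgvfs \phivec_{j,\weights}(\hvec) \phivec_{j,\weights}(\hvec)^\trans = W(\weights).
\end{align*}
For $b$, the same decomposition gives $b = \sum_j \phimat_{j,\weights}^\trans \diag(\dw) x^{(j)}$, where $x^{(j)} = \Cpij{j}_{\vinone_\weights} + \Ppigammaj{j}\vi{j}_\weights - \vi{j}_\weights$ is the $j$th block of the residual. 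The remaining step is to show each entry equals the conditional expectation of the sampled TD error, $\Expected_{\pi_j}[\tderror_j(H') \mid H = \hvec] = x^{(j)}(\hvec)$, which follows by unfolding the definitions of $\Cpij{j}$ and $\Ppigammaj{j}$ as expectations over $a \sim \pi_j(\cdot\mid\hvec)$ and $\hvec' \sim \Pr(\cdot\mid \hvec, a)$. Substituting turns $b$ into exactly $\boldsymbol{\delta}(\weights)$, so $\| \Pi_\weights x\|_\dw^2 = \boldsymbol{\delta}(\weights)^\trans W(\weights)^\inv \boldsymbol{\delta}(\weights)$.

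I expect the main obstacle to be the bookkeeping in this last identification rather than any deep difficulty: one must carefully match the operator-form residual $x^{(j)}$ against the expectation-form $\boldsymbol{\delta}(\weights)$, being precise about the off-policy nature (the outer sampling is over $\dw$, the stationary distribution of the behavior policy $\mu$, while the action is reweighted by the target policy $\pi_j$), and about the mild abuse of notation whereby $\Cpij{j}_{\vinone_\weights}(H')$ and $\Ppigammaj{j}\vi{j}_\weights(H')$ inside $\tderror_j$ denote the sampled cumulant and discounted next-value at $H'$ rather than the full operators. Once $b = \boldsymbol{\delta}(\weights)$ and $M = W(\weights)$ are verified, the result is immediate from the projection identity with $x = \Bn\vinone_\weights - \vinone_\weights$.
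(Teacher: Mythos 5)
Your proposal is correct and follows essentially the same route as the paper's own proof: expand $\| \Pi_\weights x \|_{\dw}^2$ into the quadratic form $b^\trans M^\inv b$ via the projection algebra, then use the block structure of $\phimat_\weights$ and $\dwdiag$ to identify $M = \phimat_\weights^\trans \dwdiag \phimat_\weights = W(\weights)$ and $b = \phimat_\weights^\trans \dwdiag x = \boldsymbol{\delta}(\weights)$. The only step you leave implicit---which the paper states explicitly---is that reducing $\| \Pi_{\weights} \Bn \vinone_\weights - \vinone_\weights \|_{\dw}^2$ to $\| \Pi_{\weights} (\Bn \vinone_\weights - \vinone_\weights) \|_{\dw}^2$ requires $\Pi_\weights \vinone_\weights = \vinone_\weights$, i.e., that $\vinone_\weights$ already lies in the space onto which $\Pi_\weights$ projects.
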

From this reformulation, one can see that the MSPBNE objective is a weighted quadratic objective, with weighting matrix $W(\weights)$ on vector $\boldsymbol{\delta}(\weights)$. The objective is zero---and so minimal---when $\boldsymbol{\delta}(\weights) = \zerovec$. This is similar to the temporal difference (TD) learning fixed point criteria. In fact, TD implicitly optimizes the linear MSPBE, which corresponds to the above objective with $\numgvfs = 1$ and fixed features that do not depend on the parameters. Once we have a projected Bellman error objective, we can take advantage of the many advances in formulating TD algorithms to optimized MSPBE objectives. Therefore, though this objective looks quite complex, there is substantial literature to facilitate minimizing the MSPBNE.

\section{Algorithms for the MSPBNE} \label{sec_algs}

The algorithms to optimize the MSPBNE are a relatively straightforward combination of standard algorithms for RNNs and the TD algorithms designed to optimize the MSPBE. To provide some intuition on these algorithms, and how to obtain this combination of TD and RNN algorithms, we begin with a simpler setting: extending TD to a recurrent setting,  with one GVF. From there, we introduce two algorithms for the MSPBNE: Recurrent TD and Recurrent GTD.

Consider first the on-policy TD update, without recurrence, assuming the true state $\svec_t$ at time t is given:
   \begin{align*}
     \weights_{t+1} \gets \weights_t + \alpha_t \delta_t \nabla_\weights V_\weights(\svec_t) \hspace{0.5cm} \text{ where } \delta_t \defeq C_{t+1} + \gamma_{t+1} V_\weights(\svec_{t+1}) - V_\weights(\svec_{t})
     .
   \end{align*}
With recurrence, where the state is estimated and so is a function of $\weights$, the only difference to this update is in the computation of $\nabla_\weights V_\weights(\svec_t)$, where $\svec_t$ should instead be thought of $\svec_t(\weights)$. This gradient now requires the chain rule, to account for the impact of $\weights$ on the last state, and the state before then, and so on:
   \begin{equation*}
    \frac{\partial V_\weights(\svec_t)}{\partial \weights_i} = \frac{\partial V_\weights(\svec_t)}{\partial \svec_t}^\top \frac{\partial \svec_t}{\partial \weights_i}
   \end{equation*}
where $\svec_t = f_\weights(\svec_{t-1}, \xvec_t)$.
Computing this gradient back-in-time, $\nabla_\weights \svec_t$---which is also called the \emph{sensitivity}---is precisely the aim of most RNN algorithms, including truncated BPTT and RTRL. Any algorithm that computes sensitivities can be used to obtain a TD update with recurrent connections to estimate the state.

For GVFNs, there are two differences: we need to account for off-policy sampling and the fact that state is itself composed of these value estimates, rather than being learned to estimate values. Value estimation within GVFNs requires off-policy updates, because the target policies $\pi_j$ are not typically equal to the behavior policy $\mu$. Therefore, we also need to include importance sampling ratios in the update
\begin{align*}
\rho_{t,j} \defeq \frac{\pi_j(A_t | \hvec_t)}{\mu(A_t | \hvec_t)} \ \ \ \ \text{ for all $j \in \{1, 2, \ldots, \numgvfs\}$}
.
\end{align*}
This ratio multiplies the TD update, to adjust the expectation of the update to be as if action $A_t$ had been taken under $\pi_j$ rather than the behavior $\mu$. For the second difference, the Recurrent TD update is actually even simpler because the value function itself is the state. For the $j$-th value function---which is the $j$-th state variable---we get that $\nabla_\weights \vifunc{j}_\weights$ at time $t$ is $\nabla_\weights \svec_{t,j}$. Notice that this gradient actually corresponds to using the above chain rule update, by using $\vifunc{j}(\svec_t) = \svec_{t,j}$ as a selector function into the state variable.

The \textbf{Recurrent TD} update for GVFNs is
\begin{align}
\svec_t &\gets f_{\weights_t}(\svec_{t-1}, \xvec_t) &&\triangleright \text{ where } \xvec_t \defeq [a_{t-1}, \obs_t] \nonumber\\
\svec_{t+1} &\gets f_{\weights_t}(\svec_{t}, \xvec_{t+1}) &&\triangleright \text{ where } \xvec_{t+1} \defeq [a_{t}, \obs_{t+1}]\nonumber\\
\phivec_{t,j} &\gets \nabla_\weights \svec_{t,j} &&\triangleright \text{ Compute sensitivities using truncated BPTT} \nonumber\\
\delta_{t,j} &\gets C_{t+1}^{(j)} + \gamma_{t+1}^{(j)} \svec_{t+1,j} - \svec_{t,j}   \nonumber\\
\rho_{t,j} &\gets \frac{\pi^{(j)}(a_t | \obs_t)}{\mu(a_t | \obs_t)} &&\triangleright \text{ Policies can be functions of histories, not just of $\obs_t$}  \nonumber\\
\weights_{t+1} &\gets \weights_{t} + \alpha_t \bigg[ \sum_{j=1}^{\numgvfs} \rho_{t,j}\tderror_{t,j} \phivec_{t,j}  \bigg] \label{eq_rtd}
\end{align}

The TD update, however, is only an approximate semi-gradient update, even in the fully observable setting. To obtain exact gradient formulas, we turn to Gradient TD (GTD) algorithms. In particular we extend the nonlinear GTD strategy developed by \citeA{maei2010toward}, to the MSPBNE. As above, we will immediately be able to use any algorithm to compute the sensitivities in the Recurrent GTD algorithm. But, the algorithm becomes more complex, simply because nonlinear GTD is more complex than TD even without recurrence.

We can use the following theorem to facilitate estimating the gradient. The main idea is to introduce an auxiliary weight vector, $\wvec$, to provide a quasi-stationary estimate of part of the objective. This proof and explicit derivation for the resulting Recurrent TD algorithm is given in the appendix. In the main body, we only provide the result for non-compositional GVFs: no GVFs predict the outcomes of other GVFs. This makes the algorithm easier to follow. We prove the more general result and derivation in Appendix \ref{appendix:alg_details_derivs}, in Theorem \ref{thm:gradientsgen}.
\begin{restatable}{theorem}{gradtheorem}\label{thm:gradients}
  Assume that $V_{\weights}(\hvec)$ is twice continuously differentiable as a function of $\weights$ for all
  histories $\hvec\in\Hist$ where $\dw(\hvec)>0$ and that $W(\cdot)$, defined in Equation \eqref{eqn_w}, is non-singular in a small neighbourhood of $\weights$. Assume further that there are no compositional GVFs in the GVFN: no GVFs has a cumulant that corresponds to another GVFs prediction. Then for $W(\weights)$ and $\boldsymbol{\delta}(\weights) $ defined in Lemma \ref{lemma:mspbne_exp},
  \begin{align}
    \wvec(\weights) &\defeq
    W(\weights)^\inv \boldsymbol{\delta}(\weights) \label{eq_secondw}
             \end{align}
  \begin{align}
  \hat{\delta}_{j,\theta}(H) &\defeq \phivec_{j,\weights}(H)^\trans \wvec(\weights) \nonumber\\
     \psivec(\weights) &\defeq \Expected_{d, \mu}\left[\sum\limits_{j=1}^{\numgvfs} \rho_j(H,A)\Big(\delta_j(H,A,H') - \hat{\delta}_{j,\theta}(H)\Big)  \nabla^2 \viweights{j}(H)  \wvec(\weights) \right] \label{eq_hv}
  \end{align}
we get the gradient
 \begin{align}
   -\tfrac{1}{2} \nabla  \text{MSPBNE}(\weights) &=
       \boldsymbol{\delta}(\weights) -
       \Expected_{d,\mu}\bigg[\rho_j(H,A)\gamma^{(j)}(H,A,H') \hat{\delta}_{j,\theta}(H) \phivec_{j,\weights}(H') \bigg] - \psivec(\weights) \label{eq_tdc}
 \end{align}
\end{restatable}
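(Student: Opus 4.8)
The plan is to differentiate the compact quadratic form supplied by Lemma~\ref{lemma:mspbne_exp}, namely $\text{MSPBNE}(\weights) = \boldsymbol{\delta}(\weights)^\top W(\weights)^{-1}\boldsymbol{\delta}(\weights)$, treating $\boldsymbol{\delta}$ and $W$ as smooth functions of $\weights$ (the twice-differentiability assumption makes the relevant Hessians well-defined and lets me interchange $\nabla$ with the expectations, and non-singularity of $W$ in a neighbourhood makes $W^{-1}$ differentiable). Applying the product rule together with the matrix-inverse derivative identity $\partial_{\weights_i} W^{-1} = -W^{-1}(\partial_{\weights_i}W)W^{-1}$, and introducing the auxiliary vector $\wvec(\weights) = W(\weights)^{-1}\boldsymbol{\delta}(\weights)$ to absorb the inverse, I would first establish the intermediate identity $-\tfrac12\nabla\,\text{MSPBNE}(\weights) = -(\nabla\boldsymbol{\delta})^\top\wvec + \tfrac12\,\wvec^\top(\nabla W)\wvec$, where $\nabla\boldsymbol{\delta}$ is the Jacobian of $\boldsymbol{\delta}$. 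This is the only place the factor $-\tfrac12$ and the vector $\wvec$ enter; everything afterward is expanding these two derivatives.

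Next I would compute the two derivatives explicitly. For $\nabla W$, writing $W = \Expected_d[\sum_j \phivec_{j,\weights}\phivec_{j,\weights}^\top]$ and using $\partial_{\weights_i}\phivec_{j,\weights} = (\nabla^2 V_{j,\weights})_{:,i}$, a symmetrization argument gives $\tfrac12\wvec^\top(\nabla W)\wvec = \Expected_d[\sum_j \hat{\delta}_{j,\theta}\,\nabla^2 V_{j,\weights}\,\wvec]$, with $\hat{\delta}_{j,\theta} = \phivec_{j,\weights}^\top\wvec$. For $\nabla\boldsymbol{\delta}$ the non-compositional assumption is essential: since no cumulant is itself a value estimate, the cumulant is independent of $\weights$ and hence $\partial_{\weights}\tderror_j(H') = \gamma_j(H')\phivec_{j,\weights}(H') - \phivec_{j,\weights}(H)$. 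Differentiating $\boldsymbol{\delta} = \Expected_{d,\mu}[\sum_j \rho_j\,\tderror_j(H')\,\phivec_{j,\weights}(H)]$ by the product rule then splits $(\nabla\boldsymbol{\delta})^\top\wvec$ into a first-order piece $\Expected_{d,\mu}[\sum_j\rho_j\hat{\delta}_{j,\theta}(\gamma_j\phivec_{j,\weights}(H') - \phivec_{j,\weights}(H))]$ and a Hessian piece $\Expected_{d,\mu}[\sum_j\rho_j\tderror_j(H')\nabla^2 V_{j,\weights}\wvec]$.

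The final step is to reassemble and simplify using the identity $\Expected_\mu[\rho_j \mid H] = \sum_{a}\pi_j(a|H) = 1$. Because $\phivec_{j,\weights}(H)$, $\hat{\delta}_{j,\theta}(H)$ and $\nabla^2 V_{j,\weights}(H)$ are all action-independent, this identity lets me (i) recognise that $\Expected_{d,\mu}[\sum_j\rho_j\phivec_{j,\weights}(H)\phivec_{j,\weights}(H)^\top]\wvec = W\wvec = \boldsymbol{\delta}$, so that the $\phivec_{j,\weights}(H)$ contribution of the first-order piece collapses to exactly $\boldsymbol{\delta}(\weights)$, and (ii) rewrite the $\rho$-free Hessian term arising from $\nabla W$ as $\Expected_{d,\mu}[\sum_j\rho_j\hat{\delta}_{j,\theta}\nabla^2 V_{j,\weights}\wvec]$, so that it merges with the Hessian piece from $\nabla\boldsymbol{\delta}$ into precisely $-\psivec(\weights)$. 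What remains from the first-order piece is $-\Expected_{d,\mu}[\rho_j\gamma_j(H')\hat{\delta}_{j,\theta}(H)\phivec_{j,\weights}(H')]$, yielding the stated formula.

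I expect the main obstacle to be the careful index and transpose bookkeeping in differentiating the matrix-valued $W$ and the vector-valued $\boldsymbol{\delta}$ — in particular getting the symmetrization of $\nabla W$ correct and tracking which quantities are evaluated at $H$ versus $H'$. The conceptual crux, rather than a genuine difficulty, is the repeated use of $\Expected_\mu[\rho_j\mid H]=1$ to reconcile the importance-weighted terms with the $\rho$-free matrix $W$; this is exactly what makes the $W\wvec = \boldsymbol{\delta}$ collapse and the merging of the two Hessian contributions go through cleanly, and it is where the non-compositional assumption earns its keep by keeping the gradient of $\tderror_j$ free of extra cumulant terms.
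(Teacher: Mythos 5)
Your proposal is correct and follows essentially the same route as the paper's proof: differentiating the quadratic form $\boldsymbol{\delta}(\weights)^\top W(\weights)^{\raisebox{.2ex}{$\scriptscriptstyle-1$}}\boldsymbol{\delta}(\weights)$ from Lemma \ref{lemma:mspbne_exp} via the product rule and the matrix-inverse derivative, absorbing $W^{\raisebox{.2ex}{$\scriptscriptstyle-1$}}\boldsymbol{\delta}$ into $\wvec$, expanding $\nabla\boldsymbol{\delta}$ and $\nabla W$ into first-order and Hessian pieces, and using $\Expected_\mu[\rho_j\mid H]=1$ to collapse $W\wvec=\boldsymbol{\delta}$ and merge the Hessian terms into $\psivec$. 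The only difference is that the paper proves the general compositional case (Theorem \ref{thm:gradientsgen}, with the extra $\cfunc(j,i)$ cumulant-gradient terms) and obtains this statement as the special case $\cfunc(j,i)=0$, whereas you argue the non-compositional case directly---the decomposition and key identities are the same.
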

We now have two additional terms to estimate beyond the standard sensitivities in a typical RNN gradient. First, we need to estimate this additional weight vector $\wvec$, given in Equation \eqref{eq_secondw}. This can be done using standard techniques in reinforcement learning.
Second, we also need to estimate a Hessian-vector product, given in Equation \eqref{eq_hv}. Fortunately, this can be computed using R-operators, without explicitly computing the Hessian-vector product, using only computation linear in the length of the vector.

The {\bf Recurrent GTD} update, for this simpler setting without composition, is\footnote{As mentioned above, we could have considered an alternative MSPBNE, using an in-development nonlinear MSPBE objective \citep{patterson2020investigating}. The resulting Recurrent GTD algorithm would look very similar, except the Hessian-vector product could be omitted: $\psivec_t$ is simply dropped in the update to $\theta$. }
\begin{align}
\svec_t &\gets f_{\weights_t}(\svec_{t-1}, \xvec_t) \nonumber\\
\svec_{t+1} &\gets f_{\weights_t}(\svec_{t}, \xvec_{t+1}) \nonumber\\
\phivec_{t,j} &\gets \nabla_\weights \svec_{t,j} \hspace{2.0cm} \triangleright \text{ Compute sensitivities using truncated BPTT}  \nonumber\\
\phivec'_{t,j} &\gets \nabla_\weights \svec_{t+1,j}  \nonumber\\
\rho_{t,j} &\gets \frac{\pi^{(j)}(a_t | \obs_t)}{\mu(a_t | \obs_t)}  \nonumber\\
\vvec_t &\gets \nabla^2 \svec_t \wvec_t \hspace{2.0cm} \triangleright \text{ Computed using R-operators, see Appendix \ref{app_gradients}} \nonumber\\
\hat{\delta}_{t,j} &\gets \phivec_{t,j}^\trans  \wvec_t \nonumber\\
  \psivec_t &\gets \sum_{j=1}^{\numgvfs} ( \rho_{t,j}\delta_{t,j} - \hat{\delta}_{t,j})  \vvec_t \nonumber\\
  \weights_{t+1} &\gets \weights_{t} + \alpha_t \bigg[ \sum_{j=1}^{\numgvfs}  \rho_{t,j} \tderror_{t,j} \phivec_{t,j} - \rho_{t,j}  \gamma_{j,t+1} \hat{\delta}_{t,j} \phivec'_{t,j} \bigg] - \alpha_t\psivec_t  \label{eq_rgtd}\\
   \wvec_{t+1} &\gets \wvec_t + \beta_t \bigg[ \sum_{j=1}^{\numgvfs}  \rho_{t,j} (\tderror_{t,j} - \hat{\delta}_{t,j} )\phivec_{t,j} \bigg] \nonumber
\end{align}
The derivation for this algorithm is similar to the derivation for
Gradient TD Networks \citep{silver2012gradient}, though for this more
general setting with GVF Networks. We include additional algorithm
details and derivations in Appendix \ref{appendix:alg_details_derivs},
including the general RGTD algorithm for compositional GVFs in equation \eqref{eq_rgtd_gen}.

As alluded to, there are a variety of possible strategies to optimize the MSPBNE for GVFNs. This variety arises from different strategies to optimize RNNs, back-in-time, as well as from the variety of strategies to optimize the MSPBE for value estimation. For example, we can compute sensitivities using truncated BPTT or RTRL and its many approximations. Similarly, for the MSPBE, there are a variety of different strategies to approximate gradients, because the gradient is not straightforward to sample. These including a variety of gradient TD methods---such as GTD and GTD2---saddlepoint methods and semi-gradient TD (see \citeR{ghiassian2018online} for a more exhaustive list).

\section{Connections to Other Predictive State Approaches}\label{sec_connections}

The idea that an agent's knowledge might be represented as predictions has a long history in machine learning. The first references to such a predictive approach can be found in the work of \citeA{Cunninghambook}, \citeA{becker1973model}, and \citeA{drescher1991made}, who hypothesized that agents would construct their understanding of the world from interaction, rather than human engineering. These ideas inspired work on predictive state representations (PSRs) \citep{littman2001predictive}, as an approach to modeling dynamical systems. Simply put, a PSR can predict all possible interactions between an agent and it's environment by reweighting a minimal collection of core test (sequence of actions and observations) and their predictions, without the need for a finite history or dynamics model.
Extensions to high-dimensional continuous tasks have demonstrated that the predictive approach to dynamical system modeling is competitive with state-of-the-art system identification methods \citep{hsu2012spectral}.
PSRs can be combined with options \citep{wolfe2006predictive}, and some work suggests discovery of the core tests is possible \citep{mccracken2005online}.
One important limitation of the PSR formalism is that the agent's internal representation of state must be composed exclusively of probabilities of action-observation sequences.

A PSR can be represented as a GVF network by using a myopic $\gamma = 0$ and compositional predictions. For a test $q = \action_1\obs_2$, for example, to compute the probability of seeing $\obs_2$ after taking action $\action_1$, the cumulant is $1$ if $\obs_2$ is observed and $0$ otherwise; the policy is to always take action $\action_1$; and the continuation $\gamma = 0$. To get a longer test, say $\action_0\obs_1\action_1\obs_2$, a second GVF can be added which predicts the output of the first GVF. For this second GVF, the cumulant is the prediction from the first GVF (which predicts the probability of seeing $\obs_2$ given $\action_1$ is taken); the policy is to always take action $\action_0$; and the continuation is again $\gamma = 0$. Though GVFNs can represent a PSR, they do not encompass the discovery methods or other nice mathematical properties of PSRs, such as can be obtained with linear PSRs.

TD networks \citep{sutton2004temporal} were introduced after PSRs, and inspired by the PSR approach to state construction that is grounded in observations.
GVFNs build on and are a strict generalization of TD networks.
A TD network \citep{sutton2004temporal} is similarly composed of $\numgvfs$ predictions, and updates using the current observation and previous step predictions like an RNN. TD networks with options \citep{rafols2005using} condition the predictions on temporally extended actions similar to GVF Networks, but do not incorporate several of the recent modernizations around GVFs, including state-dependent discounting and convergent off-policy training methods.
The key differences, then, between GVF Networks and TD networks is in how the question networks are expressed and subsequently how they can be answered.
GVF Networks are less cumbersome to specify, because they use the language of GVFs. Further, once in this language, it is more straightforward to apply algorithms designed for learning GVFs.

More recently, there has been an effort to combine the benefits of PSRs and RNNs. This began with work on Predictive State Inference Machines (PSIMs) \citep{sun2016learning}, for inference in linear dynamical systems. The state is learned in a supervised way, by using statistics of the future $k$ observations as targets for the predictive state. This earlier work focused on inference in linear dynamical systems, and did not state a clear connection to RNNs. Later work more explicitly combines PSRs and RNNs \citep{downey2017predictive,choromanski2018initialization}, but restricts the RNN architecture to a bilinear update to encode the PSR update for predictive state. In parallel, \citeA{venkatraman2017predictive} proposed another strategy to incorporate ideas from PSRs into RNNs, without restricting the RNN architecture, called Predictive State Decoders (PSDs) \citep{venkatraman2017predictive}. Instead of constraining internal state to be predictions about future observations, statistics about future observations are used as auxiliary tasks in the RNN.

Of all these approaches, the most directly related to GVFNs is PSIMs. This connection is most clear from the PSIM objective \citep[Equation 8]{sun2016learning}, where the goal is to make predictive state match a vector of statistics about future outcomes. There are some key differences, mainly due to a focus on offline estimation in PSIMs. The predictive questions in PSIMs are typically about observations 1-step, 2-step up to $k$-steps into the future. To use such targets, batches of data need to be gathered and statistics computed offline to create the targets. Further, the state-update (filtering) function is trained using an alternating minimization strategy, with an algorithm called DAgger, rather than with algorithms for RNNs. Nonetheless, the motivation is similar: using an explicit objective to encourage internal state to be a predictive state.

A natural question, then, is whether the types of questions used by GVFNs provides advantages over PSIMs. Unlike $k$-step predictions in the future, GVFs allow questions about outcomes infinitely far into the far, through the use of cumulative discounted sums. Such predictions, though, do not provide high precision about such future events. As motivated in Section \ref{sec_constraining}, GVFs should be easier to learn online. In our experiments, we include a baseline, called a Forecast Network, that uses $k$-step predictions as predictive features, to provide some evidence that GVFs are more suitable as predictive features for online agents.


\section{Experiments in Forecasting} \label{sec:exp_forecasting}

In this section, we compare GVFNs and RNNs on two time series prediction datasets, particularly to ask 1) can GVFNs obtain comparable performance and 2) do GVFNs allow for faster learning, due to the regularizing effect of constraining the state to be predictions.\footnote{All code for these experiments can be found at \url{https://github.com/mkschleg/GVFN}} We investigate if they allow for faster learning both by examining learning speed as well as robustness to truncation length in BPTT.

\paragraph{Datasets}

We consider two time series datasets previously studied in a comparative
analysis of RNN architectures by \cite{bianchi2017anoverview}: the Mackey-Glass
time series (previously introduced), and the Multiple Superimposed Oscillator.

The single-variate {\bf Mackey-Glass (MG)} time series dataset is a synthetic data set generated from a time-delay differential equation:
\begin{equation}
  \partialderivative{y(t)}{t} = \alpha\frac{y(t-\tau)}{1+y(t-\tau)^{10}} - \beta y(t)\label{eq:mg}
  .
\end{equation}
We follow the learning setup in \cite{bianchi2017anoverview}: we set $\tau=17$,
$\alpha=0.2$, $\beta=0.1$, and we take integration steps of size $0.1$. We forecast the target variable $y$ twelve steps into the
future, starting from an initial value $y(0)=1.2$. We generate $\nsamples = 600,000$ samples.

The {\bf Multiple Superimposed Oscillator (MSO)} synthetic time series \citep{jaeger2004harnessing} is defined by the sum of four sinusoids with unique frequencies

\begin{equation}
  y(t) = \sin(0.2t)+\sin(0.311t)+\sin(0.42t)+\sin(0.51t). \label{eq:mso}
\end{equation}
The resulting oscillator has a long period of $2000\pi \approx 6283.19$. Because we generate data using $t\in\Naturals$, the oscillator effectively never returns to a previously seen state. These attributes make prediction difficult with the MSO, as the model cannot rely on memory alone to make good predictions. We generate $\nsamples = 600,000$ samples and make predictions with a forecast horizon of $h=12$.

\paragraph{Experiment Settings}

The focus in this work is on online prediction, and so we report online prediction error. At each step $t$, after observing $o_t = y(t)$, the RNN (or GVFN) makes a prediction $\hat{y}_t$ about the target $y_t$, which is the observation 12 steps into the future, $y_t = y(t+h)$. The magnitude of the squared error $(\hat{y}_t - y_t)^2$ depends on the scale of $y_t$. To provide a more scale invariant error, we normalize by the mean of the target---a mean predictor. Specifically, for each run, we report average error over windows of size 10000 with the mean predictor is computed for each window. This results in $\nsamples/10000$ normalized squared errors, where $\nsamples$ is the length of the time series. We repeat this process 30 times, and average these errors across the 30 runs, and take the square root, to get a Normalized Root Mean Squared Error (NRMSE).

We fixed the values for hyperparameters as much as possible, using the previously reported value for the RNN and reasonable defaults for the GVFN. The stepsize is typically difficult to pick ahead of time, and so we sweep that hyperparameter for all the algorithms.  We attempted to make the number of hyperparameters swept comparable for all methods, to avoid an unfair advantage. We do not tune the truncation length, as we report results for each truncation length $p\in \{1, 2, 4, 8, 16, 32\}$ for all the algorithms.

\paragraph{Algorithm Details}

The GVFN consists of a single layer of size 32 and 128 (for MG and MSO respectively), corresponding to horizon GVFs. As described in Section \ref{sec:case_study}, each GVF has a constant continuation $\gamma^{(j)} \in [0.2,0.95]$ and cumulant $C_{t}^{(j)}=\frac{1-\gamma^{(j)}}{y^{\text{max}}_{t}}y(t)$, where
$y^{\text{max}}_t$ is an incrementally-computed maximum of the observations
$y(t)$ up to time $t$. The GVFs are generated to linearly cover the range
$[0.2,0.95]$. This set is chosen as one of the simplest options that can be used
without much domain knowledge. It is likely not the optimal set of GVFs for the
GVFN, but represents a reasonable default choice.
The GVFN is followed by
a fully-connected layer with relu activations to produce a non-linear
representation, which is linearly weighted to predict the target.
The GVFN layer uses a linear activation, with clipping between [-10,
10], to help ensure state features remain bounded;
again, this represented a simple rather than optimized choice.

The GVFN was
trained using Recurrent TD with a constant learning rate and a batch size of 32. The weights for the fully-connected relu layer and the weights for
the linear output  are trained using ADAM, to minimize the mean squared error
between the prediction at time $t$ and target $y(t+h)$. We swept the stepsize
hyperparameters: the learning rate for the
GVFN $\alpha_{\text{\tiny GVFN}} = N\cdot10^{-k}$ for $N\in\{1,5\}$, $k\in\{3,\ldots,6\}$,
and the learning rate for the fully-connected and output layers $\alpha_{\text{pred}} =N\cdot10^{-k}$ for
$N\in\{1,5\}$, $k\in\{2,\ldots,5\}$.

We compare to RNNs, LSTMs, and GRUs \footnote{We use standard implementations found in Flux \citep{innes2018}.}. The network architecture is similar to the GVFN for all recurrent models. The RNN size is set to 32 for MG and 128 for MSO, while the GRU and LSTM have 8 hidden units for MG and 128 for MSO. Notice how the GRU and LSTM have fewer hidden units than the RNN and GVFN for the MG experiment. This roughly accounts for the increased complexity of the LSTMs and GRUs as compared to the GVFN and RNN. While this was needed to make all the models competitive in MG, we found the GVFNs performed well in MSO even with the same number of hidden units as the GRU and LSTMs.
We trained these models using p-BPTT---specifically with the ADAM optimizer with a batch size of 32---to
minimize the mean squared error between the prediction at time $t$ and $y(t+h)$. We swept the learning rate $\alpha = 2^{-k}$ with $k \in \{1,\ldots,20\}$.
%
%
%
\begin{figure}[t!]
  \center
  \includegraphics[width=0.95\textwidth]{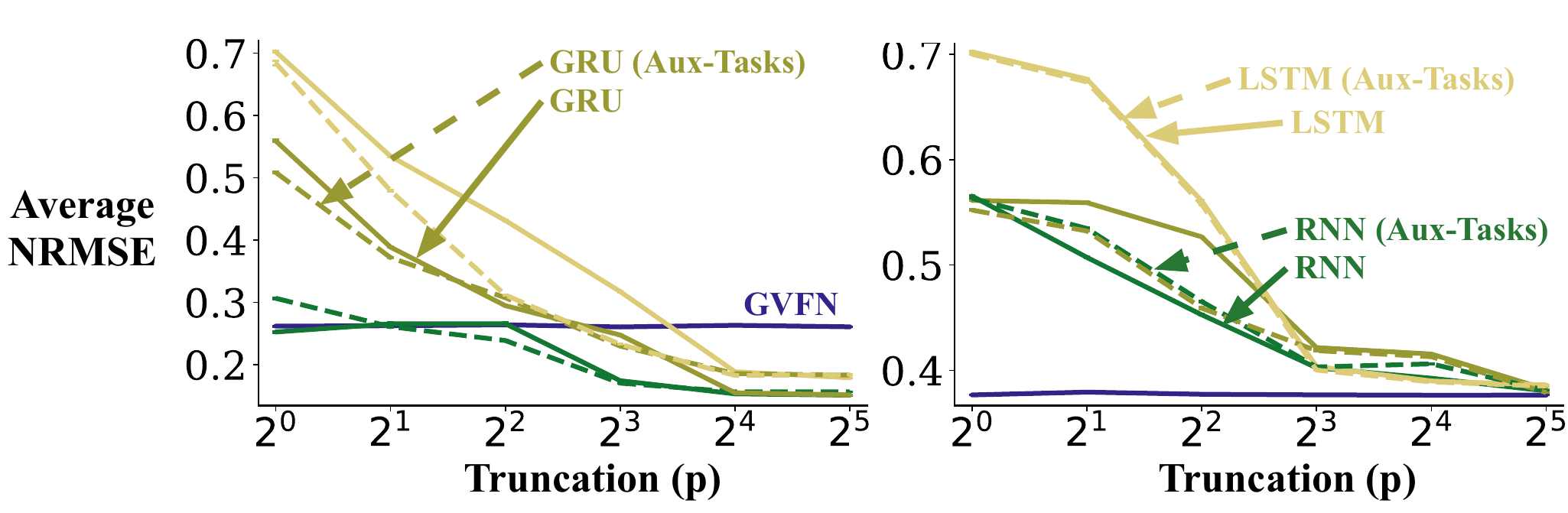}
  \caption{
    Truncation sensitivity for the (\textbf{left}) Mackey-Glass and (\textbf{right}) Multiple Superimposed Oscillator datasets. Errors are calculated using the normalized root mean squared error (NRMSE) averaged over the last 10k steps for the training results $\pm$ 1 standard error over 30 independent runs.
  }\label{fig:timeseries_sens}
\end{figure}

\begin{figure}[t!]
  \centering
  \includegraphics[width=0.95\textwidth]{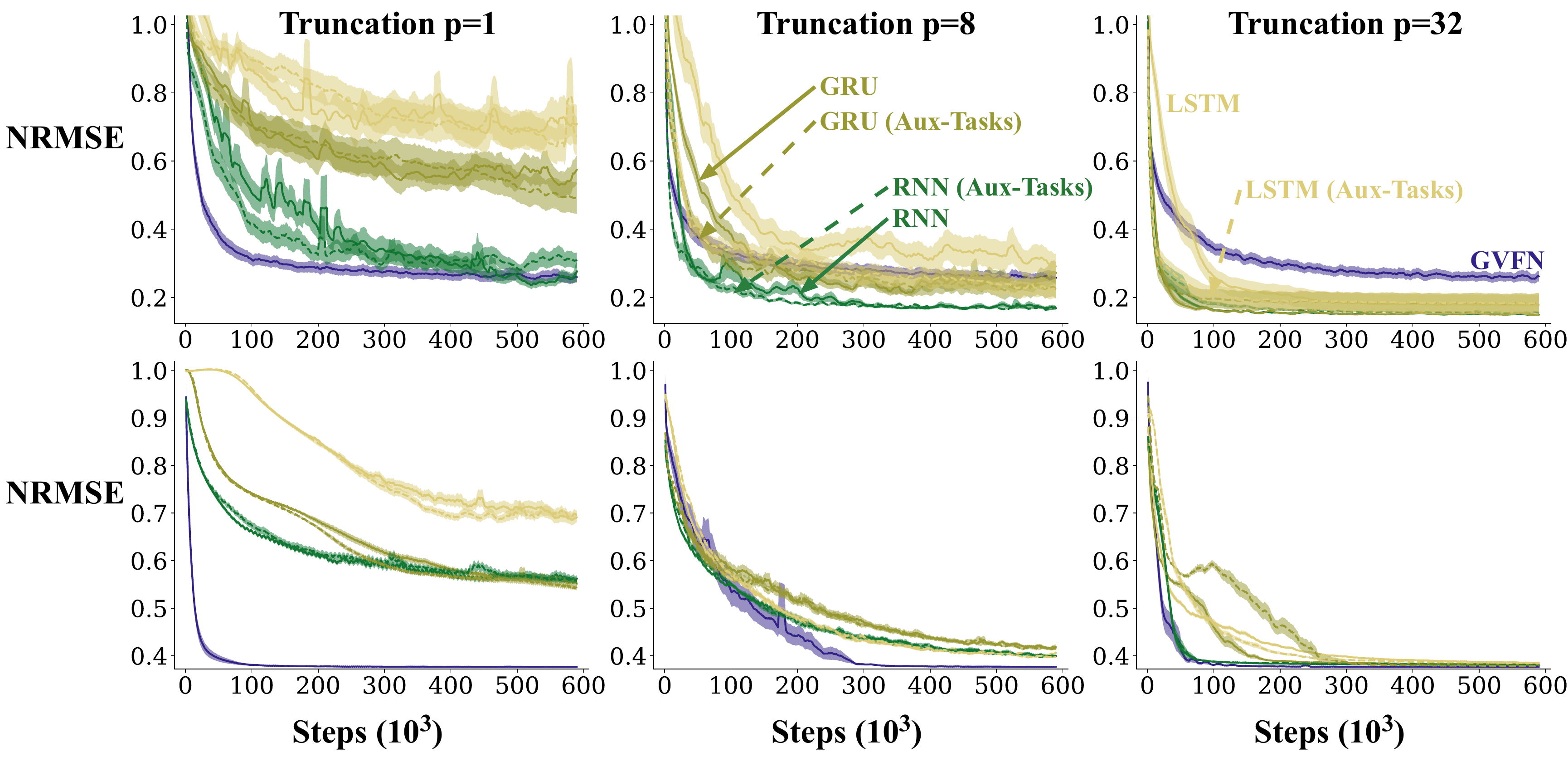}
  \caption{
    Learning curves for the (\textbf{top}) Mackey-Glass and (\textbf{bottom}) Multiple Superimposed Oscillator datasets. We are reporting the normalized root mean squared error (NRMSE) normalized to the performance of the windowed average baseline. We use the average of 30 independent runs $\pm$ the standard error.
  }\label{fig:timeseries_lc}
\end{figure}

Finally, we also compare to RNNs with the 128 GVFs as auxiliary tasks.
The augmented RNN has the same architecture as above, but with an additional set of output heads. The additional GVF heads are the same as those used by the GVFN, and
are trained with TD. The gradient information from the GVFs is back-propagated through the network, influencing the representation. The augmented RNN was tuned
over the same values as the RNN. The goal for adding this baseline is to gauge if there is an important difference in using the GVFs to directly constrain the state, as opposed to indirectly as auxiliary tasks. It further ensures that the RNN is given the same prior knowledge as the GVFN---namely the pertinence of these predictions---to avoid the inclusion of prior knowledge as a confounding factor.

All RNNs and GVFNs include a bias unit, as part of the input as well as in all layers. All methods have similar computation per step, particularly as they are run with the same truncation levels $p$.

\paragraph{Results}
We first show overall results across the truncation level in p-BPTT in Figure
\ref{fig:timeseries_sens}. Three results are consistent across both datatsets: 1) GVFNs can obtain
significantly better performance than RNNs with small $p$; 2) GVFNs are surprisingly robust to truncation level, providing almost the same performance across $p$; and 3) auxiliary tasks in the RNN do not provide consistent benefits across models and datasets. GVFNs provide a strict improvement on the MSO dataset.
The result on MG is more nuanced. As truncation levels increase, the RNN's performance significantly improves and then passes the GVFN. This might suggest some bias in the specification of the GVFs. As is typical with regularization or imposing an inductive bias, it can improve learning---here allowing for much more stable learning with small $p$---but can prevent the solution from reaching the same prediction accuracy. In some cases, if we are fortunate, the inductive bias is strictly helpful, constraining the solution in the right way so as to incur minimal bias but improve learning. In MSO, it's possible the GVF specification was more appropriate and in MG less appropriate.


To gain more detailed insight into the behavior of the algorithms across truncation levels, we show learning curves for $p \in \{1, 8, 32\}$ in Figure \ref{fig:timeseries_lc}. All the approaches learn more slowly for $p =1$, but the RNNs are clearly impacted more significantly. In MSO, the GVFN has a clear advantage in terms of learning speed. This is not true in MG, where once $p \ge 8$, the RNN performs better and learns faster. The GVFN objective here may actually be difficult to optimize, but it allows the agent to make progress constructing a useful state, whereas the signal from the error to the targets is insufficient.

\section{Investigating Performance under Longer Temporal Dependencies}\label{section:experiments:compassworld}

\begin{figure}[t]
  \center
  \includegraphics[width=0.95\textwidth]{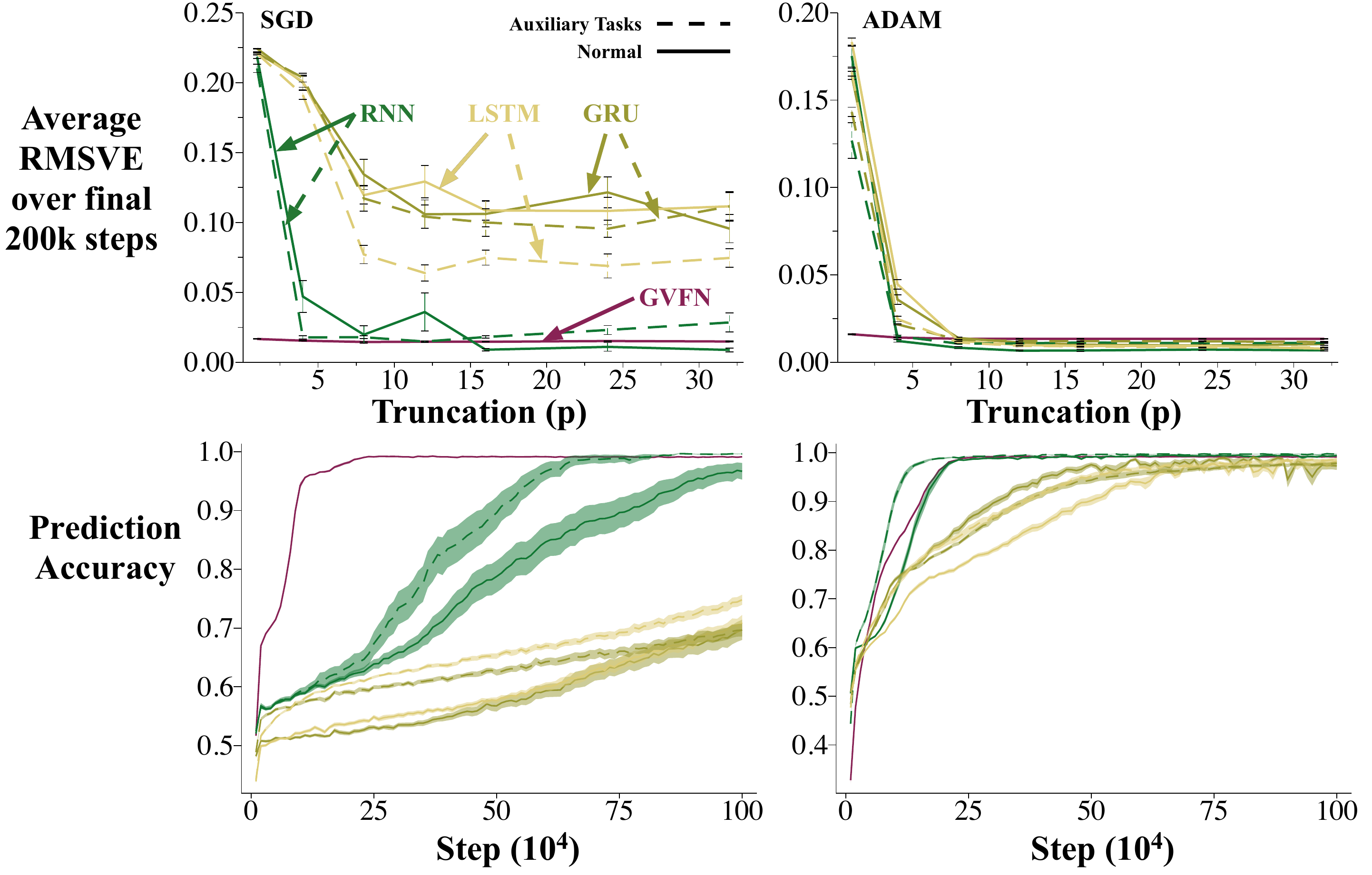}
  \caption{Results averaged over 30 runs $\pm$ one standard error. The dashed lines correspond to each RNN type augmented with auxiliary tasks, namely here the terminating horizon GVFs. The plots on the \textbf{(left)} are for a constant learning rate swept in range $\{0.1\times1.5^i; i \in [-10, 5]\} \cup \{1.0\}$. The plots on the \textbf{(right)} are for the ADAM optimizer with learning rate swept in range $\{0.01\times 1.5^i; i \in \{-18, -16, \ldots, 0\}\}$. The \textbf{(top)} row shows sensitivity over truncation measured by the average root mean squared value error (RMSVE) over the final 200000 steps of training. The \textbf{(bottom)} row shows learning curves for $p=4$ for prediction accuracy. We check if the prediction is correct by predicting the color of five with the highest GVF output, where the GVF prediction corresponds to a probability of facing that wall. When averaged over a window (10000 steps in our case) this results in a percentage of correct predictions during that time span.
  } \label{fig:compass}
\end{figure}

In this section, we investigate the utility of constraining states to be predictions, for an environment with long temporal dependencies. We use Compass World, introduced in Section \ref{GVFNs} (see Figure \ref{fig:compass_world_env}), which can have long temporal dependencies, because the random behavior can stay in the center of the world for many steps, observing only the color white.
The observation is encoded with two bits per color: one to indicate the agent observes that color, and the other to indicate another color is observed. The behavior policy chooses randomly between moving one-step forward; turning right/left for one step; moving forward until the wall is reached ({\em{leap}}); or randomly selecting actions for $k$ steps ({\em{wander}}). The full observation vector is encoded based on which action was taken, and includes a bias unit.

We chose five hard-to-learn GVFs with predictions corresponding to the wall the agent is facing. These predictions are not learnable without constructing an internal state. These five questions correspond to leap questions. The leap question is defined as having a cumulant of 1 in the event of seeing a specific wall (orange, yellow, red, blue, green), and a continuation function defined as $\gamma = 0$ when any color is observed---when the agent is facing a wall---and $\gamma = 1$ otherwise.

We use the same architecture for both RNNs and GVFNs; the main difference is that for the GVFN we constrain the hidden state to be GVF predictions. The GVFN uses 40 GVFs: 8 GVFs per color. The 8 GVFs for a color correspond to \textbf{Terminating Horizon} GVFs. This means that they have a cumulant of 1 when seeing that color, and zero otherwise; they have a $\gamma = 1- 2^k$ for one of 8 $k \in \{-7, -6, \ldots, -1\}$; they terminate---$\gamma$ becomes zero---when any color is observed; and the policy is to always go forward. These GVFs are similar to the horizon GVFs in time series prediction, except that termination occurs when a wall is reached and the policy is off-policy.
The RNN similarly uses 40 hidden units for the recurrent layer. For RNNs, we use the hyperbolic tangent and the sigmoid function for GVFNs. We used sigmoids instead for GVFNs, because the returns are always nonnegative; otherwise, these two activations represent a similar architectural choice.

We found treating the input action $a_t$ specially significantly improved performance of both the RNN and GVFN. This is done by specifying separate weight vectors $\{w_a \in \mathbb{R}^n ; \forall a \in \mathcal{A}\}$ for each action the agent can take. The hidden state is then calculated as $\svec_{t+1} = \sigma(w_{a_t}^\trans[\xvec_{t+1}, \svec_t])$, where $\sigma$ is the activation function. For the GRUs and LSTMs, this architectural modification is not straightforward; instead we pass the action as a one-hot encoding.

All the approaches share the same structure following the recurrent layer. The state $\svec_t$ is passed to a 32-dimensional hidden layer with relu activation, and then is linearly weighted to produce the predictions for the five hard-to-learn GVFs: $\hat{\mathbf{y}}_t = \text{relu}(\svec_t^\top \mathbf{F}) \mathbf{W}$ where $\mathbf{F} \in \RR^{40 \times 32}$ and $\mathbf{W} \in \RR^{32 \times 5}$. All methods include a bias unit on every layer.

The performance for increasing $p$, as well as learning curves for $p = 8$, are show in Figure \ref{fig:compass}. Again, we obtain a several clear conclusions. 1) The GVFN is again highly robust to truncation level, reaching almost perfect accuracy with $p =1$. 2) The GVFN can learn noticeably faster with smaller $p$, such as $p = 4$, and the differences disappear for larger $p$. 3) The auxiliary tasks do not provide near the same level of benefit as the GVFN, though unlike the time series results, there does in fact seem to be some benefit. 4) All the methods are improved when using ADAM---especially the LSTMs and GRUs---though GVFNs are effective even with constant stepsizes.

\section{Investigating Poorly Specified GVFNs}\label{section:experiments:poorlyspecified}

In the previous Compass World and Forecasting experiments, the GVFNs were robust to truncation. In fact, computing one-step gradients was sufficient for good performance. A natural question is when we can expect this to fail. We hypothesize that this robustness to truncation relies on appropriately specifying the GVFs in the GVFN. Poorly specified GVFs could both (a) make it so that the GVFN is incapable of constructing a state that can accurately predict the target and (b) make training difficult or unstable. In this section, we test this hypothesis by testing several choices for the GVFs in the GVFN in Compass World.

We consider three additional GVFN specifications: two that include intentional (but realistic) misspecifications and one that should be an improvement on the Terminating Horizon GVFN. The first misspecification, which we call the \textbf{Horizon} GVFN, causes the hidden states to have widely varying magnitudes. These GVFs are similar to the Terminating Horizon GVFs, except that they do not include termination when a color is observed. This means the true expected returns can be quite large, up to $\tfrac{1}{1-\gamma}$ (e.g.,  $\frac{1}{1-0.99} = 100$) if the agent is already immediately in front of the wall with that color. The policy is to go forward, and so if the agent is already facing the wall and receives a cumulant of 1, it will see a 1 forever onward, resulting in a return of $\sum_{i=0}^\infty \gamma^i = \tfrac{1}{1-\gamma}$.

The second misspecification provides a minimal set of sufficient predictions, but ones that are harder to learn. A natural choice for this is to use the five hard-to-learn predictions themselves, which is clearly sufficient but may be ineffective because we cannot learn them quickly enough to be a useful state. We call this the \textbf{Naive} GVFN, because it naively assumes that representability is enough, without considering learnability.

Finally, we also consider a specification that could improve on the more generic Terminating Horizon GVFN, that we call the \textbf{Expert Network}. This network also has 40 GVFs, but ones that are hand-designed for Compass World. This GVFN is a modified version of the TD network designed for Compass World \citep{sutton2005temporal}. The GVFs are defined similarly for the 5 colours. There are 3 myopic GVFs: a myopic GVFs consists of a myopic termination ($\gamma = 0$ always) and a cumulant of the color bit. Each myopic GVF has a persistent policy, which takes one action forever. Since there are three actions there are three myopic GVFs. These myopic GVFs indicate whether the agent is right beside the color (ahead, to the left or to the right). There is 1 leap GVF where the policy goes forward always, the cumulant is again the color bit and $\gamma=1$ except when a color is observed, giving $\gamma =0$. There are 2 GVFs with a persistent policy (left, right) with myopic termination and a cumulant of the previous leap GVF's. These compositional GVFs let the agent know if they were to first turn right (or left) and then go forward, would they see the color. There are 2 leap GVFs with cumulants of the myopic GVFs. Finally, there is 1 GVF with uniform random policy with $\gamma = 0$ at a wall event and $\gamma = 0.5$ otherwise.

As a baseline, we also include what we call a \textbf{Forecast} network, which uses $k$-horizon predictions for the hidden state instead of GVFs. The architecture of the Forecast network is otherwise the same as the GVFN. We use a set of horizons $\mathcal{K} = \{1, 2, \ldots, 8\}$, for each of the non-white observations, resulting in a hidden state size of 40. To train these networks online we keep a buffer of $p+\max(\mathcal{K})$ observations, using the first $p$ observations in the BPTT calculation and the next $k$ observations to determine the targets of the network. We then recover the most recent hidden state to train the evaluation GVFs as we would with the RNN and GVFN architectures. More specifically, at time step $t$, we update state $\svec_{t-k}$ with observations $\mathbf{o}_{t-k+1}, \ldots, \mathbf{o}_{t}$.

\begin{figure}[t]
  \center
  \includegraphics[width=0.95\textwidth]{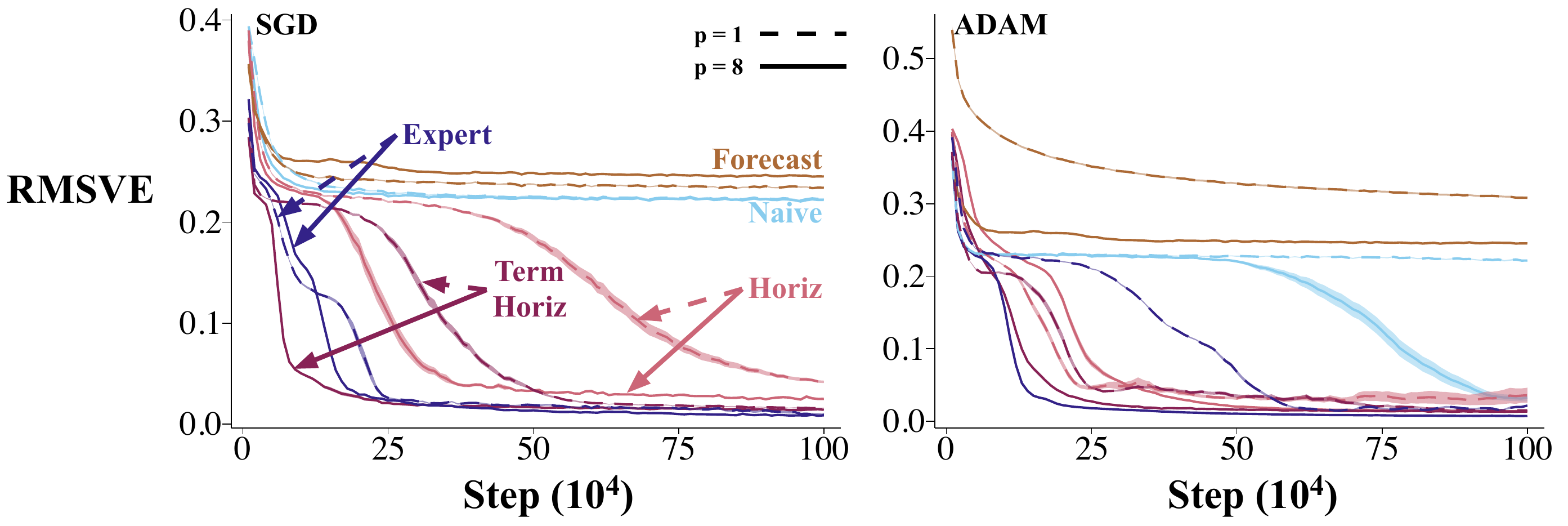}
  \caption{
  Learning curves for \textbf{(dashed)} $p=1$ and \textbf{(solid)} $p=8$ for various GVFN specifications and the Forecast networks. The GVFN is labeled TermHorizon, to highlight that it is composed of terminating horizon GVFs. Learning rates were chosen as in Figure \ref{fig:compass}, where the left plot corresponds to using a constant stepsize and the right to using the ADAM optimizer. The errors were averaged over 30 independent runs, to get the final learning curves $\pm$ standard error.}\label{fig:compass_poor}
\end{figure}

Learning curves for all the GVFN specifications, as well as the Forecast network, with $p = 1$ and $p = 8$, are reported in Figure \ref{fig:compass_poor}. The results indicate that the specification can have a big impact. The two misspecified GVFNs perform noticeably worse than the Terminating Horizon GVFN. As expected, the Naive GVFN is eventually able to learn, with enough steps, $p = 8$ and the ADAM optimizer. It is sufficient to obtain a good state, but poor learnability prevents it from playing a useful role. The Horizon GVFN, which has potentially high magnitude GVF predictions, is closer in performance to the Terminating Horizon GVFN, but clearly worse. The Expert GVFN, on the other hand, can get to a lower error, though it does not have a clear advantage in terms of learning speed or robustness to $p$; this slower learning could again be potentially due to the fact that these expert GVFs were more difficult to learn than the simpler terminating horizon GVFs. Finally, the Forecast network performed very poorly. This is not too surprising in this environment. When considering a $k$-horizon prediction, the target is often zero, with the occasional one. This is generally a hard learning problem, as the resulting prediction loss does not provide a useful constraint. These results clearly show specifying the GVFs used to constrain the hidden state is an important consideration when using GVFNs, and could be the difference between learnable and not learnable representations.

\section{Comparing Recurrent GTD and Recurrent TD}

TD networks with a simple TD network update rule---no backprop through time---have been shown to have divergence issues on a simple six-state domain, called Ringworld~\citep{tanner2005temporal}. In fact, Gradient TD networks \citep{silver2012gradient} were introduced precisely to solve this problem. Because GVFNs are a strict generalization of TD networks, we can set the GVFN to get the same problematic setting if we use a simple TD update (RTD with $p=1$). This raises a natural question of if Recurrent TD (RTD) similarly has divergence issues, and if we need to use Recurrent GTD (RGTD).

In all of our experiments so far, we have opted for the simpler RTD algorithm, rather than the full gradient algorithm RGTD, because empirically we found little difference between the two. RTD, unlike the simple TD update rule, does in fact compute gradients back-in-time, and so should be a more sound update. Further, once we use truncated BPTT, even RGTD is providing a biased estimate of the gradient. But nonetheless RTD---which is built on the semi-gradient TD update---does drop more of the gradient than RGTD. It is likely that RGTD is needed in some cases. But it is possible that for most settings, RTD provides a reasonable interim choice between the simple TD network learning rule, and the more complex RGTD.

In this section, we test RTD and RGTD on Ringworld, to see if they perform differently on this known problematic setting.  Note that for $p=1$, RTD reduces to the simple TD network learning rule, and so we expect poor performance.

Ring World is a six-state domain~\citep{tanner2005temporal} where the agent can move left or right in the ring. All the states are indistinguishable except state six. The observation vector is simply a two bit binary encoding indicating if the agent is in state six or not. The agent behaves uniformly randomly. The goal is to predict the observation bit on the next time step. The environment itself is not too difficult for state-construction; rather a particular TD network causes divergence from the simple TD update rule. The corresponding GVFN consists of two chains of compositional GVFs: one chain for always go right and one chain for always going left. In the first chain, the first GVF is a myopic GVF, that has as cumulant the observed bit after taking action Right, with $\gamma =0$. This first GVF predicts the observation one step into the future. The second GVF has the first GVFs prediction as a cumulant after taking action Right, with $\gamma = 0$. This second GVF predicts the observation two steps into the future. There are five GVFs in each chain, for a total of 10 GVFs in the GVFN.

\begin{figure}[t]
  \center
  \begin{subfigure}{0.55\textwidth}
    \includegraphics[width=\textwidth]{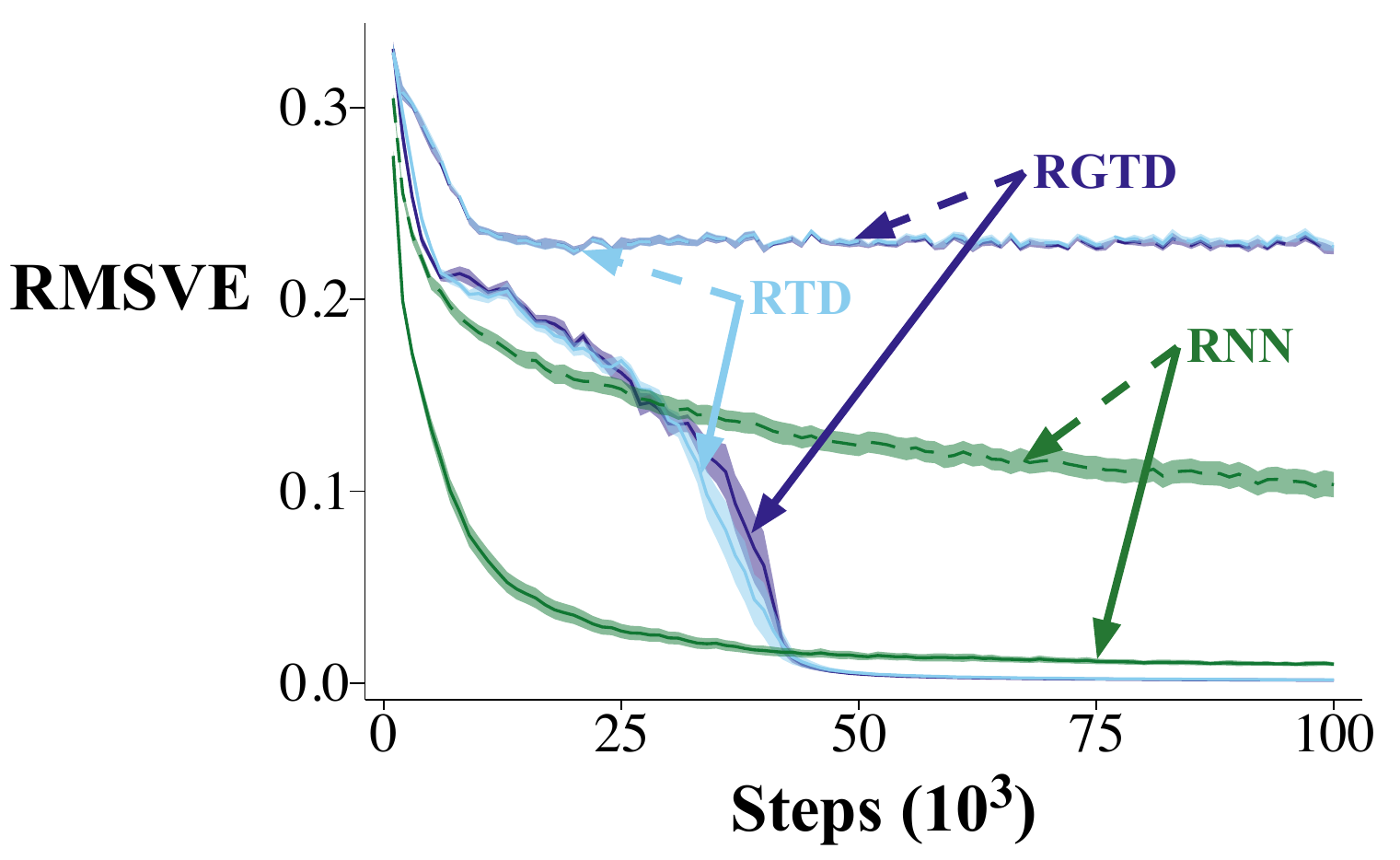}
  \end{subfigure}
  
  \caption{
    Learning curves for $p=1$ and $p=2$ averaged over 10 runs with fixed window smoothing of 1000 steps, in the Ringworld environment. Learning rates chosen from a sweep over $\alpha \in \{0.1\times1.5^i; i\in\{-10, -9, \ldots, 6\}\}$ for the RNN and learning rates $\alpha \in \{0.1\times1.5^i; i\in\{-6, -9, \ldots, 8\}\}$ and $\beta = \{0.0, 0.01\}$ corresponding to RTD and RGTD respectively. All approaches needed only $p = 2$ to learn, including the baseline RNN included for comparison.
  }\label{fig:ring}
\end{figure}

Figure \ref{fig:ring} shows the results of the Ring World experiments for truncation $p=1$ and $p=2$. The GVFNs for both RTD and RGTD needed only $p \ge 2$ to learn effectively. We also include a baseline RNN of the same architecture, that indicates that the GVFN specification does negatively impact performance. But, with even just $ p = 2$, any convergence issues seem to disappear. In fact, RTD and RGTD perform very similarly. The fact that Ringworld is not problematic for RTD is by no means a proof that RTD is sufficient, especially since Ringworld was designed to be a counterexample for the simple TD network update not for RTD. But, it is one more datapoint that RTD and RGTD perform similarly. In future work, we will be investigating a counterexample for RTD, to better understand when it might be necessary to use RGTD.


\section{A Discussion on Discovering GVFs for the GVFN}\label{sec_discovery}

In this work, we were constrained to hand-designed GVFs for the GVFNs. While we were able to show several benefits of the framework, this limitation is apparent in Section \ref{section:experiments:poorlyspecified} where we found that poorly specified GVFs---where we intentionally picked GVFs to have magnitude issues or that were difficult to learn---made the GVFN perform poorly as compared to the RNN. This outcome highlights the importance of the next question about how to improve the selection of predictive questions for a GVFN, and how to make this discovery process automatic and situated in the agent's stream of experience. An approach to discovery will also enable GVFNs to be applied to problems in which a set of GVF questions is not immediately apparent, and problems where our simple heuristic methods would create a set too large to manage computationally.

Previous approaches to discovery in predictive representations have focused on finding a set of predictions that would enable the agent to answer all predictive questions accurately. This objective is trying to find a sufficient statistic of the history for all predictions, and has been discussed in various forms \citep{subramanian2020approximate}. This is the approach typically taken in PSRs and a usual criteria when approaching a POMDP problem. This criteria falls naturally from the POMDP specification, where the assumption is there is a true underlying latent state which the agent can determine from enough interactions with the system.  We conjecture that finding such a state is not feasible in large complex problems, and searching for such a state would be a poor use of a finite set of computational resources. Instead, the agent should focus on finding a set of questions which is useful for the agents overarching goals---for example, maximizing the return in the control problem.

In the following section, we describe several prior approaches to discovery applicable to the GVFN framework, develop a simple approach to a discovery framework for future testing, and discuss various ways of specifying GVFs by hand for the GVFN.

\subsection{Previous Approaches}

There are two main families of approaches to discovery of GVFs for GVFNs: generate-and-test and gradient descent.

{\bf Generate and test}
is a natural algorithmic approach when considering a search problem through a complex unordered (or not obviously ordered) space. The core of the approach is to propose GVFs through a generator and approximate their utility for the downstream task through a proxy measure. This approach has been used for representation discovery \citep{mahmood2013representation,javed2020learning}. The simplest setting where such a generate-and-test approach could be used is time series forecasting, as the predictions are on-policy and so policies do not have to be proposed by the generated. Further, practitioners can apply their prior knowledge in creating the cumulant and continuation functions considered by the generator. There are, however, some simple strategies for generating policies, which we discuss in Section \ref{sec_simple_disc}.

A generate and test algorithm has been developed for TD networks \citep{makino2008line}. The process of discovery involves creating new predictions built entirely from existing structures: senses or predictions. By building new predictions from existing predictions, it facilitates the creation of compositional structures. The system proposed in \citeA{makino2008line} determines when a node (i.e. a prediction or sense) should be expanded on using three criteria. They then expand these nodes in specific ways to ask a broad set of compositional questions. TD networks do not include policies---rather they include action primitives---so the approach does not directly extend. However, the idea of iteratively creating such compositional structures does extend. For example, in this work, the expert network considered in Section \ref{section:experiments:poorlyspecified} was composed of compositional GVFs. Compositional GVFs could be generated simply by using existing GVFs as the cumulant for the new GVF.

{\bf Meta-gradient descent} uses gradient descent to learn meta-parameters that affect learning performance. The meta-parameters could correspond to initialization of a model for later fine tuning \citep{finn2017model}, a set of GVF auxiliary tasks to improve representation learning in Atari \citep{veeriah2019discovery} or parameterized options \citep{bacon2017theoption}. This approach splits the problem into two optimization problems: an inner problem and an outer problem. The inner optimization consists of the usual control or prediction procedure, where the agent seeks to maximize the discounted return or lower prediction error. The outer optimization calculates gradients through this procedure, with respect to the meta-parameters.

For example, to learn a set of GVFs as auxiliary tasks, \citeA{veeriah2019discovery} parameterized the cumulant and continuation functions. They did not need to parameterize the policies for the GVFs as they assumed on-policy prediction: the policy $\pi$ for the GVF is the current policy. These meta-parameters are optimized in the outer loop to produce auxiliary tasks that improve control performance in the inner loop. For our setting, we could similarly parameterize GVF questions, including the policy. This meta-gradient approach was reasonably effective for discovering GVFs as auxiliary tasks, though the procedure is expensive and has some trainability issues. Nonetheless, it is a reasonable direction for pursuing discovery for GVFNs.

\subsection{Investigating a Simple Generate and Test Strategy for GVF Discovery} \label{sec_simple_disc}

\begin{wrapfigure}{R}{0.5\textwidth}
  \centering
  \includegraphics[width=0.4\textwidth]{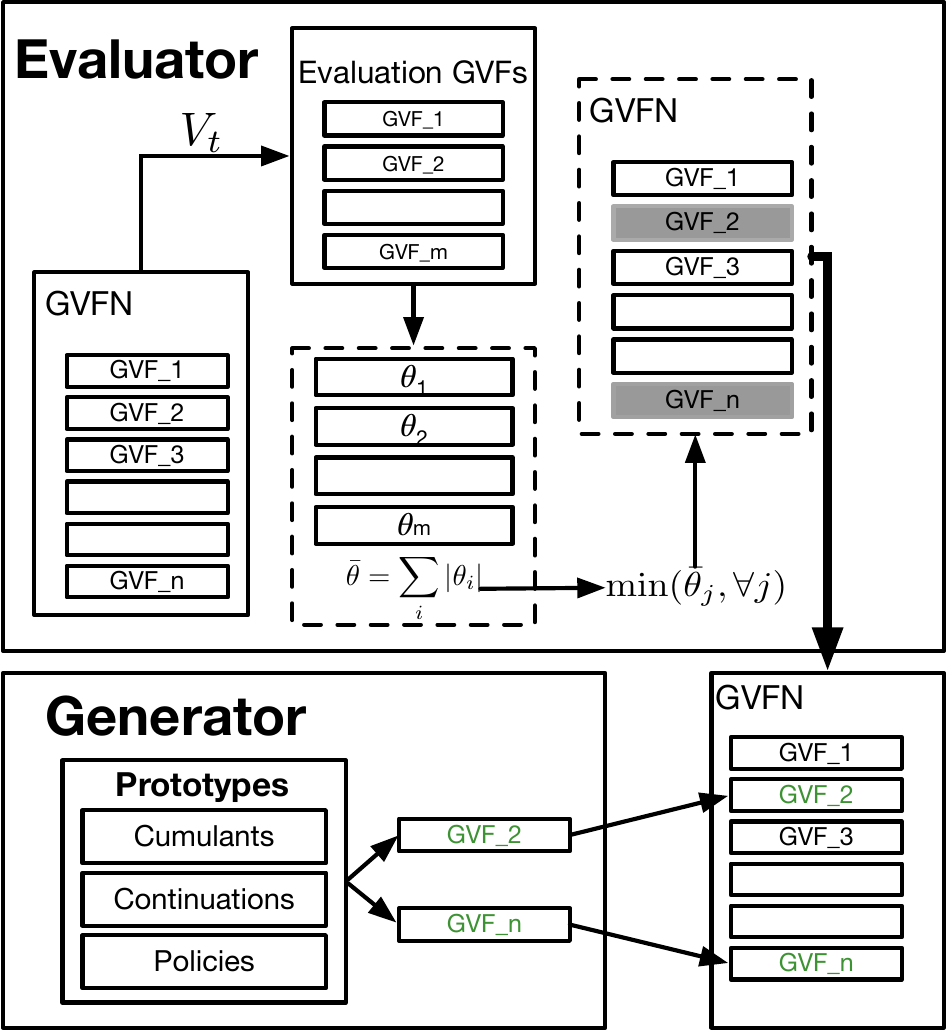}
  \caption{The discovery framework.}\label{fig_discovery}
\end{wrapfigure}
We base this simple discovery framework on algorithms described for representation
search \citep{mahmood2013representation} focusing on two main components: an
evaluator, and a generator. The evaluator is responsible for testing GVFs and
removing unused GVFs. The generator proposes new GVFs from a set of possible
GVFs. We summarize our framework in Figure \ref{fig_discovery}. The key questions are how GVFs are evaluated and how new ones proposed. Our goal here is simply to demonstrate one avenue for discovery in GVFNs, rather than to develop an algorithm for discovery; we therefore opt for what we believe are some of the simplest choices.

To evaluate the usefulness of a GVF we look at the magnitude of the associated weight in the external tasks using the GVFN. We assume the state vector is used linearly to make predictions, with $\theta_j$ corresponding to state $s_j$ and so to the $j$th GVF. We evaluate all the GVFs every $N \in \Naturals$ steps and prune the lowest $\epsilon \in [0,1]$ percentage, i.e., prune $\lfloor n \epsilon \rfloor$ least useful GVFs of the full set of $\numgvfs$ GVFs. Other criteria have been proposed for evaluation, such as using traces of the weight magnitudes and considering internal weights \citep{mahmood2013representation}. As mentioned above, we opt for the simplest choice that is still reasonably effective.

We generate new GVFs randomly from a set of GVF primitives. We define a set of basic types of cumulants, continuations and policies from which to randomly sample. For continuations, we consider
\textit{myopic discounts} ($\gamma = 0$),
\textit{horizon discounts} ($\gamma \in (0,1)$) and
\textit{terminating discounts} (the discount is set to $\gamma \in (0,1]$ everywhere, except for at an
event, which consists of a transition $(o, a, o')$).
For cumulants, we consider
\textit{stimuli cumulants} (the cumulant is one of the observations,
or taking on 0 or 1 if the observation fulfills some criteria (e.g. a threshold))
and \textit{compositional cumulants} (the cumulant is the prediction of another GVF).
We also use \textit{random cumulants} (the cumulant is a random number generated from a zero-mean Gaussian with a random variance sampled from a uniform distribution); we do not expect these to be useful, but rather use it to define what we call a dysfunctional GVF to test pruning.
For the policies, we propose \textit{random policies} (an action is chosen at random) and
\textit{persistent policies} (always follows one action).

The resulting GVF primitives consist of a triplet $(c, \gamma, \pi)$ where each is randomly chosen from these basic types. For example, a randomly generated GVF could consist of a myopic continuation, a stimuli cumulant on observation bit one and a random policy. This would correspond to predicting the first component of the observation vector on the next step, assuming a random action is taken. As another example, a randomly generated GVF could consist of a termination continuation with $\gamma = 0.9$, a stimuli cumulant which is 1 when the observation is zero and is otherwise zero otherwise and a persistent policy with action forward. This GVF corresponds to predicting the likelihood of seeing the observation change from active (`1') to inactive (`0'), given the agent persistently moves forward, within the horizon of about $(1-\gamma)^\inv = 10$ steps.

We could also have considered parameterized continuations, cumulants and policies and randomly sample from that set. This set, however, is large. The GVF primitives can be seen as a prior over the full set of GVFs, which is too large from which to randomly generate. Without this prior we expect the discovery approach to still work but to take even longer than the experiments we present here.

We evaluate the performance of our system on two experiments in Compass World \citep{sutton2005temporal}. Both experiments use the five hard-to-learn GVFs as the targets for the GVFN, introduced in Section \ref{section:experiments:compassworld}. These questions correspond to a question of ``which wall will I hit if I move forward forever?''.
The first experiment, Figure \ref{fig:compass_disc} (left), provides a sanity check that the evaluation strategy prunes dysfunctional representational units. We initialize the GVF network with 200 GVFs: 45 used to form the expert crafted TD network \citep{sutton2005temporal}, and 155 defective GVFs predicting noise $\sim \mathcal{N}(0,\sigma^2)$. We report the learning curve and pruned GVFs over 12 million steps. The second experiment, Figure \ref{fig:compass_disc} (right), uses the full discovery approach to find a representation useful for learning the evaluation GVFs. We report the learning curves of the evaluative GVFs over 100 million steps.

These experiments have many similarities to the experiments above, but there is one key differences worth noting. Instead of using RTD or RGTD, we used TD($\lambda$); see Appendix \ref{app_tdlambda} for the update equations. We found that this was sufficient to learn the expert network specification in a reasonable number of steps, and is significantly simpler than the other algorithms. Note that we did not use this algorithm in the above comparisons with RNNs, for two reasons. First in the cases where the target was not a return, it is not possible to use eligibility traces, as they are designed for predicting expected returns. Second, as far as we are aware, the eligibility trace calculation for neural networks with several output nodes has not been formally derived nor tested.

\begin{figure}[t]
  \centering
  \includegraphics[width=\textwidth]{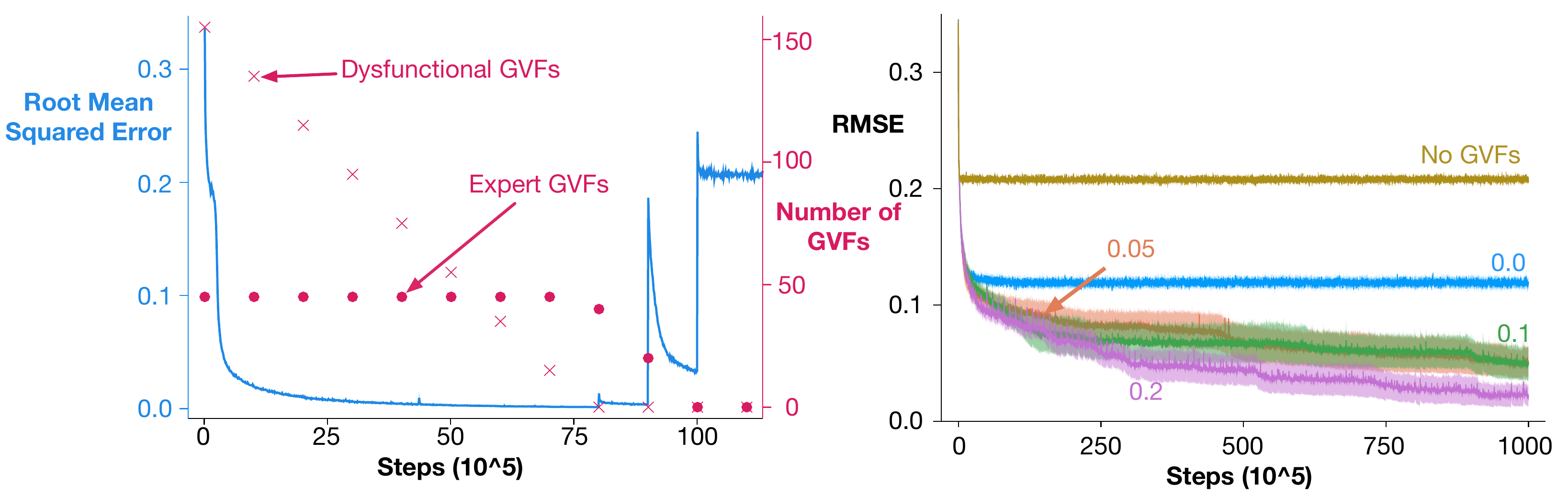}
  \caption{\textbf{(left)} Pruning predictive units occurs every million steps with no regeneration $\alpha=0.001, \lambda=0.9, \epsilon=0.1, \sigma^2 = 1$ \textbf{(right)} Learning curves of the evaluative GVFs $N = 1000000, \epsilon=\text{labeled}, \alpha=0.001, \lambda=0.9, n = 100$, over 5 runs with standard error denoted by the shaded region.
    }\label{fig:compass_disc}
\end{figure}

The results indicate that even a simple generate and test approach can be effective for discovery in GVFNs. The first figure shows that the pruning approach gradually removes the dysfunctional GVFs, without pruning the expert GVFs. Eventually, once the agent has mostly removed all the dysfunctional GVFs, it is then forced to prune the expert GVFs and prediction performance begins to drop. Of course, in practice, the agent would not prune all its GVFs; in this experiment we simply continue the pruning until the end to avoid biasing when we stop the agent.
The second plot shows that iteratively pruning and generating new GVFs significantly improves on using an initial random set. For $\epsilon = 0.2$, which means about 20\% of GVFs are pruned in each pruning phase, the prediction error continues to decrease until it almost reaches 0 and is almost as good as the set of hand-design GVFs used in previous experiments.

The goal of this experiment was to answer: is it possible to discover useful GVFs for a GVFN, even in simple settings? A negative answer would mean that GVFNs might have limited applicability. A  demonstration that it is possible provides some evidence that this is a tractable problem for which even simple solutions can help us make traction. This demonstration, however, by no means shows an ideal or even efficient algorithm and there is ample room for improvement. Primarily, the random generation strategy does not take into account the current set of proposed predictions, potentially resulting in redundancy. A more principled method would look to generate a wide variety of predictions dependent on the current set of predictions.
Another issue is the proxy used to determine a prediction's usefulness. Currently, the system will prune GVFs that are not directly useful, even if they are the cumulant for a useful GVF. The cumulant for the useful GVF is replaced by a new random GVF. This could reduce the quality of the predictive state or cause other instabilities within the GVFN. A simple approach is to define usefulness based also on compositional utility, not just on utility for the prediction task. The usefulness of a GVF should be higher if it is used by a GVF that is itself heavily relied on for accurate predictions, versus if it is only used by less useful GVFs.

\subsection{Heuristics to specify GVFNs}

Through testing GVFNs in several domains we have developed some rules of thumb for choosing GVFs which can be used today. In our time-series experiments, we found selecting GVFs with constant $\gammaj{j} \in [1 - 2^{-j}]$ to be surprisingly effective across the settings with fixed policies---namely the time series datasets. This is encouraging as these specifications on the surface seem simpler to discover than something as complex as the Expert network in Compass World. A set of discounts selected linearly across a range was also effective. We also found that including GVFs which have a pseudo-termination at a known event (known due to expert knowledge) and a cumulant which is only active at this event improved learning performance considerably (see the performance of the Terminating-Horizon network in Section \ref{section:experiments:poorlyspecified}).

\section{Discussion and Conclusions}

In this work, we made a case for a new recurrent architecture, called GVF Networks.
GVFNs constrain the hidden layer to correspond to predictions about the future, and so can be seen as a regularized or constrained RNN architecture. We first derive a sound fixed-point objective for these networks. We then show in experiments that GVFNs can outperform various RNN architectures with a much smaller truncation in BPTT. We demonstrated this phenomena on time series data as well as a RL prediction environment designed to have longer term dependencies. The goal of the paper was to further investigate the predictive representation hypothesis, where we asked if it is useful for trainability to restrict hidden states to be predictions. The work provided simpler algorithms than previous related work, such as TD networks, to test this hypothesis, as well as some evidence that restricting hidden state to be prediction can be beneficial. We finally investigated the impact of the specification of these predictions, and demonstrated that careful curation---an expert set---of GVFs could improve performance, but that relatively simple heuristics were also quite effective. We also found, though, that poorly specified GVFs---where we intentionally picked GVFs to have magnitude issues or that were difficult to learn---made the GVFN perform poorly as compared to the RNN.


In addition to trainability, constraining features to be predictions has other potential benefits we did not directly demonstrate in this work, primarily for transfer and adapting to changes in the environment. Predictive features can be useful for transfer because they can encode knowledge about dynamics that remain consistent, even when the agent has to make a new prediction or find a policy for a different reward function. Further, forgetting is a known problem with neural networks when transferring to new problems \citep{mccloskey1989catastrophic,kemker2018measuring,maltoni2019continuous}; by primarily using the GVF prediction loss for the state update, it could alleviate some of these forgetting issues.
The predictions learned in the state could also provide important information about the agent's experience, such as features that predict surprise \citep{gunther2018predictions}, knowledge about the variability of the environment \citep{sherstan2018directly}, and other various statistics \citep{modayil2014multi,white2015thesis,sherstan2020representation}.

A related point is that predictive features could facilitate adapting more quickly when (part of) the world changes. There are two reasons for this. First, even if the targets for a GVF change, the GVF itself might still be a useful prediction to use in the state. By directly updating state to correspond to predictions, the features update quickly to respond to the change. This is in contrast to an RNN, where the feature update is more indirect. Second, even if parts of the world change, many predictions may remain accurate and pertinent. These features will remain stable even under the change, and only a smaller number of features, that need to change, will change. This property allows us to better re-use previous learning and promote stability.


In addition to testing these potential benefits as a next step, there are a few algorithmic extensions that are promising as well.
The architecture proposed here is only tested in the online prediction setting, but we expect to see similar benefits in other settings in which RNNs are employed. One of particular interest is in applying GVFNs to the control setting. This can be easily done through having the final network output be a state-action value function as in a Deep Q-Network \citep{mnih2015human,hausknecht2015deep}, or by using the state of the GVFN as input to an actor-critic algorithm such as Impala \citep{espeholt2018impala}. Predictive representations built using GVFs have been shown to be advantageous in real-world control applications, such as autonomous driving \citep{graves2020learning}, and we expect GVFNs to share similar properties in settings where state construction is necessary.

Finally, in this work, we constrained ourselves to GVFNs where all hidden states are predictions. A natural extension is to consider a GVFN that only constrains certain hidden
states to be predictions and otherwise allows other states to simply be set to
improve prediction accuracy for the targets. This modification could provide the improved stability of GVFNs, but improve representability. Additionally,
GVFNs could even be combined with other RNN types, like LSTMs, by simply
concatenating the states learned by the two RNN types. Overall, GVFNs provide a complementary addition to
the many other RNN architectures available, particularly for continual learning systems with long temporal dependencies; with this work, we hope to expand interest and investigation further into these promising architectures.

\section{Acknowledgements}
We would like to thank the Alberta Machine Intelligence Institute, IVADO, NSERC and the Canada CIFAR AI Chairs Program for the funding for this research, as well as Compute Canada for the computing resources used for this work. We would also like to thank Marc Bellemare for helpful comments about extensions to infinite sets for the set of histories.

\appendix


\section{Algorithmic Details and Derivations} \label{appendix:alg_details_derivs}

In this section we provide the derivation of Recurrent-GTD from the MSPBNE, including off-policy corrections, and the details in recursively calculating the gradients of a GVFN back through time using an RTRL derivation, with details easily extended to BPTT. We also briefly discuss the forecast networks used briefly in the main text.

\subsection{Re-expressing the MSPBNE} \label{appendix:mspbne}

The MSPBNE was derived for on-policy prediction questions, for TD Networks \citep{silver2012gradient}. The main extension is to allow for (1) off-policy prediction, which is straightforward to do using importance sampling ratios and (2) extension to continuation functions. Note that we assume that the target policies $\pi_i$ do not change as the state estimate changes; rather, they are functions of history. We first show the result without importance sampling ratios, as they are implicit in the expectations. We then provide a corollary with explicit importance sampling ratios.

\mspbnelemma*
 \begin{proof}
Starting with equation \eqref{eq_projform} and for $\Delta_{\weights} \defeq \Bn \vinone_\weights - \vinone_\weights$, we get
   \begin{align*}
     \text{MSPBNE}(\weights)
     & = \| \Pi_{\weights} \Bn \vinone_\weights - \vinone_\weights \|_{\dw}^2\\
     & = \| \Pi_{\weights} \left[\Bn\vinone_\weights - \vinone_\weights \right] \|_{\dw}^2\\
     & = \| \Pi_{\weights}\Delta_{\weights} \|_{\dw}^2
   \end{align*}
   We can wrap the projection operator around the full TD error $\Delta_{\weights}$, because it has no affect on $\vinone_\weights$ which is already in the space. We then plug in the definition of $\Pi_\weights$
   \begin{align}
     \Pi_\weights^\top \dwdiag \Pi_\weights
         &= \dwdiag^\top \phimat_\weights (\phimat_\weights^\top \dwdiag \phimat_\weights)^\inv \phimat^\top \dwdiag \nonumber \\
     \| \Pi_{\weights}\Delta_{\weights} \|_{\dw}^2
         &= \Delta_\weights^\top \Pi_\weights^\top \dwdiag \Pi_\weights \Delta_\weights \nonumber \\
    &= \Delta_\weights^\top \dwdiag^\top \phimat_\weights (\phimat_\weights^\top \dwdiag \phimat_\weights)^\inv \phimat^\top \dwdiag \Delta_\weights \label{mspbne_mat}
   \end{align}
   As in prior gradient TD work we then convert the matrix operations to expectation forms.
   \begin{align*}
     \phimat_\weights^\top \dwdiag \phimat_\weights &= \sum_{j=1}^n \sum_{\hvec\in\Hists} \dw(\hvec) \phivec_{j,\weights}(\hvec) \phi_{j, \weights}(\hvec)^\top = \Expected_d\left[\sum_{j=1}^n \phivec_{j,\weights}(H)\phivec_{j,\weights}(H)^\top\right]\\
     &= W(\weights)\\
     \phimat_\weights^\top \dwdiag \Delta_\weights &= \sum_{j=1}^n \sum_{\hvec\in\Hists} \dw(\hvec) \phivec_{j,\theta}(\hvec) \sum_{a\in\Actions} \pi_j(a|\hvec) \Expected[\delta_j(\hvec,a,H')] = \sum_{j=1}^n\Expected_{d,\pi_j}\left[\delta_j(H,A,H')\phivec_{j,\weights}(H)\right]\\
     &=  \boldsymbol{\delta}(\weights)
   \end{align*}
   Then substituting into equation \eqref{mspbne_mat}, we get the result $\text{MSPBNE}(\weights) = \boldsymbol{\delta}(\weights)^\top W(\weights)^\inv \boldsymbol{\delta}(\weights)$.
 \end{proof}

Now we do not actually get samples according to $\pi_j$; instead, we get them according to the behaviour $\mu$. Throughout this work, we have assumed a coverage property for $\mu$. This means that the behaviour policy $\mu$ satisfies $\mu(a | \hvec) > 0$ if any $\pi_j(a | \hvec) > 0$ for policies $\pi_1, \ldots, \pi_\numgvfs$. 

\begin{corollary}\label{col:mspbne_is}
For importance sampling ratios $\rho_j(a | \hvec) \defeq \frac{\pi_j(a | \hvec)}{\mu(a | \hvec)}$ and
  \begin{align*}
    \boldsymbol{\delta}_\mu(\weights) &\defeq \Expected_{d, \mu}\bigg[\sum_{j=1}^\numgvfs \rho_j(H,A) \tderror_j(H,A,H') \phivec_{j,\weights}(H) \bigg]\\
         &= \sum_{\hvec \in \Hists} d(\hvec) \sum_{a \in \Actions} \mu(a|\hvec) \sum_{j=1}^\numgvfs  \rho_j (a | \hvec) \Expected\bigg[\tderror_j(H,A,H') \phivec_{j,\weights}(\hvec) | H = \hvec, A = a \bigg] 
  \end{align*} 
  then we can show that $\boldsymbol{\delta}_\mu(\weights) = \boldsymbol{\delta}(\weights) $ and so we can write
   \begin{align*}
    \text{MSPBNE}(\weights) &= \boldsymbol{\delta}_\mu(\weights)^\top W(\weights)^\inv  \boldsymbol{\delta}_\mu(\weights)
   \end{align*}
\end{corollary}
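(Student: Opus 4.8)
\emph{Proof proposal.} The plan is to reduce Corollary \ref{col:mspbne_is} to Lemma \ref{lemma:mspbne_exp} by a single change-of-measure (importance sampling) argument, exploiting the fact that switching from sampling under the target policies $\pi_j$ to sampling under the behaviour $\mu$ alters only the vector $\boldsymbol{\delta}$ and leaves everything else untouched. First I would observe that the weighting matrix $W(\weights)$ from Equation \eqref{eqn_w} depends only on the history-sampling distribution $d$ and the features $\phivec_{j,\weights}$, not on the target policies; it is therefore identical in the lemma and the corollary. Consequently it suffices to prove $\boldsymbol{\delta}_\mu(\weights) = \boldsymbol{\delta}(\weights)$, where $\boldsymbol{\delta}(\weights)$ is the on-policy vector established in Lemma \ref{lemma:mspbne_exp}; substituting this equality into the quadratic form $\boldsymbol{\delta}^\trans W(\weights)^\inv \boldsymbol{\delta}$ then yields the claimed expression immediately.

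Next I would verify $\boldsymbol{\delta}_\mu(\weights) = \boldsymbol{\delta}(\weights)$ term by term over the GVF index $j$. The expanded expectation form from the lemma gives, for each $j$, the summand $\sum_{h \in \Hists} d(h) \sum_{a \in \Actions} \pi_j(a \mid h)\, \Expected[\tderror_j(H') \phivec_{j,\weights}(h) \mid H = h, A = a]$. Inserting the factor $\mu(a \mid h)/\mu(a \mid h)$ and using the definition $\rho_j = \pi_j/\mu$ rewrites $\pi_j(a \mid h) = \mu(a \mid h)\,\rho_j(a \mid h)$, which turns the summand into $\sum_{h \in \Hists} d(h) \sum_{a \in \Actions} \mu(a \mid h)\, \rho_j(a \mid h)\, \Expected[\tderror_j(H') \phivec_{j,\weights}(h) \mid H = h, A = a]$. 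Summing over $j$ reproduces exactly the expanded double-sum form of $\boldsymbol{\delta}_\mu(\weights)$ stated in the corollary; the compact notation $\rho_j(H')$ appearing inside $\Expected_{d,\mu}[\cdot]$ is simply shorthand for the ratio evaluated at the action actually taken at $H$, which the expansion makes precise.

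Finally I would flag the one place where care is genuinely required: legitimacy of the division by $\mu(a \mid h)$. Here I would invoke the coverage property assumed throughout (stated just before the corollary), namely $\mu(a \mid h) > 0$ whenever $\pi_j(a \mid h) > 0$ for some $j$. This guarantees $\rho_j(a \mid h)$ is finite on the support of every $\pi_j$; and on the complementary set where $\pi_j(a \mid h) = 0$, the action contributes nothing to either expression, since the $\pi_j$ factor vanishes in the on-policy sum while $\mu(a \mid h)\rho_j(a \mid h) = \pi_j(a \mid h) = 0$ in the off-policy sum, so such actions may be dropped without affecting either side. The main obstacle is therefore not analytical depth—the computation is a routine reweighting of a known identity—but bookkeeping: aligning the loose $\rho_j(H')$ notation with the action-conditioned ratio $\rho_j(a \mid h)$ and confirming that the coverage assumption precisely licenses the division in every term.
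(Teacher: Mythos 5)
Your proposal is correct and follows essentially the same route as the paper: both reduce the claim to showing $\boldsymbol{\delta}_\mu(\weights) = \boldsymbol{\delta}(\weights)$ via the standard importance-sampling cancellation $\mu(a|h)\rho_j(a|h) = \pi_j(a|h)$, with $W(\weights)$ unchanged, and then substitute into the quadratic form of Lemma \ref{lemma:mspbne_exp}. Your explicit treatment of the coverage assumption and the $\pi_j(a|h)=0$ case is a minor bookkeeping addition the paper leaves implicit (it states coverage just before the corollary), not a different argument.
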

\begin{proof}
The key is simply to show that $\boldsymbol{\delta}_\mu(\weights) = \boldsymbol{\delta}(\weights)$, because $W(\weights)$ depends only on $d$, not on the policies $\pi$ or $\mu$. This is straightforward with the typical cancellation in importance sampling ratios
  \begin{align*}
   \boldsymbol{\delta}_\mu(\weights)
   &= \sum_{\hvec \in \Hists} d(\hvec) \sum_{a \in \Actions} \mu(a|\hvec) \sum_{j=1}^\numgvfs  \rho_j (a | \hvec) \Expected\bigg[\tderror_j(\hvec, a, H') \phivec_{j,\weights}(h) | H = h, A = a \bigg] \\
     &= \sum_{\hvec \in \Hists} d(\hvec)  \sum_{j=1}^\numgvfs  \sum_{a \in \Actions} \mu(a|\hvec) \rho_j (a | \hvec) \Expected\bigg[\tderror_j(\hvec, a, H') \phivec_{j,\weights}(h) | H = h, A = a \bigg] \\ 
     &= \sum_{\hvec \in \Hists} d(\hvec)  \sum_{j=1}^\numgvfs  \sum_{a \in \Actions} \pi_j (a | \hvec) \Expected\bigg[\tderror_j(\hvec, a, H') \phivec_{j,\weights}(h) | H = h, A = a \bigg] \\ 
&= \boldsymbol{\delta}(\weights)         .
  \end{align*}
\end{proof}
From here on, therefore, we assume that $\boldsymbol{\delta}(\weights)$ is defined more generally as the above $\boldsymbol{\delta}_\mu(\weights)$, since they result in the same objective but this more general expression more obviously highlights off-policy sampling. 

\newcommand{\Hess}{\nabla_\weights^2}

\subsection{Deriving Recurrent-GTD} \label{appendix:alg_derivs}

 Now that the objective is written in its expectation form, the gradients can be take with respect to the weight parameter. The main body stated the result for a simplified setting (Theorem \ref{thm:gradients}), to make it simpler to understand the result. We provide the more general result here, for compositional GVFs.

\begin{restatable}{theorem}{gradtheoremgen}\label{thm:gradientsgen}
  Assume that $V_{\weights}(\hvec)$ is twice continuously differentiable as a function of $\weights$ for all
  histories $\hvec\in\Hist$ where $\dw(\hvec)>0$ and that $W(\cdot)$, defined in Equation \eqref{eqn_w}, is non-singular in a small neighbourhood of $\weights$. Then for
  \begin{align*}
    \boldsymbol{\delta}(\weights)
      &\defeq 
        \Expected_{d,\mu}\bigg[ \sum_{j=1}^\numgvfs \rho_j(H,A) \tderror_j(H,A,H') \phivec_{j,\weights}(H) \bigg] \\
    \wvec(\weights)
      &= W(\weights)^\inv\boldsymbol{\delta}(\weights) \\
     \psivec(\weights) &= \Expected_{d, \mu}\left[\sum\limits_{j=1}^{\numgvfs} \Big(\rho_j(H,A)\delta_j(H,A,H') - \phivec_{j,\weights}(H)^\trans  \wvec(\weights)\Big)  \nabla^2 V^{(j)}_{\weights}(H)  \wvec(\weights) \right]  
  \end{align*}
we get the gradient
 \begin{align}
  & -\frac{1}{2} \nabla  \textrm{MSPBNE}(\weights)
    =
     -\Expected_{d, \mu}\bigg[ 
     \sum\limits_{j=1}^\numgvfs  \rho_j(H,A) \nabla_\weights \delta_j(H,A,H') \phivec_{j,\theta}(H)^\top  \bigg] \wvec(\weights)  - \psivec(\weights) \label{eq_gtd2}\\
    &=
    \boldsymbol{\delta}(\weights) - \psivec(\weights) \label{eq_tdc}\\
      & \ \ \ \ - \Expected_{d,\mu}\bigg[ \sum\limits_{j+1}^\numgvfs  \rho_j(H,A) \bigg[\sum_{i=1}^\numgvfs \cfunc(j,i) \phivec_{i,\weights}(H) + \gamma_j(H,A,H') \phivec_{j,\weights}(H')\bigg] \phivec_{j,\weights}(H)^\trans \wvec(\weights) \bigg]    \nonumber
 \end{align}
\end{restatable}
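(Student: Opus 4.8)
The plan is to start from the reformulation of the objective as a quadratic form. By Lemma~\ref{lemma:mspbne_exp}, together with the off-policy version in Corollary~\ref{col:mspbne_is}, we have $\text{MSPBNE}(\weights) = \boldsymbol{\delta}(\weights)^\top W(\weights)^{\inv}\boldsymbol{\delta}(\weights)$, so the whole task reduces to differentiating a quadratic form in $\boldsymbol{\delta}$ with metric $W^{\inv}$. I would apply the product rule and the standard identity $\nabla_\weights W^{\inv} = -W^{\inv}(\nabla_\weights W)W^{\inv}$, then introduce the auxiliary vector $\wvec(\weights) \defeq W(\weights)^{\inv}\boldsymbol{\delta}(\weights)$ to absorb the inverse. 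Substituting $\wvec$ after differentiating gives the clean decomposition
\begin{equation*}
-\tfrac12\nabla_\weights\text{MSPBNE}(\weights) = -\big(\nabla_\weights\boldsymbol{\delta}(\weights)\big)^\top\wvec(\weights) + \tfrac12\,\wvec(\weights)^\top\big(\nabla_\weights W(\weights)\big)\wvec(\weights),
\end{equation*}
where each of the two terms can be evaluated by differentiating $\boldsymbol{\delta}^\top\wvec$ and $\wvec^\top W\wvec$ respectively with $\wvec$ held fixed (justified because $\wvec$ enters quadratically and we substitute only after differentiating).

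Next I would compute the two pieces. For the first, I differentiate $\boldsymbol{\delta}^\top\wvec = \Expected_{d,\mu}[\sum_{j=1}^\numgvfs \rho_j(H)\,\delta_j(H')\,(\phivec_{j,\weights}(H)^\top\wvec)]$. The factor $\rho_j$ carries no dependence on $\weights$, so the chain rule produces two contributions: differentiating $\delta_j(H')$ yields $\nabla_\weights C^{(j)} + \gamma_j(H')\phivec_{j,\weights}(H') - \phivec_{j,\weights}(H)$, where under Assumption~2 the cumulant sensitivity $\nabla_\weights C^{(j)}$ is the fixed linear combination $\sum_i c(j,i)\phivec_{i,\weights}$ of the composed GVFs' sensitivities; differentiating $\phivec_{j,\weights}(H)^\top\wvec$ yields the Hessian-vector product $\nabla^2 V_{j,\weights}(H)\wvec$. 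For the second piece, $W(\weights)=\Expected_d[\sum_j\phivec_{j,\weights}(H)\phivec_{j,\weights}(H)^\top]$ from Equation~\eqref{eqn_w}, and $\tfrac12\nabla_\weights(\wvec^\top W\wvec)$ collapses to $\Expected_d[\sum_j \hat{\delta}_j\,\nabla^2 V_{j,\weights}(H)\wvec]$ with $\hat{\delta}_j = \phivec_{j,\weights}(H)^\top\wvec$.

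The key algebraic step is what happens to the $-\phivec_{j,\weights}(H)$ term. After the overall sign flip it contributes $+\Expected_{d,\mu}[\sum_j \rho_j\,\hat{\delta}_j\,\phivec_{j,\weights}(H)]$; since $\hat{\delta}_j$ and $\phivec_{j,\weights}(H)$ depend only on the history $H$ while $\rho_j$ depends on the sampled action, integrating over $A\sim\mu$ collapses the importance ratio, $\Expected_\mu[\rho_j \mid H]=\sum_a \pi_j(a\mid H)=1$, leaving $\Expected_d[\sum_j\hat{\delta}_j\phivec_{j,\weights}(H)] = W\wvec = \boldsymbol{\delta}$. This is precisely the mechanism that makes the leading term the true off-policy TD error $\boldsymbol{\delta}(\weights)$ rather than the projected quantity $\hat{\delta}$. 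The two Hessian-vector contributions (one from each piece) then merge into $\psivec(\weights) = \Expected_{d,\mu}[\sum_j (\rho_j\delta_j - \hat{\delta}_j)\nabla^2 V_{j,\weights}(H)\wvec]$, the $\gamma_j(H')\phivec_{j,\weights}(H')$ term becomes the bootstrapping correction, and the compositional $\sum_i c(j,i)\phivec_{i,\weights}$ term remains as an explicit correction (it does not collapse, since its index $i$ differs from $j$). Collecting everything reproduces Equation~\eqref{eq_tdc}.

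The main obstacle is the careful bookkeeping in the third step: verifying that the importance-sampling collapse correctly recovers $W\wvec=\boldsymbol{\delta}$, and that the two separately-arising Hessian-vector products combine into a single $\psivec$ with the exact coefficient $\rho_j\delta_j - \hat{\delta}_j$ rather than something subtly different. A secondary, more routine obstacle is justifying the analytic steps—interchanging $\nabla_\weights$ with the expectations and differentiating $W^{\inv}$—which is exactly where the hypotheses that $V_{\weights}$ is twice continuously differentiable and that $W(\cdot)$ is nonsingular in a neighbourhood of $\weights$ are used; acyclicity and linearity (Assumptions~1 and~2) guarantee the compositional sensitivities are well-defined and that $\nabla_\weights C^{(j)}$ stays a fixed linear function of the $\phivec_{i,\weights}$.
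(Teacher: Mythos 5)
Your proposal is correct and takes essentially the same route as the paper's proof: both start from the quadratic-form reformulation $\boldsymbol{\delta}(\weights)^\top W(\weights)^{\inv}\boldsymbol{\delta}(\weights)$ (via Lemma~\ref{lemma:mspbne_exp} and Corollary~\ref{col:mspbne_is}), apply the product rule with $\nabla_\weights W^{\inv} = -W^{\inv}(\nabla_\weights W)W^{\inv}$, substitute $\wvec(\weights) = W(\weights)^{\inv}\boldsymbol{\delta}(\weights)$, expand the TD-error gradient using the linear-composition assumption, and recover the leading term $\boldsymbol{\delta}(\weights)$ from $\Expected_{d,\mu}\big[\sum_j \rho_j \phivec_{j,\weights}\phivec_{j,\weights}^\trans\big]\wvec = W\wvec$, with the two Hessian-vector contributions merging into $\psivec(\weights)$. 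If anything, your write-up makes the importance-sampling collapse $\Expected_\mu[\rho_j \mid H] = 1$ explicit, whereas the paper leaves that justification implicit.
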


\begin{proof}
For simplicity in notation below, we drop the explicit dependence on the random
variable $H$ in the expectations. 
\begin{align*}
  \phivec_{j,\weights}(H) \rightarrow \phivec_{j,\weights}&,\hspace{1cm}
  \phivec_{j,\weights}(H') \rightarrow \phivec_{j,\weights}'\\
  \tderror_j(H,A,H') \rightarrow \tderror_j&,\hspace{1cm}\rho_j(H,A) \rightarrow \rho_j
\end{align*}
Further, we will use $\partial_i$ to indicate the partial derivative with respect to $\weights_i$. 
We also assume all expectations are with respect to $d, \text{ and } \mu$. We use $J$ to denote the MSPBNE, which from Lemma \ref{lemma:mspbne_exp} and Corollary \ref{col:mspbne_is}, can be written $J(\weights) = \boldsymbol{\delta}(\weights)^\top W(\weights)^\inv \boldsymbol{\delta}(\weights)$.
When applying the product rule
\begin{align*}
  \partial_i J(\weights)
  &= 2 (\partial_i \boldsymbol{\delta}(\weights))^\top \wvec(\weights) + \boldsymbol{\delta}(\theta)^\top  \partial_i W(\weights)^\inv \boldsymbol{\delta}(\theta) \\
   \partial_i \boldsymbol{\delta}(\weights)
  &= \Expected\bigg[\sum_{j=1}^\numgvfs \rho_j   \partial_i\phivec_{j,\weights} \tderror_j + \phivec_{j,\weights}  \partial_i\tderror_j \bigg] \\ 
   \partial_i W(\weights)^\inv
  &= - W(\weights)^\inv  \partial_i W(\weights) W(\weights)^\inv 
  = -2 W(\weights)^\inv \Expected\bigg[\sum_{j=1}^\numgvfs (\partial_i\phivec_{j,\weights}) \phivec_{j,\weights}^\trans \bigg] W(\weights)^\inv 
  \end{align*}
Recall that $\wvec(\weights) = W(\weights)^\inv\boldsymbol{\delta}(\weights)$, and that $W(\weights)$ is symmetric, giving
  \begin{align*}
 \boldsymbol{\delta}(\theta)^\top \partial_i W(\weights)^\inv \boldsymbol{\delta}(\theta)
 &= -2\boldsymbol{\delta}(\theta)^\top W(\weights)^\inv \Expected\bigg[\sum_{j=1}^\numgvfs (\partial_i\phivec_{j,\weights})\phivec_{j,\weights}^\trans \bigg] W(\weights)^\inv  \boldsymbol{\delta}(\theta)\\
  &= -2\wvec(\weights)^\trans \Expected\bigg[\sum_{j=1}^\numgvfs (\partial_i\phivec_{j,\weights}) \phivec_{j,\weights}^\trans \bigg] \wvec(\weights)\\
  &= -2\wvec(\weights)^\trans \Expected\bigg[\sum_{j=1}^\numgvfs \phivec_{j,\weights} (\partial_i\phivec_{j,\weights})^\trans \bigg] \wvec(\weights)
  \end{align*}
   The last line follows from the fact that the transpose of a scalar is equal to the scalar. Here we transpose the whole expression, leading to a transpose of the outer-product inside the sum.
  Additionally,
 \begin{align*}
 \partial_i \boldsymbol{\delta}(\weights)^\top \wvec(\weights) 
  &= \Expected\bigg[\sum_{j=1}^\numgvfs \rho_j   \tderror_j (\partial_i\phivec_{j,\weights}) + \rho_j\phivec_{j,\weights}  \partial_i\tderror_j \bigg]^\trans  \wvec(\weights)\\ 
&=   \Expected\bigg[\sum_{j=1}^\numgvfs \rho_j \tderror_j (\partial_i\phivec_{j,\weights})^\trans \bigg] \wvec(\weights)
    + \Expected\bigg[ \sum_{j=1}^\numgvfs \rho_j \partial_i\tderror_j  \phivec_{j,\weights}^\trans  \bigg] \wvec(\weights)
  \end{align*}
  Grouping the terms with $(\partial_i\phivec_{j,\weights})$, we get
 \begin{align*}
&\Expected\bigg[\sum_{j=1}^\numgvfs \rho_j \tderror_j (\partial_i\phivec_{j,\weights})^\trans \bigg] \wvec(\weights) - \wvec(\weights)^\trans \Expected\bigg[\sum_{j=1}^\numgvfs \phivec_{j,\weights}(\partial_i\phivec_{j,\weights})^\trans \bigg] \wvec(\weights)\\
&= \Expected\bigg[\sum_{j=1}^\numgvfs \Big( \rho_j \tderror_j - \wvec(\weights)^\trans\phivec_{j,\weights} \Big)(\partial_i\phivec_{j,\weights})^\trans\wvec(\weights)\bigg] \\
&= \boldsymbol{\psi}_i(\weights)
  \end{align*}
where the last follows from the definition of $\nabla_\weights \boldsymbol{\psi}(\weights)$, which is the gradient vector composed of partial derivatives $\boldsymbol{\psi}_i(\weights)$. 
 Therefore,
\begin{align*}
  \partial_i J(\weights)
  &= 2 \partial_i \boldsymbol{\delta}(\weights)^\top \wvec(\weights) + \boldsymbol{\delta}(\theta)^\top  \partial_i W(\weights)^\inv \boldsymbol{\delta}(\theta) \\
&= 2\boldsymbol{\psi}_i(\weights) + 2\Expected\bigg[ \sum_{j=1}^\numgvfs \rho_j \partial_i\tderror_j \phivec_{j,\weights}^\trans  \wvec(\weights)  \bigg]
  \end{align*}
 which proves Equation \eqref{eq_gtd2}. 
  Now we can further simplify the second term, using the fact that $\phivec_{j,\weights} = \nabla_\weights V_{j,\weights}$, giving
    \begin{align*}
\nabla_\weights \tderror_j = \nabla_\weights c_{j,\weights} + \gamma_j \phivec_{j,\weights}' - \phivec_{j,\weights}
.
  \end{align*}
 Now notice that 
\begin{align*}
\Expected\bigg[ \sum_{j=1}^\numgvfs \rho_j \nabla_\weights\tderror_j \phivec_{j,\weights}^\trans  \wvec(\weights)  \bigg] 
&= \Expected\bigg[ \sum_{j=1}^\numgvfs \rho_j \big(\nabla_\weights c_{j,\weights} + \gamma_j \phivec_{j,\weights}' - \phivec_{j,\weights}\big) \phivec_{j,\weights}^\trans  \wvec(\weights)  \bigg] \\
&= -\Expected\bigg[ \sum_{j=1}^\numgvfs \rho_j \phivec_{j,\weights}\phivec_{j,\weights}^\trans \bigg]\wvec(\weights) + \Expected\bigg[ \sum_{j=1}^\numgvfs \rho_j \big(\nabla_\weights c_{j,\weights} + \gamma_j \phivec_{j,\weights}'\big) \phivec_{j,\weights}^\trans  \wvec(\weights)  \bigg] 
  \end{align*}
 Because $\wvec(\weights) = W(\weights)^\inv\boldsymbol{\delta}(\weights)$, 
 \begin{align*}
\Expected\bigg[ \sum_{j=1}^\numgvfs \rho_j \phivec_{j,\weights}\phivec_{j,\weights}^\trans \bigg]\wvec(\weights) 
=
W(\weights)\wvec(\weights) = \boldsymbol{\delta}(\weights)
  \end{align*}
 Putting this all together, we get that
    \begin{align*}
  -\tfrac{1}{2} \nabla_\weights J(\weights)
  &= -\boldsymbol{\psi}_i(\weights) - \Expected\bigg[ \sum_{j=1}^\numgvfs \rho_j  \nabla_\weights\tderror_j \phivec_{j,\weights}^\trans\wvec(\weights) \bigg]\\
  &= -\boldsymbol{\psi}(\weights) + \boldsymbol{\delta}(\weights) -\Expected\bigg[ \sum_{j=1}^\numgvfs \rho_j \big(\nabla_\weights c_{j,\weights} + \gamma_j \phivec_{j,\weights}'\big) \phivec_{j,\weights}^\trans  \wvec(\weights)  \bigg] 
\end{align*}
completing the proof.
\end{proof}

The resulting Recurrent GTD algorithm explicitly learns a second set of weights $\wvec$, to perform this update. In our implementation, we use a particular form of composition, namely that the cumulant for a GVF is a linear weighting of the predictions of some of the other GVFs on the next time step. If we let $c(i, j)$ indicate the weight on the $i$th GVF in the cumulant for the $j$th GVF, then we get that $\nabla_\weights c_{j,t} =   \sum_{i=1}^\numgvfs c(j,i) \phivec_{i,t}'$.

The {\bf Recurrent GTD} update is 
\begin{align}
\svec_t &\gets f_{\weights_t}(\svec_{t-1}, \xvec_t) \nonumber\\
\svec_{t+1} &\gets f_{\weights_t}(\svec_{t}, \xvec_{t+1}) \nonumber\\
\phivec_{t,j} &\gets \nabla_\weights \svec_{t,j} \hspace{2.0cm} \triangleright \text{ Compute sensitivities using truncated BPTT}  \nonumber\\
\phivec'_{t,j} &\gets \nabla_\weights \svec_{t+1,j}  \nonumber\\
\rho_{t,j} &\gets \frac{\pi_j(a_t | \obs_t)}{\mu(a_t | \obs_t)}  \nonumber\\
\vvec_t &= \nabla^2 \svec_t \wvec_t \hspace{2.0cm} \triangleright \text{ Computed using R-operators, see Appendix \ref{app_gradients}} \nonumber\\
  \psivec_t &= \sum_{j=1}^{\numgvfs} ( \rho_{j,t}\delta_{j,t} - \phivec_{j,t}^\trans  \wvec_t)  \vvec_t \label{eq_rgtd_gen}\\
  \weights_{t+1} &= \weights_{t} + \alpha_t \bigg[ \sum_{j=1}^{\numgvfs}  \rho_{j,t} \tderror_{j,t} \phivec_{j,t} - \rho_{j,t} \bigg[\nabla_\weights c_{j,t} + \gamma_{j,t+1} \phivec'_{j,t}  \bigg] \phivec_{j,t}^\trans \wvec_t - \psivec_t  \bigg] \nonumber\\
   \wvec_{t+1} &= \wvec_t + \beta_t \bigg[ \sum_{j=1}^{\numgvfs}  \rho_{j,t} \Big(\tderror_{j,t} - \phivec_{j,t}^\trans \wvec_t\Big) \phivec_{j,t} \bigg] \nonumber
\end{align}

\subsection{Computing gradients of the value function back through time}\label{app_gradients}

\newcommand{\Feats}{\zvec}

In this section, we show how to compute $\phivec_t$, which was needed in the
algorithms. Recall from Section \ref{sec_algs} that
we set $V^{(j)}(\svec_{t+1}) = \svec_{t+1,j}$, and using
$\Feats_{t+1}\defeq\twovec{\svec_{t}}{\xvec_{t+1}}$ let
  $\svec_{t+1,j} =
  \sigma\left( \Feats_{t+1}^\top\weights^{(j)}\right)$ for
  some activation function $\sigma:\RR\rightarrow\RR$. 
  For both Backpropagation Through Time or Real Time Recurrent Learning,
  it is useful to take advantage of the following formula for \emph{recurrent sensitivities}
\begin{align*}
  \pd{V^{(i)}(S_{t+1})}{\weights_{(k,j)}} &= \actdot(\Feats_{t+1}^\trans \weights^{(i)}) \biggr(\biggr(\pd{\Feats_{t+1}}{\weights_{(k,j)}} \biggr)^\trans \weights^{(i)} + (\Feats_{t+1})_j \krondelta_{i,k}\biggr) \\
  &= \actdot(\Feats_{t+1}^\trans \weights^{(i)}) \left(\biggr[\pd{V^{(1)}(S_{t})}{\weights_{(k,j)}}, ... ,\pd{V^{(n)}(S_{t})}{\weights_{(k,j)}},\zerovec^\top\biggr] \weights^{(i)} + (\Feats_{t+1})_j \krondelta_{i,k}\right) 
\end{align*}
where $\krondelta$ is the Kronecker delta function
and $\actdot(\cdot)$ is shorthand for
the derivative of $\sigma$ w.r.t its scalar input.
 Given this formula, BPTT or
RTRL can simply be applied.

For Recurrent GTD---though not for Recurrent TD---we additionally need to compute the Hessian back in time, for the Hessian-vector product. The Hessian for each value function is a $\numgvfs(\featuresize)\times \numgvfs(\featuresize)$ matrix; computing the Hessian-vector product naively would cost at least $O((\featuresize + \numgvfs)^2 \numgvfs^2)$ for each GVF, which is prohibitively expensive. We can avoid this using R-operators also known as Pearlmutter's method \citep{pearlmutter1994fast}. 

The R-operator $\Roperator\{\cdot\}$ is defined as 
\begin{equation*}
  \mathcal{R}_\wvec\biggr\{\gvec(\weights)\biggr\} \defeq \frac{\partial \gvec(\weights + r \wvec)}{ \partial r} \biggr\rvert_{r=0}
\end{equation*}
for a (vector-valued) function $\gvec$ and satisfies 
\begin{equation*}
  \mathcal{R}_\wvec\biggr\{\nabla_\weights f(\weights)\biggr\} = \nabla^2_\weights f(\weights) \wvec.
\end{equation*}
Therefore, instead of computing the Hessian and then producting with $\wvec_t$, this operation can be completed in linear time, in the length of $\wvec_t$. 

Specifically, for our setting, we have
\begin{align*}
  \mathcal{R}_w\biggr\{\actdot(\Feats_t^\top\weights)[\nabla_\weights \Feats_t^\trans \weights + \Feats_t^\trans \nabla_\weights\weights]\biggr\}
  &= \pd{}{r}\biggr(\actdot(\Feats_t^\top(\weights + r \wvec)[\nabla_\weights \Feats_t^\trans (\weights + r\wvec) + \Feats_t^\trans \nabla_\weights(\weights+r\wvec)]\biggr) \biggr\rvert_{r=0}
\end{align*}
To make the calculation more managable we seperate into each partial for every node k and associated weight j.
\begin{align*}
  \pd{V^{(i)}(S_{t+1}, \weights)}{\weights_{(k,j)}} &= \actdot(\Feats_{t+1}^\trans \weights^{(i)}) (\eta_{t+1})_{i,k,j} \\
  (\eta_{t+1})_{i, k, j} &= ((\valuedtheta_t)_{k,j}^\trans \weights^{(i)} + (\Feats_{t+1})_j \delta_{i,k}) \\
  (\valuedtheta_t)_{k,j} &= \left[\pd{V^{(1)}(S_{t})}{\weights_{(k,j)}}, ... ,\pd{V^{(n)}(S_{t})}{\weights_{(k,j)}},\zerovec^\top\right]^\top\\
  \valuedr_t &= \left[\pd{V^{(1)}(S_{t})}{r}, ... ,\pd{V^{(n)}(S_{t})}{r},\zerovec^\top\right]^\top \\
  \end{align*}
  \begin{align*}
  \RopValueVect &= \left[\mathcal{R}_w\biggr\{\pd{V^{(1)}(S_{t-1})}{\weights_{(k,j)}}\biggr\}, ..., \mathcal{R}_w\biggr\{\pd{V^{(\numgvfs)}(S_{t-1})}{\weights_{(k,j)}}\biggr\},\zerovec^\top\right]^\top \\
  \mathcal{R}_w\left\{\pd{V^{(i)}(S_{t+1}, \weights)}{\weights_{(k,j)}}\right\} &= \frac{\partial^2 V^{(i)}(S_{t+1}, \weights + r\secweights)}{\partial r \partial \weights_{(k,j)}} \biggr\rvert_{r=0} \\
  &= \actdotdot\biggr(\Feats_{t+1}^\trans (\weights^{(i)} + r\secweights_i)\biggr) \biggr(\valuedr_t^\trans (\weights^{(i)} + r\secweights_i) + \Feats_{t+1}^\trans \secweights_i\biggr) (\eta_{t+1})_{i,k,j} \\
  &\phantom{{}=}
   + \actdot\biggr(\Feats_{t+1}^\trans (\weights^{(i)} + r\secweights_i)\biggr)\biggr(\RopValueVect^\trans (\weights^{(i)} + r\secweights)
  + (\valuedtheta_t)_{k,j}^\trans w_i
  + (\valuedr_t)_j \krondelta_{k,i}\biggr) \biggr\rvert_{r=0} \\
  &= \actdotdot\biggr(\Feats_{t+1}^\trans \weights^{(i)}\biggr) \biggr(\valuedr_t^\trans (\weights^{(i)}) + \Feats_{t+1}^\trans \secweights_i\biggr) (\eta_{t+1})_{i,k,j} \\
  &\phantom{{}=}
   + \actdot\biggr(\Feats_{t+1}^\trans \weights^{(i)}\biggr) \biggr(\RopValueVect^\trans \weights^{(i)}
  + (\valuedtheta_t)_{k,j}^\trans \secweights_i + (\valuedr_t)_j \krondelta_{k,i}\biggr)\\
  \pd{V^{(i)}(S_{t})}{r} &= \actdot(\Feats_t^\trans \weights^{(i)})(\valuedr_{t-1}^\trans \weights^{(i)} + \Feats_t^\trans w_i)
\end{align*}

\subsection{TD($\lambda$) for learning GVFNs}\label{app_tdlambda}

For many of the experiments we used Recurrent TD with no back-propagation through time $p=1$. This algorithm only adjusts parameters to minimize immediate TD error. In many cases, this was sufficient, but at times it was slow and increasing $p$ improved learning. Another strategy is to use traces to obtain credit assignment back-in-time. The TD-error on this step can be attributed to state values back-in-time, with the \textbf{TD($\boldsymbol{\lambda}$) algorithm} 
\begin{align}
\svec_t &\gets f_{\weights_t}(\svec_{t-1}, \xvec_t) \nonumber\\
\svec_{t+1} &\gets f_{\weights_t}(\svec_{t}, \xvec_{t+1}) \nonumber\\
\gvec_{t,j} &\gets \nabla_{\weights_j} f_{\weights_t}(\svec_{t-1}, \xvec_{t}) && \triangleright \text{ gradient given $\svec_{t-1}$, no BPTT} \nonumber\\
\evec_{t,j} &\gets \gvec_{t,j} + \gamma_{t,j} \lambda \evec_{t-1,j} && \triangleright \text{ eligibility trace, $0 \le \lambda \le 1$} \nonumber\\
\delta_{t,j} &\gets C_{t+1}^{(j)} + \gamma_{t+1, j} \svec_{t+1,j} - \svec_{t,j}   \nonumber\\
\weights_{t+1,j} &\gets \weights_{t,j} + \alpha_t \tderror_{t,j} \evec_{t,j} \label{eq_td_lambda}
\end{align}
Notice the difference to Recurrent TD and Recurrent GTD, that the weights for each GVF are updated independently. This difference arises because the gradient computations for back-in-time, for the sensitivities, is what couples the updates. Without these sensitivities, the immediate gradient of the value $\gvec_{t,j}$ is independent for each GVF. 


{

  \bibliography{paper}

\begin{thebibliography}{}

\bibitem[\protect\BCAY{Bacon, Harb,\ \BBA\ Precup}{Bacon
  et~al.}{2017}]{bacon2017theoption}
Bacon, P.-L., Harb, J., \BBA\ Precup, D. \BBOP2017\BBCP.
\newblock \BBOQ {The Option-Critic Architecture.}\BBCQ\
\newblock In {\Bem AAAI Conference on Artificial Intelligence}.

\bibitem[\protect\BCAY{Banino, Barry, Uria, Blundell, Lillicrap, Mirowski,
  Pritzel, Chadwick, Degris, Modayil, et~al.}{Banino
  et~al.}{2018}]{banino2018vector}
Banino, A., Barry, C., Uria, B., Blundell, C., Lillicrap, T., Mirowski, P.,
  Pritzel, A., Chadwick, M.~J., Degris, T., Modayil, J., et~al. \BBOP2018\BBCP.
\newblock \BBOQ Vector-based navigation using grid-like representations in
  artificial agents\BBCQ\
\newblock {\Bem Nature}.

\bibitem[\protect\BCAY{Baum\ \BBA\ Petrie}{Baum\ \BBA\
  Petrie}{1966}]{baum1966statistical}
Baum, L.~E.\BBACOMMA\  \BBA\ Petrie, T. \BBOP1966\BBCP.
\newblock \BBOQ {Statistical Inference for Probabilistic Functions of Finite
  State Markov Chains}\BBCQ\
\newblock {\Bem The Annals of Mathematical Statistics}.

\bibitem[\protect\BCAY{Becker}{Becker}{1973}]{becker1973model}
Becker, J.~D. \BBOP1973\BBCP.
\newblock \BBOQ {A model for the encoding of experiential information}\BBCQ\
\newblock {\Bem Computer Models of Thought and Language}.

\bibitem[\protect\BCAY{Benzing, Gauy, Mujika, Martinsson,\ \BBA\
  Steger}{Benzing et~al.}{2019}]{benzing2019optimal}
Benzing, F., Gauy, M.~M., Mujika, A., Martinsson, A., \BBA\ Steger, A.
  \BBOP2019\BBCP.
\newblock \BBOQ Optimal kronecker-sum approximation of real time recurrent
  learning\BBCQ\
\newblock In {\Bem International Conference on Machine Learning}.

\bibitem[\protect\BCAY{Bianchi, Maiorino, Kampffmeyer, Rizzi,\ \BBA\
  Jenssen}{Bianchi et~al.}{2017}]{bianchi2017anoverview}
Bianchi, F.~M., Maiorino, E., Kampffmeyer, M.~C., Rizzi, A., \BBA\ Jenssen, R.
  \BBOP2017\BBCP.
\newblock \BBOQ {An overview and comparative analysis of Recurrent Neural
  Networks for Short Term Load Forecasting.}\BBCQ\
\newblock {\Bem arXiv:1705.04378}.

\bibitem[\protect\BCAY{Boots, Siddiqi,\ \BBA\ Gordon}{Boots
  et~al.}{2011}]{boots2011closing}
Boots, B., Siddiqi, S., \BBA\ Gordon, G. \BBOP2011\BBCP.
\newblock \BBOQ {Closing the learning-planning loop with predictive state
  representations}\BBCQ\
\newblock {\Bem The International Journal of Robotics Research}.

\bibitem[\protect\BCAY{Cho, van Merrienboer, Bahdanau,\ \BBA\ Bengio}{Cho
  et~al.}{2014}]{cho2014properties}
Cho, K., van Merrienboer, B., Bahdanau, D., \BBA\ Bengio, Y. \BBOP2014\BBCP.
\newblock \BBOQ {On the Properties of Neural Machine Translation:
  Encoder-Decoder Approaches}\BBCQ\
\newblock In {\Bem International Conference on Machine Learning}.

\bibitem[\protect\BCAY{Choromanski, Downey,\ \BBA\ Boots}{Choromanski
  et~al.}{2018}]{choromanski2018initialization}
Choromanski, K., Downey, C., \BBA\ Boots, B. \BBOP2018\BBCP.
\newblock \BBOQ {Initialization matters: Orthogonal Predictive State Recurrent
  Neural Networks}\BBCQ\
\newblock In {\Bem International Conference on Learning Representations}.

\bibitem[\protect\BCAY{Cunningham}{Cunningham}{1972}]{Cunninghambook}
Cunningham, M. \BBOP1972\BBCP.
\newblock {\Bem {Intelligence: Its Organization and Development}}.
\newblock Academic Press.

\bibitem[\protect\BCAY{Dai, He, Pan, Boots,\ \BBA\ Song}{Dai
  et~al.}{2017}]{dai2017learning}
Dai, B., He, N., Pan, Y., Boots, B., \BBA\ Song, L. \BBOP2017\BBCP.
\newblock \BBOQ {Learning from conditional distributions via dual
  embeddings}\BBCQ\
\newblock In {\Bem International Conference on Artificial Intelligence and
  Statistics}, \BPGS\ 1458--1467.

\bibitem[\protect\BCAY{Downey, Hefny, Boots, Gordon,\ \BBA\ Li}{Downey
  et~al.}{2017}]{downey2017predictive}
Downey, C., Hefny, A., Boots, B., Gordon, G.~J., \BBA\ Li, B. \BBOP2017\BBCP.
\newblock \BBOQ {Predictive State Recurrent Neural Networks}\BBCQ\
\newblock In {\Bem Advances in Neural Information Processing Systems}.

\bibitem[\protect\BCAY{Drescher}{Drescher}{1991}]{drescher1991made}
Drescher, G.~L. \BBOP1991\BBCP.
\newblock {\Bem {Made-up minds: a constructivist approach to artificial
  intelligence}}.
\newblock MIT press.

\bibitem[\protect\BCAY{Durrant-Whyte\ \BBA\ Bailey}{Durrant-Whyte\ \BBA\
  Bailey}{2006}]{durrantwhyte2006simultaneous}
Durrant-Whyte, H.\BBACOMMA\  \BBA\ Bailey, T. \BBOP2006\BBCP.
\newblock \BBOQ Simultaneous localization and mapping\BBCQ\
\newblock {\Bem IEEE Robotics and Automation Magazine}.

\bibitem[\protect\BCAY{Espeholt, Soyer, Munos, Simonyan, Mnih, Ward, Doron,
  Firoiu, Harley, Dunning, et~al.}{Espeholt et~al.}{2018}]{espeholt2018impala}
Espeholt, L., Soyer, H., Munos, R., Simonyan, K., Mnih, V., Ward, T., Doron,
  Y., Firoiu, V., Harley, T., Dunning, I., et~al. \BBOP2018\BBCP.
\newblock \BBOQ {IMPALA: Scalable Distributed Deep-RL with Importance Weighted
  Actor-Learner Architectures}\BBCQ\
\newblock In {\Bem International Conference on Machine Learning}.

\bibitem[\protect\BCAY{Finn, Abbeel,\ \BBA\ Levine}{Finn
  et~al.}{2017}]{finn2017model}
Finn, C., Abbeel, P., \BBA\ Levine, S. \BBOP2017\BBCP.
\newblock \BBOQ {Model-agnostic meta-learning for fast adaptation of deep
  networks}\BBCQ\
\newblock {\Bem International Conference on Machine Learning}.

\bibitem[\protect\BCAY{Ghiassian, Patterson, White, Sutton,\ \BBA\
  White}{Ghiassian et~al.}{2018}]{ghiassian2018online}
Ghiassian, S., Patterson, A., White, M., Sutton, R.~S., \BBA\ White, A.
  \BBOP2018\BBCP.
\newblock \BBOQ {Online Off-policy Prediction.}\BBCQ\
\newblock {\Bem arXiv:1811.02597}.

\bibitem[\protect\BCAY{Graves, Nguyen, Hassanzadeh,\ \BBA\ Jin}{Graves
  et~al.}{2020}]{graves2020learning}
Graves, D., Nguyen, N.~M., Hassanzadeh, K., \BBA\ Jin, J. \BBOP2020\BBCP.
\newblock \BBOQ {Learning predictive representations in autonomous driving to
  improve deep reinforcement learning}\BBCQ\
\newblock {\Bem arXiv preprint arXiv:2006.15110}.

\bibitem[\protect\BCAY{{Greff}, {Srivastava}, {Koutník}, {Steunebrink},\ \BBA\
  {Schmidhuber}}{{Greff} et~al.}{2017}]{greff2017lstm}
{Greff}, K., {Srivastava}, R.~K., {Koutník}, J., {Steunebrink}, B.~R., \BBA\
  {Schmidhuber}, J. \BBOP2017\BBCP.
\newblock \BBOQ Lstm: A search space odyssey\BBCQ\
\newblock {\Bem IEEE Transactions on Neural Networks and Learning Systems},
  {\Bem 28\/}(10), 2222--2232.

\bibitem[\protect\BCAY{G{\"u}nther, Kearney, Dawson, Sherstan,\ \BBA\
  Pilarski}{G{\"u}nther et~al.}{2018}]{gunther2018predictions}
G{\"u}nther, J., Kearney, A., Dawson, M.~R., Sherstan, C., \BBA\ Pilarski,
  P.~M. \BBOP2018\BBCP.
\newblock \BBOQ {Predictions, surprise, and predictions of surprise in general
  value function architectures}\BBCQ\
\newblock In {\Bem AAAI 2018 Fall Symposium on Reasoning and Learning in
  Real-World Systems for Long-Term Autonomy}, \BPGS\ 22--29.

\bibitem[\protect\BCAY{Hausknecht\ \BBA\ Stone}{Hausknecht\ \BBA\
  Stone}{2015}]{hausknecht2015deep}
Hausknecht, M.\BBACOMMA\  \BBA\ Stone, P. \BBOP2015\BBCP.
\newblock \BBOQ {Deep recurrent q-learning for partially observable mdps}\BBCQ\
\newblock In {\Bem AAAI Conference on Artificial Intelligence}.

\bibitem[\protect\BCAY{Hochreiter\ \BBA\ Schmidhuber}{Hochreiter\ \BBA\
  Schmidhuber}{1997}]{hochreiter1997long}
Hochreiter, S.\BBACOMMA\  \BBA\ Schmidhuber, J. \BBOP1997\BBCP.
\newblock \BBOQ {Long Short-Term Memory}\BBCQ\
\newblock {\Bem Neural Computation}.

\bibitem[\protect\BCAY{Hopfield}{Hopfield}{1982}]{hopfield1982neural}
Hopfield, J.~J. \BBOP1982\BBCP.
\newblock \BBOQ {Neural Network and Physical Systems with Emergent Collective
  Computational Abilities}\BBCQ\
\newblock {\Bem Proceedings of the National Academy of Sciences of the United
  States of America}.

\bibitem[\protect\BCAY{Hsu, Kakade,\ \BBA\ Zhang}{Hsu
  et~al.}{2012}]{hsu2012spectral}
Hsu, D., Kakade, S., \BBA\ Zhang, T. \BBOP2012\BBCP.
\newblock \BBOQ {A spectral algorithm for learning Hidden Markov Models}\BBCQ\
\newblock {\Bem Journal of Computer and System Sciences}.

\bibitem[\protect\BCAY{Innes}{Innes}{2018}]{innes2018}
Innes, M. \BBOP2018\BBCP.
\newblock \BBOQ {Flux: Elegant Machine Learning with Julia}\BBCQ\
\newblock {\Bem Journal of Open Source Software}.

\bibitem[\protect\BCAY{Jaderberg, Mnih, Czarnecki, Schaul, Leibo, Silver,\
  \BBA\ Kavukcuoglu}{Jaderberg et~al.}{2017}]{jaderberg2016reinforcement}
Jaderberg, M., Mnih, V., Czarnecki, W.~M., Schaul, T., Leibo, J.~Z., Silver,
  D., \BBA\ Kavukcuoglu, K. \BBOP2017\BBCP.
\newblock \BBOQ {Reinforcement learning with unsupervised auxiliary
  tasks}\BBCQ\
\newblock In {\Bem International Conference on Learning Representations}.

\bibitem[\protect\BCAY{Jaeger\ \BBA\ Haas}{Jaeger\ \BBA\
  Haas}{2004}]{jaeger2004harnessing}
Jaeger, H.\BBACOMMA\  \BBA\ Haas, H. \BBOP2004\BBCP.
\newblock \BBOQ {Harnessing Nonlinearity: Predicting Chaotic Systems and Saving
  Energy in Wireless Communication}\BBCQ\
\newblock {\Bem Science}.

\bibitem[\protect\BCAY{Javed, White,\ \BBA\ Bengio}{Javed
  et~al.}{2020}]{javed2020learning}
Javed, K., White, M., \BBA\ Bengio, Y. \BBOP2020\BBCP.
\newblock \BBOQ {Learning causal models online}\BBCQ\
\newblock {\Bem arXiv preprint arXiv:2006.07461}.

\bibitem[\protect\BCAY{Kaelbling, Littman,\ \BBA\ Cassandra}{Kaelbling
  et~al.}{1998}]{kaelbling1998planning}
Kaelbling, L.~P., Littman, M.~L., \BBA\ Cassandra, A.~R. \BBOP1998\BBCP.
\newblock \BBOQ {Planning and acting in partially observable stochastic
  domains}\BBCQ\
\newblock {\Bem Artificial intelligence}.

\bibitem[\protect\BCAY{Kemker, McClure, Abitino, Hayes,\ \BBA\ Kanan}{Kemker
  et~al.}{2018}]{kemker2018measuring}
Kemker, R., McClure, M., Abitino, A., Hayes, T., \BBA\ Kanan, C.
  \BBOP2018\BBCP.
\newblock \BBOQ {Measuring catastrophic forgetting in neural networks}\BBCQ\
\newblock {\Bem AAAI Conference on Artificial Intelligence}.

\bibitem[\protect\BCAY{Lin\ \BBA\ Mitchell}{Lin\ \BBA\
  Mitchell}{1993}]{lin1993reinforcement}
Lin, L.-J.\BBACOMMA\  \BBA\ Mitchell, T.~M. \BBOP1993\BBCP.
\newblock \BBOQ {Reinforcement learning with hidden states}\BBCQ\
\newblock In {\Bem International Conference on Simulation of Adaptive
  Behavior}.

\bibitem[\protect\BCAY{Littman, Sutton,\ \BBA\ Singh}{Littman
  et~al.}{2001}]{littman2001predictive}
Littman, M.~L., Sutton, R.~S., \BBA\ Singh, S. \BBOP2001\BBCP.
\newblock \BBOQ {Predictive representations of state}\BBCQ\
\newblock In {\Bem Advances in Neural Information Processing Systems}.

\bibitem[\protect\BCAY{Ljung}{Ljung}{2010}]{ljung2010perspectives}
Ljung, L. \BBOP2010\BBCP.
\newblock \BBOQ {Perspectives on system identification}\BBCQ\
\newblock {\Bem Annual Reviews in Control}.

\bibitem[\protect\BCAY{Maei, Szepesv{\'a}ri, Bhatnagar,\ \BBA\ Sutton}{Maei
  et~al.}{2010}]{maei2010toward}
Maei, H., Szepesv{\'a}ri, C., Bhatnagar, S., \BBA\ Sutton, R. \BBOP2010\BBCP.
\newblock \BBOQ {Toward Off-Policy Learning Control with Function
  Approximation}\BBCQ\
\newblock In {\Bem International Conference on Machine Learning}.

\bibitem[\protect\BCAY{Mahmood\ \BBA\ Sutton}{Mahmood\ \BBA\
  Sutton}{2013}]{mahmood2013representation}
Mahmood, A.~R.\BBACOMMA\  \BBA\ Sutton, R.~S. \BBOP2013\BBCP.
\newblock \BBOQ Representation {{Search}} through {{Generate}} and
  {{Test}}.\BBCQ\
\newblock In {\Bem {{AAAI Workshop}}: {{Learning Rich Representations}} from
  {{Low}}-{{Level Sensors}}}.

\bibitem[\protect\BCAY{Makino\ \BBA\ Takagi}{Makino\ \BBA\
  Takagi}{2008}]{makino2008line}
Makino, T.\BBACOMMA\  \BBA\ Takagi, T. \BBOP2008\BBCP.
\newblock \BBOQ {On-line discovery of temporal-difference networks}\BBCQ\
\newblock In {\Bem Proceedings of the 25th international conference on Machine
  learning}, \BPGS\ 632--639.

\bibitem[\protect\BCAY{Maltoni\ \BBA\ Lomonaco}{Maltoni\ \BBA\
  Lomonaco}{2019}]{maltoni2019continuous}
Maltoni, D.\BBACOMMA\  \BBA\ Lomonaco, V. \BBOP2019\BBCP.
\newblock \BBOQ {Continuous learning in single-incremental-task
  scenarios}\BBCQ\
\newblock {\Bem Neural Networks}, {\Bem 116}, 56--73.

\bibitem[\protect\BCAY{McCallum}{McCallum}{1996}]{mccalum1996learning}
McCallum, R.~A. \BBOP1996\BBCP.
\newblock \BBOQ {Learning to use selective attention and short-term memory in
  sequential tasks}\BBCQ\
\newblock In {\Bem International Conference on Simulation of Adaptive
  Behavior}.

\bibitem[\protect\BCAY{McCloskey\ \BBA\ Cohen}{McCloskey\ \BBA\
  Cohen}{1989}]{mccloskey1989catastrophic}
McCloskey, M.\BBACOMMA\  \BBA\ Cohen, N.~J. \BBOP1989\BBCP.
\newblock \BBOQ {Catastrophic interference in connectionist networks: The
  sequential learning problem}\BBCQ\
\newblock In {\Bem Psychology of learning and motivation},
  \lowercase{\BVOL}~24, \BPGS\ 109--165. Elsevier.

\bibitem[\protect\BCAY{McCracken\ \BBA\ Bowling}{McCracken\ \BBA\
  Bowling}{2005}]{mccracken2005online}
McCracken, P.\BBACOMMA\  \BBA\ Bowling, M.~H. \BBOP2005\BBCP.
\newblock \BBOQ {Online discovery and learning of predictive state
  representations}\BBCQ\
\newblock In {\Bem Advances in Neural Information Processing Systems}.

\bibitem[\protect\BCAY{Mnih, Kavukcuoglu, Silver, Rusu, Veness, Bellemare,
  Graves, Riedmiller, Fidjeland, Ostrovski, et~al.}{Mnih
  et~al.}{2015}]{mnih2015human}
Mnih, V., Kavukcuoglu, K., Silver, D., Rusu, A.~A., Veness, J., Bellemare,
  M.~G., Graves, A., Riedmiller, M., Fidjeland, A.~K., Ostrovski, G., et~al.
  \BBOP2015\BBCP.
\newblock \BBOQ {Human-level control through deep reinforcement learning}\BBCQ\
\newblock {\Bem nature}, {\Bem 518\/}(7540), 529--533.

\bibitem[\protect\BCAY{Modayil, White,\ \BBA\ Sutton}{Modayil
  et~al.}{2014}]{modayil2014multi}
Modayil, J., White, A., \BBA\ Sutton, R.~S. \BBOP2014\BBCP.
\newblock \BBOQ {Multi-timescale nexting in a reinforcement learning
  robot}\BBCQ\
\newblock {\Bem Adaptive Behavior}.

\bibitem[\protect\BCAY{Momennejad\ \BBA\ Howard}{Momennejad\ \BBA\
  Howard}{2018}]{momennejad2018predicting}
Momennejad, I.\BBACOMMA\  \BBA\ Howard, M.~W. \BBOP2018\BBCP.
\newblock \BBOQ {Predicting the future with multi-scale successor
  representations}\BBCQ\
\newblock {\Bem biorxiv}.

\bibitem[\protect\BCAY{Mujika, Meier,\ \BBA\ Steger}{Mujika
  et~al.}{2018}]{mujika2018approximating}
Mujika, A., Meier, F., \BBA\ Steger, A. \BBOP2018\BBCP.
\newblock \BBOQ {Approximating real-time recurrent learning with random
  kronecker factors}\BBCQ\
\newblock In {\Bem Advances in Neural Information Processing Systems}, \BPGS\
  6594--6603.

\bibitem[\protect\BCAY{Pascanu, Mikolov,\ \BBA\ Bengio}{Pascanu
  et~al.}{2013}]{pascanu2013onthe}
Pascanu, R., Mikolov, T., \BBA\ Bengio, Y. \BBOP2013\BBCP.
\newblock \BBOQ {On the difficulty of training recurrent neural
  networks.}\BBCQ\
\newblock In {\Bem International Conference on Machine Learning}.

\bibitem[\protect\BCAY{Patterson, Ghiassian, Gupta, White,\ \BBA\
  White}{Patterson et~al.}{2021}]{patterson2020investigating}
Patterson, A., Ghiassian, S., Gupta, D., White, A., \BBA\ White, M.
  \BBOP2021\BBCP.
\newblock \BBOQ {Investigating Objectives for Off-policy Value Estimation in
  Reinforcement Learning}\BBCQ\
\newblock {\Bem In Preparation}.

\bibitem[\protect\BCAY{Pearlmutter}{Pearlmutter}{1994}]{pearlmutter1994fast}
Pearlmutter, B.~A. \BBOP1994\BBCP.
\newblock \BBOQ {Fast Exact Multiplication by the Hessian}\BBCQ\
\newblock {\Bem Neural Computation}.

\bibitem[\protect\BCAY{Peters, Neumann, Iyyer, Gardner, Clark, Lee,\ \BBA\
  Zettlemoyer}{Peters et~al.}{2018}]{Peters:2018}
Peters, M.~E., Neumann, M., Iyyer, M., Gardner, M., Clark, C., Lee, K., \BBA\
  Zettlemoyer, L. \BBOP2018\BBCP.
\newblock \BBOQ Deep contextualized word representations\BBCQ\
\newblock In {\Bem Proc. of NAACL}.

\bibitem[\protect\BCAY{Pezzulo}{Pezzulo}{2008}]{pezzulo2008coordinating}
Pezzulo, G. \BBOP2008\BBCP.
\newblock \BBOQ Coordinating with the future: the anticipatory nature of
  representation\BBCQ\
\newblock {\Bem Minds and Machines}.

\bibitem[\protect\BCAY{Rafols, Ring, Sutton,\ \BBA\ Tanner}{Rafols
  et~al.}{2005}]{rafols2005using}
Rafols, E.~J., Ring, M.~B., Sutton, R.~S., \BBA\ Tanner, B. \BBOP2005\BBCP.
\newblock \BBOQ {Using predictive representations to improve generalization in
  reinforcement learning}\BBCQ\
\newblock In {\Bem International Joint Conference on Artificial Intelligence}.

\bibitem[\protect\BCAY{Rudary, Singh,\ \BBA\ Wingate}{Rudary
  et~al.}{2005}]{rudary2005predictive}
Rudary, M., Singh, S., \BBA\ Wingate, D. \BBOP2005\BBCP.
\newblock \BBOQ {Predictive linear-Gaussian models of stochastic dynamical
  systems}\BBCQ\
\newblock In {\Bem Conference on Uncertainty in Artificial Intelligence}.

\bibitem[\protect\BCAY{Saon, Kurata, Sercu, Audhkhasi, Thomas, Dimitriadis,
  Cui, Ramabhadran, Picheny, Lim, et~al.}{Saon et~al.}{2017}]{saon2017english}
Saon, G., Kurata, G., Sercu, T., Audhkhasi, K., Thomas, S., Dimitriadis, D.,
  Cui, X., Ramabhadran, B., Picheny, M., Lim, L.-L., et~al. \BBOP2017\BBCP.
\newblock \BBOQ {English conversational telephone speech recognition by humans
  and machines}\BBCQ\
\newblock In {\Bem Interspeech}.

\bibitem[\protect\BCAY{Schaul\ \BBA\ Ring}{Schaul\ \BBA\
  Ring}{2013}]{schaul2013better}
Schaul, T.\BBACOMMA\  \BBA\ Ring, M. \BBOP2013\BBCP.
\newblock \BBOQ {Better generalization with forecasts}\BBCQ\
\newblock In {\Bem International Joint Conference on Artificial Intelligence}.

\bibitem[\protect\BCAY{Sherstan}{Sherstan}{2020}]{sherstan2020representation}
Sherstan, C. \BBOP2020\BBCP.
\newblock {\Bem Representation and General Value Functions}.
\newblock Ph.D.\ thesis, University of Alberta.

\bibitem[\protect\BCAY{Sherstan, Bennett, Young, Ashley, White, White,\ \BBA\
  Sutton}{Sherstan et~al.}{2018}]{sherstan2018directly}
Sherstan, C., Bennett, B., Young, K., Ashley, D.~R., White, A., White, M.,
  \BBA\ Sutton, R.~S. \BBOP2018\BBCP.
\newblock \BBOQ {Directly estimating the variance of the
  $\{$$\backslash$lambda$\}$-return using temporal-difference methods}\BBCQ\
\newblock In {\Bem Conference on Uncertainty in Artificial Intelligence}.

\bibitem[\protect\BCAY{Silver}{Silver}{2012}]{silver2012gradient}
Silver, D. \BBOP2012\BBCP.
\newblock \BBOQ {Gradient Temporal Difference Networks}\BBCQ\
\newblock In {\Bem European Workshop on Reinforcement Learning}.

\bibitem[\protect\BCAY{Subramanian, Sinha, Seraj,\ \BBA\ Mahajan}{Subramanian
  et~al.}{2020}]{subramanian2020approximate}
Subramanian, J., Sinha, A., Seraj, R., \BBA\ Mahajan, A. \BBOP2020\BBCP.
\newblock \BBOQ {Approximate information state for approximate planning and
  reinforcement learning in partially observed systems}\BBCQ\
\newblock {\Bem arXiv preprint arXiv:2010.08843}.

\bibitem[\protect\BCAY{Sun, Venkatraman, Boots,\ \BBA\ Bagnell}{Sun
  et~al.}{2016}]{sun2016learning}
Sun, W., Venkatraman, A., Boots, B., \BBA\ Bagnell, J.~A. \BBOP2016\BBCP.
\newblock \BBOQ {Learning to filter with predictive state inference
  machines}\BBCQ\
\newblock In {\Bem International Conference on Machine Learning}.

\bibitem[\protect\BCAY{Sutskever}{Sutskever}{2013}]{sutskever2013training}
Sutskever, I. \BBOP2013\BBCP.
\newblock {\Bem {Training recurrent neural networks}}.
\newblock Ph.D.\ thesis, University of Toronto.

\bibitem[\protect\BCAY{Sutton, Modayil, Delp, Degris, Pilarski, White,\ \BBA\
  Precup}{Sutton et~al.}{2011}]{sutton2011horde}
Sutton, R.~S., Modayil, J., Delp, M., Degris, T., Pilarski, P., White, A.,
  \BBA\ Precup, D. \BBOP2011\BBCP.
\newblock \BBOQ {Horde: A scalable real-time architecture for learning
  knowledge from unsupervised sensorimotor interaction}\BBCQ\
\newblock In {\Bem International Conference on Autonomous Agents and Multiagent
  Systems}.

\bibitem[\protect\BCAY{Sutton, Rafols,\ \BBA\ Koop}{Sutton
  et~al.}{2005}]{sutton2005temporal}
Sutton, R.~S., Rafols, E.~J., \BBA\ Koop, A. \BBOP2005\BBCP.
\newblock \BBOQ {Temporal Abstraction in Temporal-difference Networks.}\BBCQ\
\newblock In {\Bem Advances in Neural Information Processing Systems}.

\bibitem[\protect\BCAY{Sutton\ \BBA\ Tanner}{Sutton\ \BBA\
  Tanner}{2004}]{sutton2004temporal}
Sutton, R.~S.\BBACOMMA\  \BBA\ Tanner, B. \BBOP2004\BBCP.
\newblock \BBOQ {Temporal-Difference Networks}\BBCQ\
\newblock In {\Bem Advances in Neural Information Processing Systems}.

\bibitem[\protect\BCAY{Tallec\ \BBA\ Ollivier}{Tallec\ \BBA\
  Ollivier}{2018}]{tallec2018unbiased}
Tallec, C.\BBACOMMA\  \BBA\ Ollivier, Y. \BBOP2018\BBCP.
\newblock \BBOQ {Unbiased Online Recurrent Optimization}\BBCQ\
\newblock In {\Bem International Conference on Learning Representations}.

\bibitem[\protect\BCAY{Tanner\ \BBA\ Sutton}{Tanner\ \BBA\
  Sutton}{2005}]{tanner2005temporal}
Tanner, B.\BBACOMMA\  \BBA\ Sutton, R.~S. \BBOP2005\BBCP.
\newblock \BBOQ {Temporal-Difference Networks with History.}\BBCQ\
\newblock In {\Bem International Joint Conference on Artificial Intelligence}.

\bibitem[\protect\BCAY{Trinh, Dai, Luong,\ \BBA\ Le}{Trinh
  et~al.}{2018}]{trinh2018learning}
Trinh, T.~H., Dai, A.~M., Luong, M.-T., \BBA\ Le, Q.~V. \BBOP2018\BBCP.
\newblock \BBOQ {Learning longer-term dependencies in rnns with auxiliary
  losses}\BBCQ\
\newblock In {\Bem International Conference on Machine Learning}.

\bibitem[\protect\BCAY{van Hasselt\ \BBA\ Sutton}{van Hasselt\ \BBA\
  Sutton}{2015}]{van2015learning}
van Hasselt, H.\BBACOMMA\  \BBA\ Sutton, R.~S. \BBOP2015\BBCP.
\newblock \BBOQ Learning to predict independent of span\BBCQ\
\newblock {\Bem arXiv:1508.04582}.

\bibitem[\protect\BCAY{Veeriah, Hessel, Xu, Rajendran, Lewis, Oh, van Hasselt,
  Silver,\ \BBA\ Singh}{Veeriah et~al.}{2019}]{veeriah2019discovery}
Veeriah, V., Hessel, M., Xu, Z., Rajendran, J., Lewis, R.~L., Oh, J., van
  Hasselt, H.~P., Silver, D., \BBA\ Singh, S. \BBOP2019\BBCP.
\newblock \BBOQ {Discovery of useful questions as auxiliary tasks}\BBCQ\
\newblock In {\Bem Advances in Neural Information Processing Systems}, \BPGS\
  9306--9317.

\bibitem[\protect\BCAY{Venkatraman, Rhinehart, Sun, Pinto, Hebert, Boots,
  Kitani,\ \BBA\ Bagnell}{Venkatraman et~al.}{2017}]{venkatraman2017predictive}
Venkatraman, A., Rhinehart, N., Sun, W., Pinto, L., Hebert, M., Boots, B.,
  Kitani, K., \BBA\ Bagnell, J. \BBOP2017\BBCP.
\newblock \BBOQ {Predictive-State Decoders: Encoding the Future into Recurrent
  Networks}\BBCQ\
\newblock {\Bem Proceedings of the 31st International Conference on Neural
  Information Processing Systems}.

\bibitem[\protect\BCAY{Vigorito}{Vigorito}{2009}]{vigorito2009temporal}
Vigorito, C.~M. \BBOP2009\BBCP.
\newblock \BBOQ {Temporal-Difference Networks for Dynamical Systems with
  Continuous Observations and Actions.}\BBCQ\
\newblock In {\Bem Conference on Uncertainty in Artificial Intelligence}.

\bibitem[\protect\BCAY{White}{White}{2015}]{white2015thesis}
White, A. \BBOP2015\BBCP.
\newblock {\Bem {Developing a predictive approach to knowledge}}.
\newblock Ph.D.\ thesis, University of Alberta.

\bibitem[\protect\BCAY{White}{White}{2017}]{white2017unifying}
White, M. \BBOP2017\BBCP.
\newblock \BBOQ {Unifying task specification in reinforcement learning}\BBCQ\
\newblock In {\Bem International Conference on Machine Learning}.

\bibitem[\protect\BCAY{Williams\ \BBA\ Zipser}{Williams\ \BBA\
  Zipser}{1989}]{williams1989alearning}
Williams, R.~J.\BBACOMMA\  \BBA\ Zipser, D. \BBOP1989\BBCP.
\newblock \BBOQ {A Learning Algorithm for Continually Running Fully Recurrent
  Neural Networks.}\BBCQ\
\newblock {\Bem Neural Computation}.

\bibitem[\protect\BCAY{Wingate\ \BBA\ Singh}{Wingate\ \BBA\
  Singh}{2006}]{wingate2006mixtures}
Wingate, D.\BBACOMMA\  \BBA\ Singh, S. \BBOP2006\BBCP.
\newblock \BBOQ {Mixtures of predictive linear gaussian models for nonlinear,
  stochastic dynamical systems}\BBCQ\
\newblock In {\Bem AAAI Conference on Artificial Intelligence}.

\bibitem[\protect\BCAY{Wolfe\ \BBA\ Singh}{Wolfe\ \BBA\
  Singh}{2006}]{wolfe2006predictive}
Wolfe, B.\BBACOMMA\  \BBA\ Singh, S.~P. \BBOP2006\BBCP.
\newblock \BBOQ {Predictive state representations with options.}\BBCQ\
\newblock In {\Bem International Conference on Machine Learning}.

\end{thebibliography}
  \bibliographystyle{theapa}
}

\end{document}